\definecolor{codegreen}{rgb}{0,0.6,0}
\definecolor{codegray}{rgb}{0.5,0.5,0.5}
\definecolor{codepurple}{rgb}{0.58,0,0.82}
\definecolor{codeblue}{rgb}{0,0,1}
\definecolor{backcolour}{rgb}{0.95,0.95,0.92}
\definecolor{key-color}{rgb}{0.8, 0.47, 0.196}
\lstdefinestyle{mystyle}{
    backgroundcolor=\color{backcolour},   
    commentstyle=\color{codegreen},
    numberstyle=\tiny\color{codegray},
    stringstyle=\color{codepurple},
    basicstyle=\ttfamily\footnotesize,
    breakatwhitespace=false,         
    breaklines=true,                 
    captionpos=b,                    
    keepspaces=true,                 
    numbers=left,                    
    numbersep=5pt,                  
    showspaces=false,                
    showstringspaces=false,
    showtabs=false,                  
    tabsize=2,
    language=Python,
    emph={lm},
    emphstyle={\color{blue}},
    classoffset=1, %
    otherkeywords={sum},
    morekeywords={rm, mean},
    keywordstyle=\color{codegreen},
    classoffset=0,
}
\newcommand{\diag}{\operatorname{diag}}
\newcommand{\tX}{\tilde{X}}
\newcommand{\tx}{\tilde{x}}
\newcommand{\tL}{\tilde{\Lambda}}
\newcommand{\R}{\mathbb{R}}  
\newcommand{\N}{\mathbb{N}}
\newcommand{\E}{\mathbf{E}}
\newcommand{\mb}{\mathbb}
\newcommand{\ga}{\gamma}
\newcommand\mf{\mathbf}
\newcommand\ml{\mathcal}
\newcommand\tp{\top}
\newcommand\pl{\partial}
\newcommand{\wt}{\widetilde}
\newcommand{\rw}{\rightarrow}
\renewcommand{\l}{\left} 
\renewcommand{\r}{\right} 
\newtheorem{thm}{Theorem}  
\newtheorem{lem}{Lemma} 
\newtheorem{prop}{Proposition}
\newtheorem{assum}{Assumption}
\newcommand{\be}{\begin{equation}}
\newcommand{\ee}{\end{equation}}
\newcommand{\bea}{\begin{equation}\begin{aligned}}
\newcommand{\eea}{\end{aligned}\end{equation}}
\colorlet{shadecolor}{orange!15}
\definecolor{red}{rgb}{1, 0, 0}
\definecolor{green}{rgb}{0, 1, 0}
\definecolor{blue}{rgb}{0, 0, 1}
\newcommand{\BLUE}[1]{{\color{blue} #1}}
\definecolor{orange}{rgb}{1, 0.4, 0.0}
\newcommand{\yushunrevise}[1]{\textcolor{black}{ #1}}
\newcommand{\zhaoruirevise}[1]{\textcolor{black}{ #1}}
\newcommand\nnfootnote[1]{%
  \begin{NoHyper}
  \renewcommand\thefootnote{}\footnote{#1}%
  \addtocounter{footnote}{-1}%
  \end{NoHyper}
}
\definecolor{ccr}{RGB}{10,110,150}  
\definecolor{codegreen}{rgb}{0,0.6,0}
\definecolor{codegray}{rgb}{0.5,0.5,0.5}
\definecolor{codepurple}{rgb}{0.58,0,0.82}
\definecolor{backcolour}{rgb}{0.95,0.95,0.92}
\lstdefinestyle{mystyle}{
    backgroundcolor=\color{backcolour},   
    commentstyle=\color{codegreen},
    keywordstyle=\color{magenta},
    numberstyle=\tiny\color{codegray},
    stringstyle=\color{codepurple},
    basicstyle=\ttfamily\footnotesize,
    breakatwhitespace=false,         
    breaklines=true,                 
    captionpos=b,                    
    keepspaces=true,                 
    numbers=left,                    
    numbersep=5pt,                  
    showspaces=false,                
    showstringspaces=false,
    showtabs=false,                  
    tabsize=2,
}
\definecolor{bgcolor}{rgb}{0.1, 0.1, 0.1}
\definecolor{commentcolor}{rgb}{0.5, 0.5, 0.5}
\definecolor{keywordcolor}{rgb}{0.5, 0.0, 0.5}
\definecolor{stringcolor}{rgb}{0.0, 0.5, 0.0}
\definecolor{numbercolor}{rgb}{0.0, 0.0, 0.7}
\definecolor{functioncolor}{rgb}{0.0, 0.0, 0.5}
\lstdefinestyle{github}{
    backgroundcolor=\color{bgcolor},
    basicstyle=\ttfamily\small\color{white},
    commentstyle=\color{commentcolor},
    keywordstyle=\color{keywordcolor}\bfseries,
    stringstyle=\color{stringcolor},
    numberstyle=\color{numbercolor},
    identifierstyle=\color{white},
    showstringspaces=false,
    numbers=left,
    numbersep=5pt,
    tabsize=4,
    breaklines=true,
}
\begin{document}

\title{\fontsize{16.95pt}{20pt}\selectfont Towards Quantifying the Hessian Structure of Neural Networks}

\author{ Zhaorui Dong$^{*1}$, Yushun Zhang$^{* 12}$, Jianfeng Yao$^{\dagger 1}$, Ruoyu Sun$^{\dagger 12}$ \\
 \text{ }\\
   $^1$ The Chinese University of Hong Kong, Shenzhen, China \\
  $^2$ Shenzhen Research Institute of Big Data 
}

\maketitle
\nnfootnote{$*$: Equal contribution. These authors are listed in alphabetical order. }
\nnfootnote{$\dagger$:Correspondence authors: Ruoyu Sun, Email:  \texttt{sunruoyu@cuhk.edu.cn}; Jianfeng Yao, Email: \texttt{jeffyao@cuhk.edu.cn}.}
\nnfootnote{This work has been submitted to the IEEE for possible publication. Copyright may be transferred without notice, after which this version may no longer be accessible.}

\begin{abstract}

 Empirical studies reported that the Hessian matrix of neural networks (NNs) exhibits a near-block-diagonal structure, yet its theoretical foundation remains unclear. In this work, we reveal that the reported Hessian structure comes from a mixture of two forces: a ``static force'' rooted in the architecture design, and a ``dynamic force'' arisen from training. We then provide a rigorous theoretical analysis of ``static force'' at random initialization. We study linear models and 1-hidden-layer networks for classification tasks with $C$ classes.  By leveraging random matrix theory, we compare the limit distributions of the diagonal and off-diagonal Hessian blocks and find that the block-diagonal structure arises as $C$ becomes large. Our findings reveal that $C$ is one primary driver of the near-block-diagonal structure. These results may shed new light on the Hessian structure of large language models (LLMs), which  typically operate with a large $C$ exceeding $10^4$. 
 \footnote{Our code is available at \url{https://github.com/zyushun/Hessian-structure}.} 
\\
{\bf Keywords:} Neural Networks, Hessian Matrix, Random Matrix Theory.%
\end{abstract}

\section{Introduction}

The Hessian matrix of neural networks (NNs) is crucial for understanding training dynamics, as well as motivating better algorithm designs.
A classical work \citep{collobert2004large} empirically reported that the Hessian of NNs is highly structured: the Hessian is observed to be {\it near-block-diagonal}.
We reproduce this result in Figure \ref{fig:hessian}. 
Unfortunately, no rigorous theory has been established in the past two decades to explain this phenomenon.  

Very recently, the near-block diagonal structure of Hessian has drawn renewed attention in the machine learning community as it helps understanding the training of large language models (LLMs)
\citep{zhang2024transformers,zhang2024adam,kunstner2024heavy}.
Again, these works primarily focus on empirical observations, 
and there is no rigorous theoretical results on the underlying source of the special structure. 
The following fundamental question remains largely open:

\begin{center}
{\it \yushunrevise{When and} why does the Hessian of NNs exhibit near-block-diagonal structure? }
\end{center}

Before delving into this question, we first list some of its important implications.

\begin{itemize}
[topsep=1pt,parsep=1pt,partopsep=1pt, leftmargin=*]
    \item {\bf I.} Understanding Hessian structure can help understand NN training. For instance, the effectiveness of diagonal preconditioned methods such as Adam \citep{kingma2014adam} is usually strongly related to the Hessian structure; see, e.g.,  \citet{sun2021worst,qu2022optimal,das2024towards}. Recently, the near-block-diagonal Hessian is observed along the training process and such structure is shown to 
 be related to the effectiveness of Adam on LLMs \citep{zhang2024transformers,kunstner2024heavy}. 
 
 Besides Adam, the near-block-diagonal Hessian structure may play a crucial role in the effectiveness of block-diagonal preconditioned methods (e.g.,  \citep{martens2015optimizing,gupta2018shampoo,goldfarb2020practical,vyas2024soap,jordan2024muon}). Among these methods,   Muon optimizer \citep{jordan2024muon} is used for training Moonlight \citep{liu2025muon}, Kimi-K2 \citep{moonshot2025k2}, and  GLM-4.5 \citep{zeng2025glm}  very recently. 
    
    \item {\bf II.} Understanding Hessian structure can help design new training methods for NNs. For instance, Adam-mini \citep{zhang2024adam}, a recently proposed optimizer,  utilizes the near-block-diagonal Hessian structure to cut down 50\% memory consumption in Adam. We believe the special Hessian structure can inspire more new optimizers.

\item {\bf III. } 
The near-block-diagonal Hessian structure can offer a new class of problems for the optimization community
to study. 
For the optimization community, it is rare to analyze (near-) block-diagonal Hessian structure since typical problems do {\it not} have such structure. For instance, in the classical non-linear programming dataset \citep{lavezzi2022nonlinear}, all problems have non-block-diagonal Hessians. 
Understanding the special Hessian structure of NN can draw attention from the optimization community, motivating further study into this specialized class of problems. 
\end{itemize}

\begin{figure}[t]
 \vspace{-0.3cm} 
    \centering
    \subfigure[\small 1-hidden-layer network with MSE loss]{\includegraphics[width=0.49\textwidth]{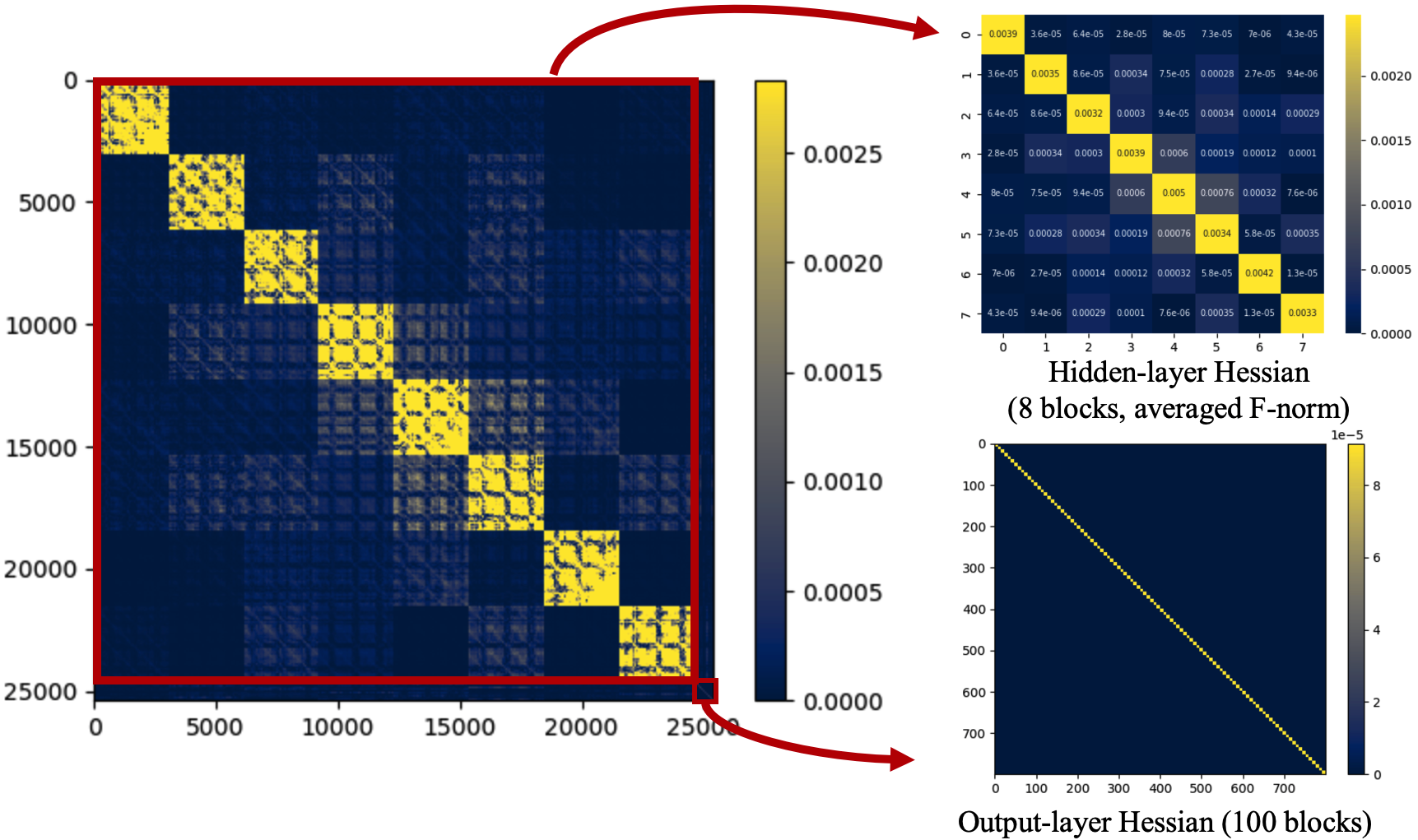}}
    \subfigure[\small 1-hidden-layer network with CE loss]{\includegraphics[width=0.49\textwidth]{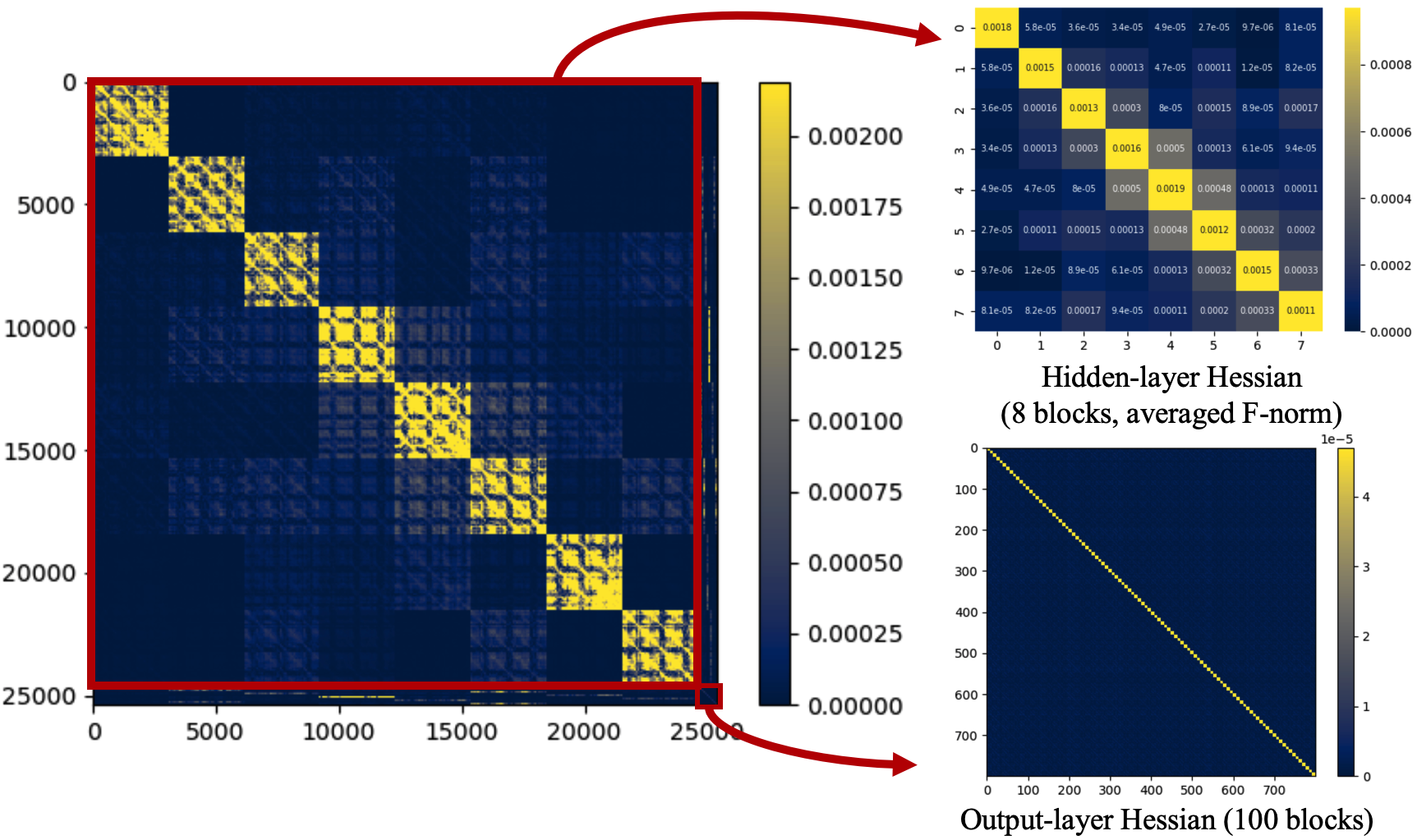}}
    \caption{\small {\bf(a, b):}  The Hessian matrix of a  1-hidden-layer network with $8$ hidden neurons at random initialization on CIFAR-100 dataset (\# hidden neuron $m = 8$ and \# classes or output neuron $C = 100$). For clearer visualization, we report the absolute value of each Hessian entry, and this applies to all Hessian matrices reported in this work. We observe near-block-diagonal structures under both MSE and CE loss with $m + C = 108$ blocks in total.}
  \label{fig:hessian}
\vspace{-0.3cm}
\end{figure}

In this work, we explore the Hessian structure of NNs both numerically and theoretically. First, we report more fine-grained numerical findings on Hessian: we observe {\it ``block-circulant-block-diagonal''} structure at the random initialization and {\it ``block-diagonal''} structure after training starts (presented later in Section \ref{sec_closer_look}).  In particular, the ``dynamic force'' compresses cross-layer Hessian components  during training; and the ``static force'' compresses the cross-neuron component in each layer for both initialization stage and training stage. Our findings suggest that the previously reported block-diagonal structure actually comes from a mixture of two  forces: a ``static force'' rooted in the architecture design, and a ``dynamic force'' arisen from training.

Then, we provide a rigorous theoretical analysis of ``static force'' at random initialization. We focus on linear models and 1-hidden-layer networks for standard classification tasks with $C$ classes.
Leveraging tools from random matrix theory, we characterize the limiting behavior of diagonal and off-diagonal Hessian blocks as the sample size $N$ and input dimension $d$ grow proportionally to infinity.  Our theory shows that the off-diagonal blocks will be pushed to 0 as the number of classes $C$ increases, suggesting that 
$C$  is a  primary driver of the near-block-diagonal Hessian structure. 
Our theory may shed new light on the Hessian structures of  LLMs since they usually have large $C$ (more than $10^4$ or $10^5$) \footnote{$C =32k$  in Llama 2 \citep{touvron2023llama} and $C =128k$ in many recent models such as DeepSeek-V3 \citep{liu2024deepseek}.}.

Our findings challenge the conventional wisdom of
\citep{collobert2004large}, which argues that Cross-Entropy (CE) loss is the primary determining factor. Our findings suggest that CE loss is not crucial.
Later, we will analyze why \citep{collobert2004large} misattributed the emergence of the near-block diagonal structure to CE loss.

Our main contributions are summarized as follows.

\begin{itemize}
[topsep=1pt,parsep=1pt,partopsep=1pt, leftmargin=*]
    \item \yushunrevise{We numerically investigate the source of the near-block-diagonal Hessian structure. We reveal two forces that shape such structure:  a ``static force'' rooted in the architecture, and a ``dynamic force'' arisen from training. In particular, the ``dynamic force'' compresses cross-layer Hessian components  during training; and the ``static force'' compresses the cross-neuron component in each layer for both initialization stage and training stage.}
    \item We provide rigorous theory on the Hessian of linear models at random initialization. 
 As the sample size $N$ and input dimension $d$ grow proportionally to infinity, we calculate the limit of Frobenius norm of the diagonal and off-diagonal blocks of the Hessian. Specifically, the diagonal blocks correspond to the Hessian of 
 weights associated with the same class, 
 while the off-diagonal blocks represent the Hessian of weights from different classes.   
 We find that: the ratio between the off-diagonal and diagonal blocks decays to zero at the rate of $O(1/C)$, where $C$ is the number of classes. This demonstrates that the Hessian becomes block-diagonal as $C \rightarrow \infty$.
    \item We extend the above analysis to 1-hidden-layer networks. We primarily focus on two sub-matrices in Hessian:  the hidden-layer Hessian and the output-layer Hessian, which are highlighted with red boxes in Figure \ref{fig:hessian}. For the hidden-layer Hessian, the ratios between their off-diagonal and diagonal blocks decay to zero at the rate of $O(1/\sqrt{C})$. For the output-layer Hessian, the decay rate is $O(1/C)$. This demonstrates that these sub-matrices will become block-diagonal as $C \rightarrow \infty$. In this case, the total number of blocks in these sub-matrices equals  $(m + C)$, where $m$ denotes the number of hidden neurons.

   \item We highlight some key technical contributions in our proof. The major challenge lies in characterizing limiting eigenvalue distribution of product of {\it dependent} random matrices, which is a non-standard problem in classical random matrix theory. 
   For the Hessian of NNs, we find that such dependency arises from ReLU activation and CE loss, and diminishes as $d \rightarrow \infty$. Subsequently, inspired by \citet{pastur2020random}, we use the {\it Lindeberg interpolation principle} to address such dependency.

\end{itemize}

\paragraph{Notations.}  For a  matrix $X \in \mathbb{R}^{m\times n}$, $X^\top$ denotes the transpose of $X$, $\|X\|_{\operatorname{F}}$ the Frobenius norm of $X$, $I_{n\times n}$ and $0_{n \times n}$ the identity matrix and the zero matrix of size $n \times n$, respectively. We denote $[n]$ as the index set $\{1,\cdots, n\}$. We say $ x\overset{d}{=} y$ if the random variables (r.v.) $x$ and $y$ share a same distribution. We denote the Dirac measure at $x$ by $\delta_x$, the support of measure $\mu$ by $\operatorname{supp}(\mu)$  and the expectation of $x$  by $\E[x]$ .
 We use $\mathcal{N}(\mu, \sigma^2)$ to denote Gaussian distribution with mean $\mu$ and variance $\sigma^2$. 
 We use $\Im(z)$ to denote the image part of a complex number $z \in \mb{C}$, and that $\mb{C}^+ = \{z \in \mb{C}| \Im(z) >0\}$.  In this paper, we will intermittently employ the notations $H_{ww}$, $H_{vv}$, and $H_{wv}$ to denote the hidden-layer, output-layer, and cross-layer Hessian, respectively, of a 1-hidden-layer network.

\section{Related works}
\label{appendix_related_works}
\paragraph{Hessian spectrum analysis} Most studies on Hessian of NNs focus on Hessian eigenvalue distribution, a.k.a., the spectrum. \citet{lecun2002efficient,dauphin2014identifying,sagun2016eigenvalues,sagun2017empirical, chaudhari2019entropy,ghorbani2019investigation, granziol2019towards,yao2020pyhessian} reported that the Hessian spectra of NNs consist of a ``bulk" together with a few ``outliers". \citet{pennington2017geometry,fort2019emergent,papyan2020traces,wu2020dissecting,liao2021hessian,singh2021analytic}  studied the shape of the Hessian  spectrum and Hessian rank in theory.  \citet{papyan2018full, papyan2019measurements,sankar2021deeper} numerically studied the relation between the spectrum of Hessian and that of Gauss-Newton matrix.   \citet{keskar2016large,yao2018hessian,zhang2019algorithmic,granziol2022learning} studied the connection between the Hessian spectrum of NNs and some training phenomena such as the effect of batch sizes.   \citet{ghorbani2019investigation, yao2020pyhessian} explained the effectiveness of training techniques such as BatchNorm via the shape of Hessian spectrum. 
\citet{zhang2024transformers} numerically studied the blockwise Hessian spectrum of CNNs and Transformers. They further connect the blockwise spectra to the effectiveness of Adam.  
 Another line of works studied the interplay between  Hessian extreme eigenvalues and  the trajectories of gradient methods (e.g., \citep{wu2017towards,draxler2018essentially,gur2018gradient,jastrzkebski2018relation,chaudhari2019entropy,jiang2019fantastic,he2019asymmetric,alain2019negative,wei2019noise,li2020hessian,jastrzebski2020break,cohen2021gradient,cohen2022adaptive,arora2022understanding,wang2022analyzing,lyu2022understanding,park2022vision}).

 Different from all these works, we study the macroscopic structure of the Hessian rather than its spectrum. Note that these two topics are rather orthogonal: it is possible to change the matrix structure without changing its eigenvalues, and vice versa.
 Specifically, we focus on the ratio between diagonal Hessian blocks and off-diagonal ones, 
 which is not covered in the spectrum analysis. 

\paragraph{Hessian structure analysis} 
 \citet{collobert2004large} empirically observed the following phenomenon: 
when using a neural network to solve a binary classification problem under CE loss, the Hessian is near-block-diagonal. 
They also reported that the near-block-diagonal structure disappears when changing to Mean-Square (MSE) loss. 
\citet{collobert2004large} thereby conjectured that the near-block-diagonal Hessian stems from CE loss, and they provided an one-line informal explanation (re-stated later in Section~\ref{sec_collobert}). The near-block-diagonal structure was also reported recently under CE loss for various models including linear models  \citep{kunstner2024heavy}, 1-hidden-layer network \citep{zhang2024transformers}, and 1-hidden-layer Transformers \citep{zhang2024adam}. Similar Hessian structure is later numerically reported on more practical models including  GPT-2 \citep{maes2024understanding}, and OPT-125M \citep{malinovskii2024pushing}. 
These results show that the near-block-diagonal structure
appeared in a wide range of architectures. 
We point out that these works primarily focus on empirical observations, and the rigorous theoretical analysis is still missing.

Very recently, \citet{ormaniec2024does} employed matrix calculus to derive the  Hessian expression of a 1-hidden-layer Transformer to understand the difficulties in training Transformers. It is valuable and non-trivial to derive the Hessian expression of Transformers due to their complicated design. However, the subsequent analysis of Hessian structure is relatively simplified: e.g.,  they view the weights and data as constant matrices and did not incorporate their random distributions.  Consequently, the exact behavior of each Hessian block has not been characterized yet, and the origin of the near-block-diagonal structure remains unexplored.

Different from the aforementioned work, we establish the first rigorous theory on the Hessian structure of linear and 1-hidden-layer network via random matrix theory.  Our theory reveals that the number of classes $C$ is one major cause of the near-block-diagonal or block-circulant-block-diagonal structure.

\paragraph{Algorithm design} Multiple algorithm designs are proposed by approximating Hessian (or other curvature matrices) by block-diagonal matrices (e.g., \citep{roux2007topmoumoute,martens2015optimizing,desjardins2015natural,zhang2017block,gupta2018shampoo,george2018fast,goldfarb2020practical,dangel2020modular,vyas2024soap,jordan2024muon,an2025asgo}). Our theory can explain why these methods work. The special Hessian structure also has strong connections to diagonal preconditioned methods (e.g., \citep{kingma2014adam,liu2023sophia}).

\section{Empirical Observations and Existing Wisdom}
\subsection{\yushunrevise{Two Forces Shaping the Hessian Structure}}
\label{sec_closer_look}
Now we conduct more fine-grained experiments on Hessian structures of NNs. In particular, we explore the 1-hidden-layer network on a Gaussian synthetic dataset. We consider both Mean-Square (MSE) and Cross-Entropy (CE) loss. The detailed experimental setups are presented in Appendix \ref{appendix_experimental_details}. 

We emphasize that these experiments can reveal more Hessian properties not shown in the CIFAR-100 experiments in Figure \ref{fig:hessian}. We highlight the new changes as follows.

\begin{itemize}[topsep=1pt,parsep=1pt,partopsep=1pt, leftmargin=*]
    \item  We change the input dimension $d$ and \# classes $C$ to amplify the effect of cross-layer Hessian components $H_{wv}$. In the CIFAR-100 example in Figure \ref{fig:hessian}, the proportions of the Hessian of hidden layer and output layer, which we abbreviate as $H_{ww}$ and $H_{vv}$, are largely imbalanced. In particular,  $H_{ww}$ occupied the vast majority of the entries, while $H_{vv}$ only accounted for a small portion. This imbalanced distribution would cause the cross-layer components, which we abbreviate as $H_{wv}$, to occupy only a minor fraction of the whole Hessian matrix, making them easily overlooked. To better illustrate the pattern of the entire Hessian, we change $(d, C) = (3072, 100)$ to $(d, C) = (500, 500)$   so that $H_{ww}$, $H_{vv}$, and $H_{wv}$ are proportionally balanced within the Hessian.
    \item We change the dataset from CIFAR-100 to Gaussian synthetic dataset. Such change suggests that the Hessian structure might be inherently general and is not  overfitted to one specific dataset like CIFAR-100. 
 
    \item In addition to the Hessian at random initialization, we present the Hessian along training until convergence. We present the loss curves in Appendix \ref{appendix_experimental_details}.
\end{itemize}

The results are shown in Figure  \ref{fig:closer_look_mse} and \ref{fig:closer_look_ce}. We summarize two findings.

\begin{itemize}[topsep=1pt,parsep=1pt,partopsep=1pt, leftmargin=*]
    \item {\bf Finding 1:} For both MSE loss and CE loss, we observe near-block-diagonal structures in $H_{ww}$ and $H_{vv}$ and such structures maintains along training.
    \item {\bf Finding 2:} For CE loss, we observe new special structures in $H_{wv}$ at random initialization: $H_{wv}$ exhibits a ``block-circulant'' pattern with periodic stripes. 
When using CE loss, the full Hessian matrix appears to be a combination of ``block-circulant'' matrix (for $H_{wv}$) and block-diagonal matrix (for $H_{ww}$ and $H_{vv}$). We refer to it as ``{\it block-circulant-block-diagonal} matrix''.  We observe that the ``block-circulant'' pattern in $H_{wv}$ vanishes as training goes on, while the near-block-diagonal patterns in  $H_{ww}$ and  $H_{vv}$ remain obvious.
\end{itemize}

\begin{figure}[t]
 \vspace{-0.3cm} 
    \centering
    \subfigure[Hessian at initialization]{\includegraphics[width=0.32\textwidth]{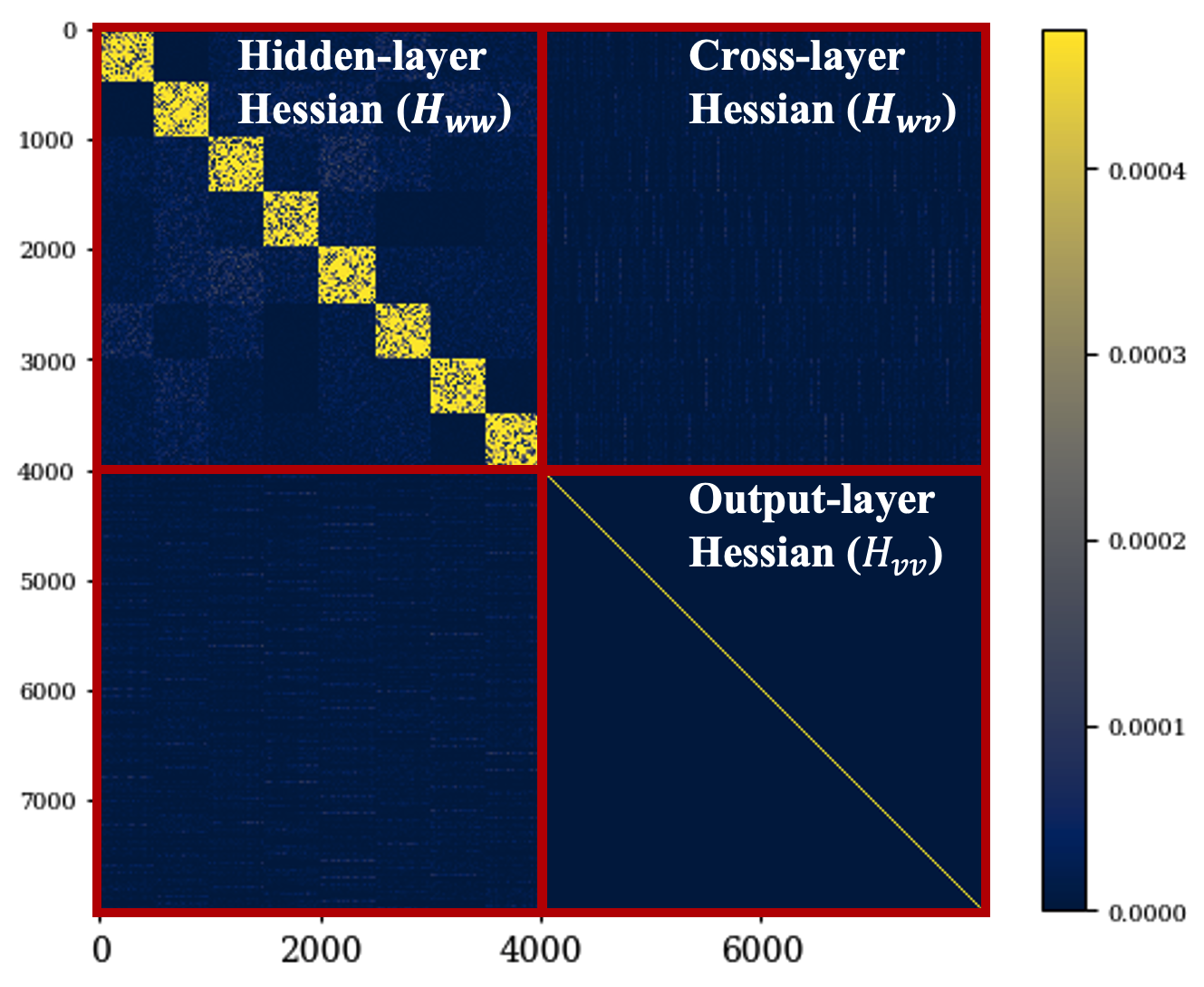}}
    \subfigure[Hessian at 10\% steps]{\includegraphics[width=0.32\textwidth]{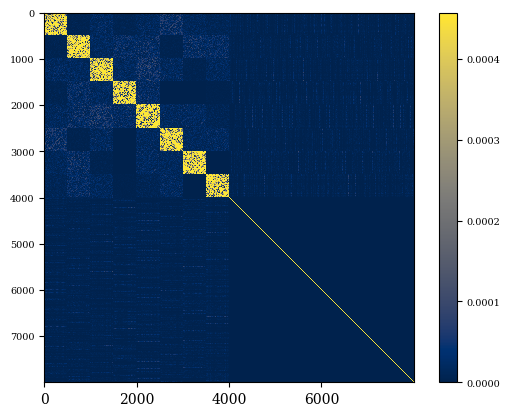}}
    \subfigure[Hessian at 25\% steps]{\includegraphics[width=0.32\textwidth]{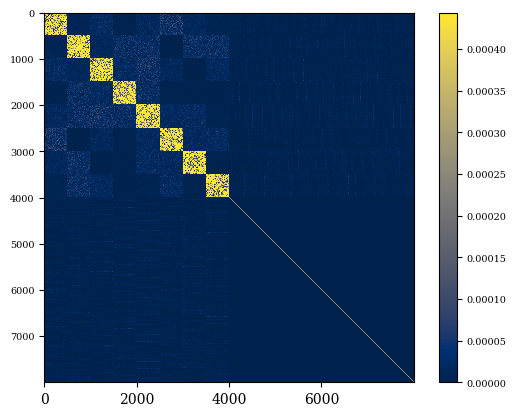}}
    \subfigure[Hessian at 50\% steps]{\includegraphics[width=0.32\textwidth]{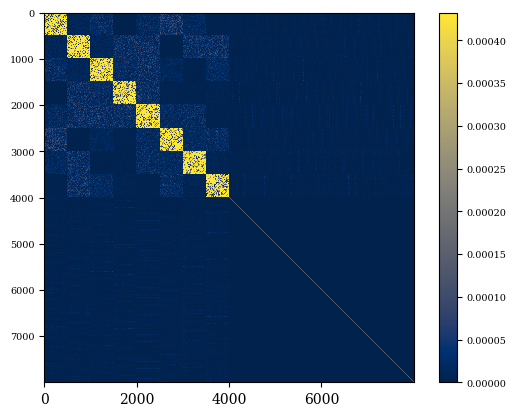}}
    \subfigure[Hessian at 75\% steps]{\includegraphics[width=0.32\textwidth]{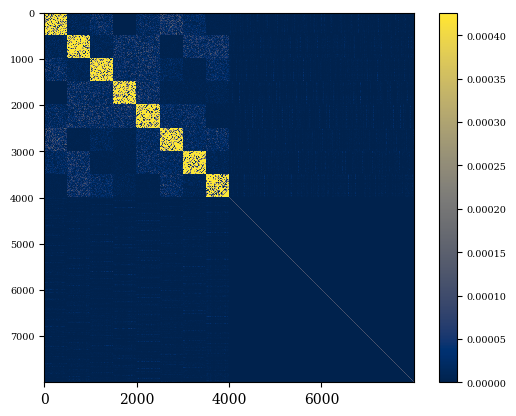}}
    \subfigure[Hessian at 100\% steps]{\includegraphics[width=0.32\textwidth]{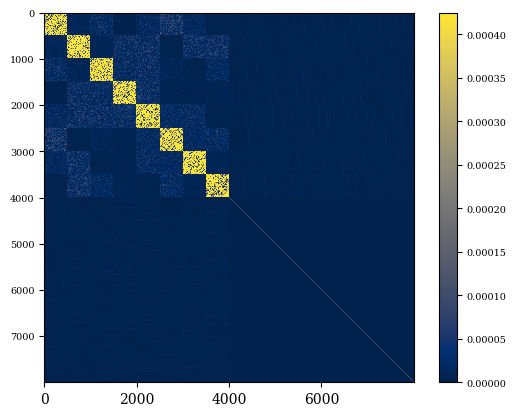}}
    \caption{\small  {\bf(a-f):}  The Hessian of a 1-hidden-layer network on Gaussian synthetic data under MSE loss. 
    We notice the near-block-diagonal patterns in  $H_{ww}$ and $H_{vv}$ with $m + C = 508$ blocks in total, and they maintain along training.}
  \label{fig:closer_look_mse}
\vspace{-0.3cm}
\end{figure}

\begin{figure}[t]
 \vspace{-0.3cm} 
    \centering
    \subfigure[Hessian at initialization]{\includegraphics[width=0.32\textwidth]{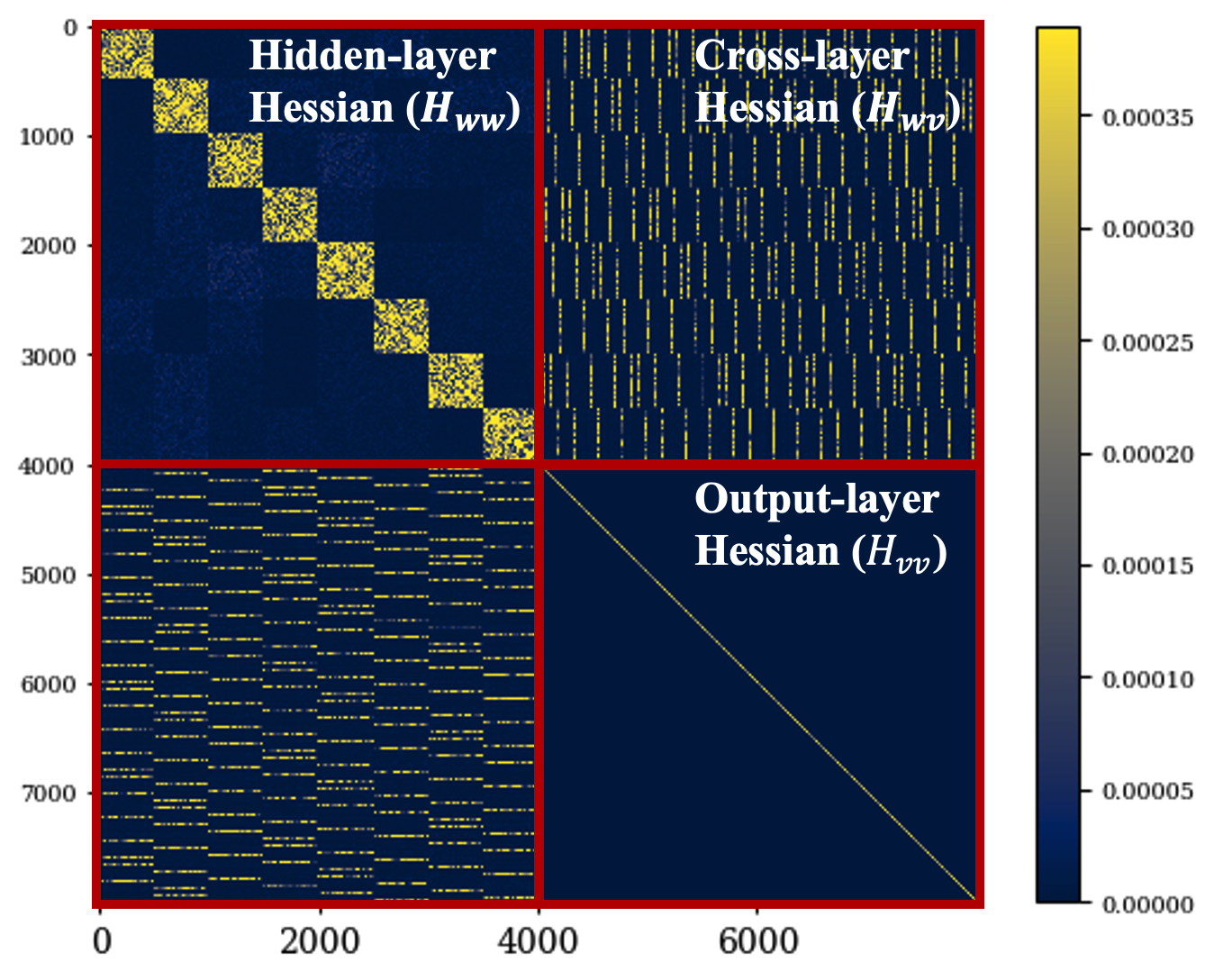}}
    \subfigure[Hessian at 10\% steps]{\includegraphics[width=0.32\textwidth]{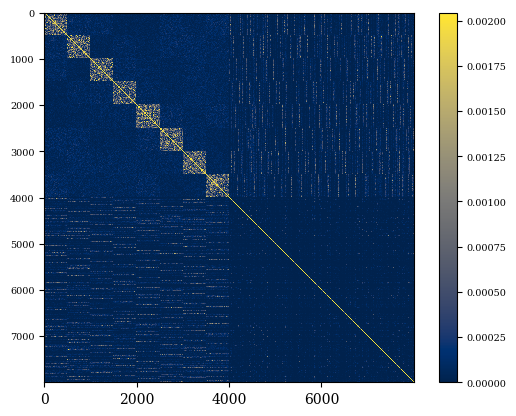}}
    \subfigure[Hessian at 25\% steps]{\includegraphics[width=0.32\textwidth]{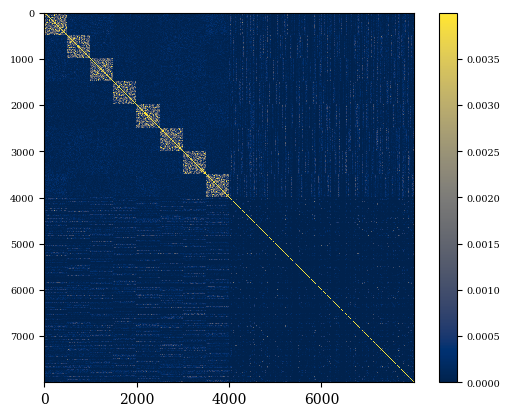}}
    \subfigure[Hessian at 50\% steps]{\includegraphics[width=0.32\textwidth]{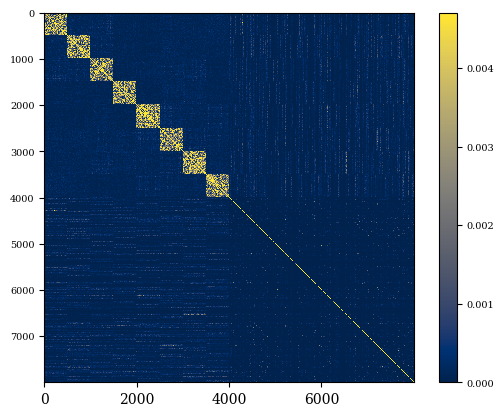}}
    \subfigure[Hessian at 75\% steps]{\includegraphics[width=0.32\textwidth]{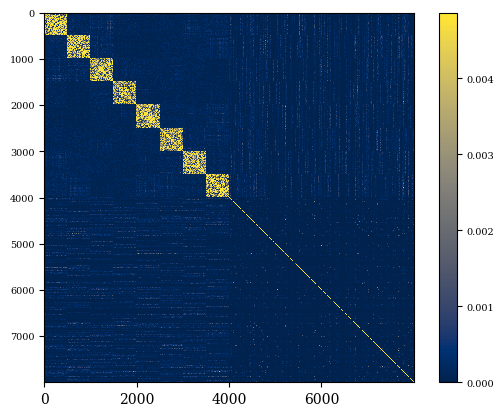}}
    \subfigure[Hessian at 100\% steps]{\includegraphics[width=0.32\textwidth]{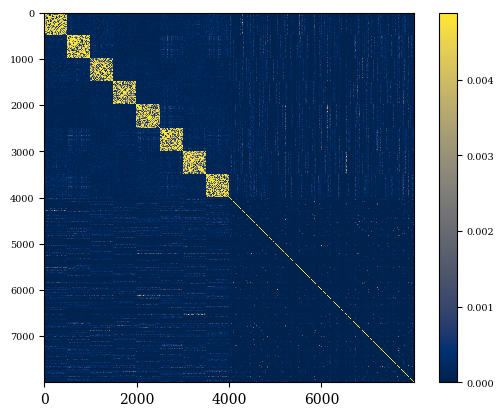}}
    \caption{\small  {\bf(a-f):}  The Hessian of a 1-hidden-layer network on Gaussian synthetic data under CE loss. At initialization, we observe the “block-circulant” pattern in $H_{wv}$, and the near-block-diagonal structure in $H_{ww}$ and $H_{vv}$ (with $m + C = 508$ blocks in total). We refer to it as {\it “block-circulant-block-diagonal matrix”}.  We notice that the “block-circulant” pattern in $H_{wv}$ vanishes along training, while the near-block-diagonal patterns in  $H_{ww}$ and $H_{vv}$ are preserved. }
  \label{fig:closer_look_ce}
\vspace{-0.3cm}
\end{figure}

\begin{snugshade}
\paragraph{Main Takeaways and insights from the experiments.} \yushunrevise{Based on the findings from Figures  \ref{fig:closer_look_mse} and \ref{fig:closer_look_ce}, we find that there are at least two forces shaping the Hessian structure. }

\begin{itemize}[topsep=1pt,parsep=1pt,partopsep=1pt, leftmargin=*]
      \item {\bf (1)  A ``static force'' rooted in the architecture design.}  For both MSE and CE loss, this force compresses the cross-neuron components in $H_{ww}$ and $H_{vv}$. This force is effective in both the initialization and the training stages. 
    \item {\bf (2)  A ``dynamic force'' arisen from training.} When using CE loss, this force gradually erases the initial ``block-circulant'' pattern in the cross-layer component $H_{wv}$ along training.
\end{itemize}
 \end{snugshade}

\yushunrevise{In the sequel, we study how both forces shape the Hessian structure. We will primarily focus on the effect of ``static force'' at random initialization, particularly, how the architecture shapes the Hessian structure for $H_{ww}$ and $H_{vv}$ for both CE loss and MSE loss. As for ``how the `dynamic force' eliminates the block-circulant pattern in $H_{wv}$ along training'', we find that it can be explained directly from Hessian expressions. We provide an initial analysis in Section \ref{sec_preliminaries} and leave more fine-grained analysis as future direction.}

\vspace{-0.2cm}
\subsection{Existing Wisdom} 
\label{sec_collobert}

Here, we revisit the results in \citep{collobert2004large}, which has remained for two-decades the dominating
understanding of the near-block-diagonal Hessian structure. 
The author attributes the near-block-diagonal structure to CE loss.  We 
will point out that this perspective might not be accurate.

\citet{collobert2004large} considered  the binary classification problem: minimizing $\ell_{\operatorname{CE}}  (f(\theta; x), y)$ where $\ell_{\operatorname{CE}} (\cdot,\cdot)$ is  CE  loss, $f(\theta; x) = \sum_{i = 1}^{n} v_i \sigma(w_i^{\top}x) \in \mathbb{R}$ is  an single-output-1-hidden-layer neural network with input $x \in \mathbb{R}^{d}$, weight $w_i \in \mathbb{R}^{d}$,  $v_i \in \mathbb{R}$, and label $y  \in \{0,1\}$. The author focused on the hidden-layer Hessian $H_{ww}$, and they point out that  off-diagonal-blocks in $H_{ww}$ would contain
\begin{small}\begin{equation}
\label{eq_hessian_calculation_collobert}
    \frac{\partial^2 \ell_{\operatorname{CE}} (f(\theta; x), y) }{\partial w_i \partial w_j^\top}=\BLUE{p \left(1 - p\right)}v_i v_j \sigma^{\prime}\left(w_i^\top x \right) \sigma^{\prime}\left(w_j^\top x \right) x x^\top \quad \text{for } i \neq j, 
\end{equation}
\end{small}
where $p = 1 /(1 + \exp (-y f(\theta,x)))$ denotes the probability of correct prediction, and $\sigma^{\prime}(\cdot)$ is the derivative of $\sigma(\cdot)$.
\citet{collobert2004large} argued that CE loss is the key factor for the near-block-diagonal structure.  The author provided a one-line intuitive explanation:   since the training objective is to maximize $p$,  the term \BLUE{$p \left(1 - p\right)$}
will decay to zero, which pushes the off-diagonal blocks to zero. Numerically, \citet{collobert2004large} reported that CE loss brings the near-block-diagonal structure in $H_{ww}$, while  MSE loss does not (their Figure 7.3 \& 7.5, also restated in Figure \ref{fig:collobert} in Appendix \ref{appendix_collobert}).
The author argued that this is because MSE loss does not produce the term \BLUE{$p \left(1 - p\right)$}. 

We find that the arguments in \citep{collobert2004large} might not be accurate. In particular:

\begin{itemize}
[topsep=1pt,parsep=1pt,partopsep=1pt, leftmargin=*]
    \item[1.] For binary classification, the term \BLUE{$p(1-p)$} occurs for both diagonal and off-diagonal blocks. This can be easily inferred in our latter analysis in {\bf Case 1} with $C =2$. Therefore, it cannot serve as a distinguishing factor between the diagonal and off-diagonal blocks. For the CE loss, then the observed block-diagonal structure in $H_{ww}$ might be due to other properties.
    \item[2.] Our numerical results in Figure \ref{fig:hessian} show that the near-block-diagonal structure in $H_{ww}$ occurs not only during the training, but also at initialization. As such, the special structure does not result from  ``maximizing $p$'' or ``minimizing $(1-p)$''.
\end{itemize}

In the subsequent analysis, we will show that the number of classes $C$, instead of \zhaoruirevise{the}  CE losses, is one key factor. Specifically, the near-block-diagonal structure in $H_{ww}$ arises as $C \rightarrow \infty$ for {\it both} \zhaoruirevise{the} MSE and \zhaoruirevise{the} CE loss.  \citep{collobert2004large} did not observe \zhaoruirevise{the} special structure under \zhaoruirevise{the} MSE loss because binary classification with $C = 2$ \zhaoruirevise{was considered}. 

We emphasize that we do not claim ``large $C$'' as the {\it only} cause for the near-block-diagonal structure in $H_{ww}$, but just that it is a sufficient condition. It is also possible that the special structure arises with small $C$ (Figure 7.3 in \citep{collobert2004large}) in some different situations, which we have not explored yet.

\section{Preliminaries and Intuitive Understanding}
\label{sec_preliminaries}

\paragraph{Our settings} We consider the multi-class classification problems with a given classification dataset $\{(x_n,y_n)\}_{n = 1}^N$, where $x_n \in \mathbb{R}^{d}$ is the input data, $y_n\in \{1,\cdots, C\}$ is the label,  and $C$ is the number of classes. This setting is quite general: it covers simple logistic regression\zhaoruirevise{s}, as well as the most advanced  LLMs.  We consider the following four cases.

\paragraph{Case 1: linear models with MSE loss}  
 Consider the linear model $f(V;x) = Vx \in \mathbb{R}^{C}$, where  $V  = (v_1^\top;\cdots;v_C^\top)\in \mathbb{R}^{C\times d}$ is the weight matrix, and $v_i \in\mathbb{R}^d$ is the weight associated with the $i$-th class (or output neuron).  Consider minimizing the MSE loss as follows: 
\begin{equation}
    \label{eq_linear_model_mse}
    \min_V \ell_{\text{MSE}}(V) := \frac{1}{N}\sum_{n= 1}^N\ \|Vx_n- \mathcal{Y}_n\|_2^2,
\end{equation}
where $\mathcal{Y}_n \in \{0,1\}^C$ is a $C$-dimensional one-hot vector with  1 at the index for the class of $y_n$  and 0 elsewhere. \zhaoruirevise{The Hessian matrix is, } for $i,j \in [C]$:
\begin{equation}
\label{eq_linear_model_mse_Hessian}
\left\{\begin{array}{l}
    \frac{\partial^2\ell_{\text{MSE}}(V)}{\partial v_i  \partial v_i^\top} = \frac{1}{N}\sum_{n= 1}^N x_n x_n^\top, \\[2mm]
    \frac{\partial^2\ell_{\text{MSE}}(V)}{\partial v_i  \partial v_j^\top} = 0_{d\times d},\quad i\ne j.
\end{array}\right.
\end{equation}
In subsequent formulas in this paper, $i$ and $j$ are different indices.
\zhaoruirevise{ Here, the Hessian is always} block-diagonal with $C$ blocks. Note that the expression in \eqref{eq_linear_model_mse_Hessian} holds for general real-valued vector $\mathcal{Y}_n \in \mathbb{R}^C$, so the same Hessian structure also arises in regression tasks.

\paragraph{Case 2: linear models with CE loss.}
We now change the loss function in {\bf Case 1} to \zhaoruirevise{the} CE loss.
\begin{equation}
    \label{eq_linear_model_ce}
    \min_V  \ell_{\text{CE}}(V) := -  \frac{1}{N}\sum_{n= 1}^N \log\left(\frac{\exp({v_{y_n}^\top x_n})}{\sum_{c=1}^C\exp({v_c^\top x_n})}\right).
\end{equation}
Define \zhaoruirevise{$p_{n,i} := \exp({v_i^\top x_n})/\l({\sum_{c=1}^C\exp({v_c^\top x_n})}\r)$}. \zhaoruirevise{The Hessian matrix is, }for $i,j \in [C]$.
\begin{equation}
\label{eq_linear_model_ce_Hessian}
\left\{\begin{array}{l}
    \frac{\partial^2\ell_{\text{CE}}(V)}{\partial v_i  \partial v_i^\top} = \frac{1}{N}\sum_{n= 1}^N  p_{n,i} (1-p_{n,i}) x_n x_n^\top,   \\[2mm]
    \frac{\partial^2\ell_{\text{CE}}(V)}{\partial v_i  \partial v_j^\top} =-  \frac{1}{N}\sum_{n= 1}^N p_{n,i} p_{n,j} x_n x_n^\top .
\end{array}\right.
\end{equation}
\begin{snugshade}
{\bf Intuitive understanding:} at random initialization, suppose each entry in $V$ follows i.i.d. zero-mean Gaussian distribution, we have $p_{n,i} \approx \frac{1}{C}$ for all $n \in [N], i \in [C]$. As such: 
\begin{equation}
\frac{\bigg\|  \frac{\partial^2\ell_{\text{CE}}(V)}{\partial v_i  \partial v_j^\top} \bigg\|_{\operatorname{F}}}{\bigg\|    \frac{\partial^2\ell_{\text{CE}}(V)}{\partial v_i  \partial v_i^\top} 
 \bigg\|_{\operatorname{F}}}
  \approx  \frac{ \sum_{n=1}^N p_{n,i} p_{n,j}} {\sum_{n=1}^N  p_{n,i} (1-p_{n,i})}\approx \frac{  \frac{1}{C^2}}{ \frac{1}{C}\left(1-\frac{1}{C}\right)} = \frac{1}{C-1},
\end{equation} 
which pushes the Hessian to become block-diagonal as $C \rightarrow \infty$. 
\vspace{-0.2cm}
\end{snugshade}
\paragraph{Case 3:  1-hidden-layer networks with MSE loss} We now consider the 1-hidden-layer network with $m$ hidden neurons: $f(W,V;x) =  V \sigma (Wx) \in \mathbb{R}^C$, where $W = (w_1^\top;\cdots, w_m^\top) \in\mathbb{R}^{m\times d}$; $\sigma(z) = \max\{0,z\}$ is the ReLU activation and is applied elementwise to $Wx$; $V = (v_1^\top;\cdots;v_C^\top)\in \mathbb{R}^{C\times m}$.  Consider the MSE loss as follows. 
\begin{equation}
    \label{eq_nn_mse}
    \min_{W,V} \ell_{\text{MSE}}(W,V) := \frac{1}{N}\sum_{n= 1}^N\ \|V \sigma (Wx_n)- \mathcal{Y}_n\|_2^2.
\end{equation}
The hidden-layer Hessian $H_{ww}$ is: for $i, j \in [m]$,
\begin{equation}
\label{eq_nn_mse_Hessian_w}
\left\{\begin{array}{l}
    \frac{\partial^2\ell_{\text{MSE}}(W,V)}{\partial w_i  \partial w_i^\top} = \frac{1}{N} \left(\sum_{c= 1}^C v_{c,i}^2 \right) \left(\sum_{n= 1}^N    \mathbf{1}(w_i^\top x_n >0) x_n x_n^\top\right),  \\
        \frac{\partial^2\ell_{\text{MSE}}(W,V)}{\partial w_i  \partial w_j^\top} = \frac{1}{N} \left(\sum_{c= 1}^C v_{c,i} v_{c,j} \right) \left(\sum_{n= 1}^N  \mathbf{1}(w_i^\top x_n >0) \mathbf{1}(w_j^\top x_n >0) x_n x_n^\top \right).  
\end{array}\right.
\end{equation}
The output-layer Hessian $H_{vv}$  is: for $i, j \in [C]$,
\begin{equation}
\label{eq_nn_mse_Hessian_v}
\left\{\begin{array}{l}
    \frac{\partial^2\ell_{\text{MSE}}(W,V)}{\partial v_i  \partial v_i^\top} = \frac{1}{N}\sum_{n= 1}^N \sigma(Wx_n) \sigma(Wx_n)^\top,  \\
    \frac{\partial^2\ell_{\text{MSE}}(W,V)}{\partial v_i  \partial v_j^\top} = 0_{d\times d},
\end{array}\right.
\end{equation}
The output-layer Hessian is block-diagonal. We now discuss the hidden-layer Hessian.
\begin{snugshade}
{\bf \zhaoruirevise{Intuitive understanding}:} at random initialization, suppose \zhaoruirevise{entries} in $v_i \in \mathbb{R}^d$ \zhaoruirevise{follow an} i.i.d. zero-mean Gaussian distribution, then
\begin{equation}
    \frac{\bigg\|    \frac{\partial^2\ell_{\text{MSE}}(W,V)}{\partial w_i  \partial w_j^\top}
 \bigg\|_{\operatorname{F}}}{\bigg\|  \frac{\partial^2\ell_{\text{MSE}}(W,V)}{\partial w_i  \partial w_i^\top} \bigg\|_{\operatorname{F}}
}
  \approx \frac{\left(\sum_{c= 1}^C v_{c,i} v_{c,j} \right)  } {   \left(\sum_{c= 1}^C v_{c,i}^2 \right) } \overset{C \rightarrow \infty}{=} \frac{\operatorname{Cov}(v_{i,i},v_{i,j}) }{ \operatorname{Var}(v_{i,i})}.
\end{equation}
As $v_{i,i},v_{i,j}$ are independent, $\operatorname{Cov}(v_{i,i},v_{i,j}) = 0$ and thus the block-diagonal structure in $H_{ww}$ occurs as $C \rightarrow \infty$.
\vspace{-0.2cm}
\end{snugshade}

We now discuss the cross-layer component  $H_{wv}$ under MSE loss:
\bea
\frac{\partial^2 \ell_{\text{MSE}}}{\partial w_i\partial v_j^\top }=\frac{2}{N}\sum_{n=1}^N\l[    \left(  \sigma(Wx_n)^\top v_j - \mathcal{Y}_{n,j}  \right)  \mf{1}
(w_i^\top x_n>0)x_n e_i^\top + v_{j,i} \mf{1}
(w_i^\top x_n>0)x_n \sigma(Wx_n)^\top   \r],
\eea
where $e_i \in \mathbb{R}^m$ is an one-hot vector with the $i$-th component equals to 1. Note that the 2nd term has expectation 0 when $v_{j,i}$ is initialized as a zero-mean Gaussian distribution. As for the 1st term, it is a matrix of the form
\begin{equation}
\label{eq_hwv}
    \left[\begin{array}{cccccc}
0 & \cdots & a_{1, i} & 0 & \cdots & 0 \\
\vdots & \ddots & \vdots & \vdots & \ddots & \vdots \\
0 & \cdots & a_{d, i} & 0 & \cdots & 0
\end{array}\right] \in \mathbb{R}^{d \times m},
\end{equation}

which is a matrix with one non-zero column at position $i$ with {\small $$a_{d^\prime, i}=\frac{1}{N} \sum_{n=1}^N  \left(  \sigma(Wx_n)^\top v_j - \mathcal{Y}_{n,j}  \right)  \mf{1}
(w_i^\top x_n>0)x_{n,d^\prime},\quad   d^\prime \in [d].$$}   
Note that $v_{j}$ is initialized as zero-mean Gaussian distribution, so the inner product  $ \sigma(Wx_n)^\top v_j$ has expectation 0. Further,  as the training goes,   {\small $ \left(  \sigma(Wx_n)^\top v_j - \mathcal{Y}_{n,j}  \right) \rightarrow 0$} and thus the 1st  term in $H_{wv}$ shall approach 0 along training.  Numerically, we observe that $H_{wv}$ under MSE loss is indeed negligible compared to $H_{ww}$ and $H_{vv}$. This is observed throughout the training, including at the initialization (see Figure \ref{fig:closer_look_mse}).

\paragraph{Case 4: 1-hidden-layer networks with CE loss} We now consider 1-hidden-layer networks with CE loss.
\begin{equation}
    \label{eq_nn_ce}
    \min_{W,V}  \ell_{\text{CE}}(W,V) := - \frac{1}{N} \sum_{n= 1}^N \log\left(\frac{\exp({v_{y_n}^\top \sigma(Wx_n)})}{\sum_{c=1}^C\exp({v_c^\top  \sigma(Wx_n)})}\right).
\end{equation}
The hidden-layer Hessian  $H_{ww}$  is: for $i, j \in [m]$,

{\small
\begin{equation}
\label{eq_nn_ce_Hessian_w}
\left\{\begin{array}{l}
    \frac{\partial^2\ell_{\text{CE}}(W,V)}{\partial w_i  \partial w_i^\top} = \frac{1}{N}\sum_{n= 1}^N  \left(\sum_{c=  1}^C p_{n,c} v_{c,i}^2 - \left(\sum_{c=1}^C p_{n,c} v_{c,i}\right)^2\right)   \mathbf{1}(w_i^\top x_n >0) x_n x_n^\top,  \\
        \frac{\partial^2\ell_{\text{CE}}(W,V)}{\partial w_i  \partial w_j^\top} =  \frac{1}{N}\sum_{n= 1}^N   \left(\sum_{c= 1}^C p_{n,c} v_{c,i} v_{c,j} - \left(\sum_{c=1}^C p_{n,c} v_{c,i}\right)  \left(\sum_{c=1}^C p_{n,c} v_{c,j}\right)\right) \mathbf{1}(w_i^\top x_n >0) \mathbf{1}(w_j^\top x_n >0) x_n x_n^\top.   
\end{array}\right.
\end{equation}
}
The output-layer Hessian $H_{vv}$  is: for $i, j \in [C]$, 
\begin{equation}
\label{eq_nn_ce_Hessian_v}
\left\{\begin{array}{l}
    \frac{\partial^2\ell_{\text{CE}}(W,V)}{\partial v_i  \partial v_i^\top} =  \frac{1}{N}\sum_{n= 1}^N  p_{n,i} (1-p_{n,i}) \sigma(Wx_n) \sigma(Wx_n)^\top,   \\
    \frac{\partial^2\ell_{\text{CE}}(W,V)}{\partial v_i  \partial v_j^\top} =- \frac{1}{N}\sum_{n= 1}^N p_{n,i} p_{n,j} \sigma(Wx_n) \sigma(Wx_n)^\top.
\end{array}\right.
\end{equation}
\begin{snugshade}
{\bf Intuitive understanding:} at random initialization, suppose \zhaoruirevise{entries} in $W, V$ follows i.i.d. zero-mean Gaussian distribution, we have $p_{n,i} \approx \frac{1}{C}$ for all $n \in [N], i \in [C]$. As such:
\begin{equation}
    \frac{\bigg\|    \frac{\partial^2\ell_{\text{CE}}(W,V)}{\partial w_i  \partial w_j^\top}
 \bigg\|_{\operatorname{F}}}{\bigg\|  \frac{\partial^2\ell_{\text{CE}}(W,V)}{\partial w_i  \partial w_i^\top} \bigg\|_{\operatorname{F}}}
  \approx  \frac{ \left(\sum_{c= 1}^C v_{c,i} v_{c,j}  - \left(\sum_{c=1}^C v_{c,i}\right)\left(\sum_{c=1}^C v_{c,j}\right) \right) /C  } {   \left(\sum_{c= 1}^C v_{c,i}^2 
 - \left(\sum_{c=1}^C v_{c,i}\right)^2\right) /C } \overset{C \rightarrow \infty}{=} \frac{\operatorname{Cov}(v_{i,i},v_{i,j})}{  \operatorname{Var}(v_{i,i})}.
\end{equation}
Since $v_{i,i},v_{i,j}$ are independent, $\operatorname{Cov}(v_{i,i},v_{i,j}) = 0$ and thus 
the block-diagonal structure in $H_{ww}$ occurs as $C\rightarrow \infty$. Similarly, we have
\begin{equation}
    \frac{\bigg\|    \frac{\partial^2\ell_{\text{CE}}(W,V)}{\partial v_i  \partial v_j^\top}
 \bigg\|_{\operatorname{F}}}{\bigg\|  \frac{\partial^2\ell_{\text{CE}}(W,V)}{\partial v_i  \partial v_i^\top} \bigg\|_{\operatorname{F}}
}
    \approx  \frac{\sum_{n=1}^N  p_{n,i} p_{n,j}} {\sum_{n=1}^N  p_{n,i} (1-p_{n,i})}\approx \frac{  \frac{1}{C^2}}{ \frac{1}{C}\left(1-\frac{1}{C}\right)} = \frac{1}{C-1},
\end{equation}

and thus 
the block-diagonal structure  in $H_{vv}$ arises as $C\rightarrow \infty$. 

\vspace{-0.2cm}
\end{snugshade}

We now discuss the cross-layer component  $H_{wv}$ under CE loss. We will explain the block-circulant structure at initialization (i.e., Figure \ref{fig:closer_look_ce} (a)) and why it vanishes along training (i.e., Figure \ref{fig:closer_look_ce} (b-f)). We find that this phenomenon can be seen by a direct Hessian calculation. The cross-layer Hessian
is
\bea
\frac{\partial^2 \ell_{\text{CE}}}{\partial w_i\partial v_j^\top }=\frac{1}{N}\sum_{n=1}^N\l[(p_{n,j}-\delta_{y_n,j})\mf{1}
(w_i^\top x_n>0)x_n e_i^\top +\sum_{c=1}^C (\delta_{j,c}-p_{n,c})p_{n,j}v_{c,i}\mf{1}(w_i^\top x_n>0)x_n\sigma(Wx_n)^\top\r].\eea
When $C$ is large, \zhaoruirevise{by the law of large number and approximating $p_{n,j}\approx 1/C$, for the 2nd-term we have} 
\bea
&\sum_{c=1}^C (\delta_{j,c}-p_{n,c})p_{n,j}v_{c,i}\mf{1}(w_i^\top x_n>0)x_n\sigma(Wx_n)^\top\\
\approx & p_{n,j} v_{j,i}\mf{1}(w_i^\tp x_n>0) x_n \sigma(Wx_n)^\tp-p_{n,j} x_n \sigma(Wx_n)^\tp \l(\frac{1}{C}\sum_{c=1}^C  v_{c,i}\mf{1}(w_i^\tp x>0)\r)\\
\approx& \frac{1}{C}\cdot  v_{j,i}\mf{1}(w_i^\tp x_n>0) x_n \sigma(Wx_n)^\tp.
\eea
Thus 
\bea
\frac{\partial^2 \ell_{\text{CE}}}{\partial w_i\partial v_j^\top } \approx \frac{1}{N}\sum_{n=1}^N(p_{n,j}-\delta_{y_n,j})\mf{1}
(w_i^\top x_n>0)x_n e_i^\top + \mathcal{O}\left(\frac{1}{C}\right),
\eea
As such, the leading term of $H_{wv}$ under CE loss  is a matrix of the form
\begin{equation}
\label{eq_hwv}
    \left[\begin{array}{cccccc}
0 & \cdots & a_{1, i} & 0 & \cdots & 0 \\
\vdots & \ddots & \vdots & \vdots & \ddots & \vdots \\
0 & \cdots & a_{d, i} & 0 & \cdots & 0
\end{array}\right] \in \mathbb{R}^{d \times m},
\end{equation}
which is a matrix with one non-zero column at position $i$ with {\small $$a_{d^\prime, i}=\frac{1}{N} \sum_{n=1}^N \left(p_{n, c} - \delta_{y_n, c}\right) \mathbf{1}\left(w_i^\top x_n \geq 0\right) x_{n, d^{\prime}},\quad   d^\prime \in [d].$$}   This explains the initial block-circulant structure in $H_{wv}$ under CE loss. As the training goes,   $\left(p_{n, c} - \delta_{y_n, c}\right)\rightarrow 0$ and the block-circulant structure disappears.   This shows the ``dynamic force'' arisen from training.

\section{Main Results}

We now present our rigorous statements. 
We first state some standard assumptions. 
\begin{assum}
\label{assum_1}
   \zhaoruirevise{The entries} of the data matrix $X_N=(x_1,\cdots,x_N) \in \mathbb{R}^{d\times N}$ \zhaoruirevise{are} i.i.d. $\mathcal{N}(0,1)$. 
\end{assum}
\begin{assum}
\label{assum_2}
    \zhaoruirevise{The} model weights \zhaoruirevise{in} $W$ and $V$ are initialized by LeCun initialization. That is: for the linear model, $V_{i,j}\overset{\operatorname{i.i.d.}}{\sim}\mathcal{N}(0, \frac{1}{d})$, $i\in[C],j\in[d]$; for 1-hidden-layer network, $W_{i,j}\overset{\operatorname{i.i.d.}}{\sim}\mathcal{N}(0, \frac{1}{d})$, $i\in[m],j\in[d]$,  $V_{i,j}\overset{\operatorname{i.i.d.}}{\sim}\mathcal{N}(0, \frac{1}{m})$, $i\in[C],j\in[m]$. 
\end{assum}
\zhaoruirevise{Note} that Assumption \ref{assum_2} is widely adopted in NNs \citep{sun2019optimization}. Assumption \ref{assum_1} on data distribution is standard in random matrix theory \citep{pastur2020random}. It is possible to extend the  Gaussian distribution to, 
 e.g., Gaussian orthogonal ensembles and \zhaoruirevise{more} general i.i.d. distribution. However, such generalization is non-trivial and each case may require an independent paper (e.g. \citet{pastur2022haarorthogonal,pastur2023iid}).
\begin{thm}
\label{thm_linear_model}
    {\bf (Linear models.)}  Consider the Hessian expressions in \eqref{eq_linear_model_ce_Hessian} and assume Assumption\zhaoruirevise{s} \ref{assum_1} and \ref{assum_2} hold. Suppose  $d,N\rw\infty, \frac
    {d}{N}\rw \gamma\in (0,+\infty)$, then for fixed $C\geq 2$, it holds almost surely that
        \bea\label{eq1thm3}
        \lim_{d,N\rw \infty}\frac{1}{d}\bigg\|\frac{\partial^2\ell_{\text{CE}}(V)}{\partial v_i  \partial v_i^\top}\bigg\|_{\operatorname{F}} = g_{ii}(\gamma,C),\quad \forall i \in [C],
        \eea
        \bea\label{eq2thm3}
        \lim_{d,N\rw \infty}\frac{1}{d}\bigg\|\frac{\partial^2\ell_{\text{CE}}(V)}{\partial v_i  \partial v_j^\top}\bigg\|_{\operatorname{F}} = g_{ij}(\gamma,C),\quad \forall i,j \in [C], i \ne j,
        \eea
    where functions $g_{ii},g_{ij}$ are given in Section \ref{appendix_thm1}. Furthermore,
\bea\label{eq3thm3}
\lim_{C\rw\infty}C^2 g_{ii}(\gamma,C)=\gamma e+1,  
\eea
\bea\label{eq4thm3}
\lim_{C\rw\infty}C^4 g_{ij}(\gamma,C)=\gamma e^2+1.  
\eea
\end{thm}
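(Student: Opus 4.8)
\emph{Step 1: reduction to a weighted sample-covariance functional.} The plan is to work with squared Frobenius norms, since $\|A\|_{\operatorname{F}}^2=\operatorname{tr}(A^2)$ linearizes the computation. Writing $c_n:=p_{n,i}(1-p_{n,i})$ in the diagonal case and $c_n:=p_{n,i}p_{n,j}$ in the off-diagonal case, each block in \eqref{eq_linear_model_ce_Hessian} equals $\pm\tfrac1N X_N\diag(c_1,\dots,c_N)X_N^\top=:\pm H$, and since the sign is irrelevant for $\|\cdot\|_{\operatorname{F}}$,
\be
\|H\|_{\operatorname{F}}^2=\frac1{N^2}\sum_{n,m=1}^N c_nc_m(x_n^\top x_m)^2=\frac1{N^2}\sum_{n=1}^N c_n^2\|x_n\|^4+\frac1{N^2}\sum_{n\ne m}c_nc_m(x_n^\top x_m)^2 .
\ee
I would show that, after dividing by $d$, the first (``same-sample'') sum converges almost surely to $\gamma\,\E[\tilde c^2]$ and the second (``cross-sample'') sum to $(\E[\tilde c])^2$, where $\tilde c$ is the $d\rw\infty$ limit of the weight: with $Z_1,\dots,Z_C$ i.i.d.\ $\mathcal N(0,1)$ and $p_\ell:=e^{Z_\ell}/\sum_{c=1}^C e^{Z_c}$, one takes $\tilde c=p_i(1-p_i)$ in the diagonal case and $\tilde c=p_ip_j$ in the off-diagonal case. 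Granted the decoupling of Step~2, this produces the explicit formulas $g_{ii}(\gamma,C)=\gamma\,\E[(p_i(1-p_i))^2]+(\E[p_i(1-p_i)])^2$ and $g_{ij}(\gamma,C)=\gamma\,\E[(p_ip_j)^2]+(\E[p_ip_j])^2$ of Section~\ref{appendix_thm1}, giving \eqref{eq1thm3}--\eqref{eq2thm3}.

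\emph{Step 2: decoupling the weights from the data (the main obstacle).} The crux, and the step I expect to be hardest, is that $c_n$ is \emph{not} independent of the data: by Assumption~\ref{assum_2} it is a bounded smooth function of the $C$-dimensional projection $(v_1^\top x_n,\dots,v_C^\top x_n)$ of $x_n$ onto $\operatorname{span}(v_1,\dots,v_C)$, so it is correlated with $\|x_n\|^2$ and with $x_n^\top x_m$, and the $c_n$ are coupled through the shared $V$. I would handle this exactly as announced in the introduction, by a Lindeberg interpolation following \citet{pastur2020random}: replace the data-dependent $D=\diag(c_n)$ by an independent copy $\tilde D$ with i.i.d.\ entries distributed as $\tilde c$, swapping one entry at a time and bounding each increment. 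Each increment is $o(1)$ precisely because the dependence diminishes as $d\rw\infty$ --- $\operatorname{span}(v_1,\dots,v_C)$ occupies a vanishing fraction of the coordinates, $(v_c^\top v_{c'})\rw I_C$, the cross-correlations $x_n^\top x_m/d$ and $v_c^\top x_n/\sqrt d$ are $O(d^{-1/2})$, and $c_n$ is bounded. Concretely, conditioning on $V$ and on the $C$-dimensional projections $\{x_n^\parallel\}_n$ freezes every $c_n$ while leaving the orthogonal components $\{x_n^\perp\}_n$ i.i.d.\ Gaussian, whence $\E[(x_n^\top x_m)^2\mid\cdot]=d+O(C)$ for $n\ne m$ and $\E[\|x_n\|^4\mid\cdot]=d^2+O(dC)$, so the $d$-normalized conditional means of the two sums in Step~1 equal $\tfrac dN\cdot\tfrac1N\sum_n c_n^2$ and $\big(\tfrac1N\sum_n c_n\big)^2$ up to $o(1)$. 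Two further ingredients then close \eqref{eq1thm3}--\eqref{eq2thm3}: (i) the laws of large numbers $\tfrac1N\sum_n c_n\rw\E[\tilde c]$ and $\tfrac1N\sum_n c_n^2\rw\E[\tilde c^2]$ almost surely --- conditionally on $V$ the $c_n$ are i.i.d., and the $V$-conditional means converge to these deterministic limits because $(v_c^\top v_{c'})\rw I_C$; and (ii) Gaussian Poincar\'e concentration of $\|H\|_{\operatorname{F}}^2$ in the free Gaussian variables, after truncating onto $\{\max_n\|x_n\|\le K\sqrt d\}$ to tame the quartic Gram terms, together with Borel--Cantelli along $d\rw\infty$ to upgrade convergence in probability to the claimed almost-sure convergence.

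\emph{Step 3: the $C\rw\infty$ asymptotics.} With the formulas of Step~1 in hand, \eqref{eq3thm3}--\eqref{eq4thm3} reduce to the large-$C$ behaviour of the Gaussian-softmax moments $\E[p_i^ap_j^b]$. The key fact is concentration of the normalizing sum: $S:=\sum_{c=1}^C e^{Z_c}$ obeys $S/C\rw\E[e^{Z_1}]=\sqrt e$ almost surely and in every $L^r$, and likewise $S_{-i}:=\sum_{c\ne i}e^{Z_c}$ (independent of $Z_i$) and $S_{-ij}$. Since $e^{Z_i}$ is negligible against $S_{-i}$, one gets $\E[p_i^a]=(1+o(1))\,\E[e^{aZ_i}]\,\E[S_{-i}^{-a}]=O(C^{-a})$, and using $\E[e^{aZ}]=e^{a^2/2}$ together with $C^2\E[S_{-i}^{-2}]\rw e^{-1}$ --- whose justification needs a Chernoff bound on the exponentially small lower tail of $S_{-i}$ followed by dominated convergence --- one obtains
\be
C^2\E[p_i^2]\rw e,\qquad C^2\E[p_ip_j]\rw 1,\qquad C^4\E[p_i^2p_j^2]\rw e^2 .
\ee
Substituting into Step~1 and using $\E[p_i]=1/C$ exactly (by exchangeability of $Z_1,\dots,Z_C$ and $\sum_\ell p_\ell=1$),
\be
C^2 g_{ii}=\gamma\big(C^2\E[p_i^2]-2C^2\E[p_i^3]+C^2\E[p_i^4]\big)+\big(1-C\E[p_i^2]\big)^2\ \longrightarrow\ \gamma e+1,
\ee
\be
C^4 g_{ij}=\gamma\,C^4\E[p_i^2p_j^2]+\big(C^2\E[p_ip_j]\big)^2\ \longrightarrow\ \gamma e^2+1,
\ee
since $C^2\E[p_i^3]=O(C^{-1})\rw0$ and $C^2\E[p_i^4]=O(C^{-2})\rw0$. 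This yields \eqref{eq3thm3}--\eqref{eq4thm3} and completes the proof.
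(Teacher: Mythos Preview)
Your proposal is correct and takes a genuinely different, more elementary route than the paper. The paper never expands $\|H\|_{\operatorname{F}}^2$ directly; instead it works at the level of the limiting spectral distribution: Lemma~\ref{decoupling} runs the Lindeberg interpolation on the \emph{Stieltjes transform} $s_{H_{ii}}(z)$ (not on the moments) to show that replacing $\Lambda_N$ by an independent copy $\wt\Lambda_N$ does not change the limit, then Proposition~\ref{Linear_CE_MP} invokes the generalized Marcenko--Pastur theorem to identify the limiting measure, and finally Proposition~\ref{thm3appendix} reads off the second moment by expanding the functional equation \eqref{eqhii} at $z=\infty$. Your approach bypasses all of this RMT machinery: the orthogonal decomposition $x_n=x_n^\parallel+x_n^\perp$ with $x_n^\parallel\in\operatorname{span}(v_1,\dots,v_C)$ makes the $c_n$ measurable with respect to $(V,x_n^\parallel)$ while $x_n^\perp$ stays an independent $(d{-}C)$-dimensional Gaussian, so the two sums in your trace expansion can be analyzed by conditioning rather than interpolation. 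This is cleaner for the specific target $\|H\|_{\operatorname{F}}^2$ and avoids Proposition~\ref{GMP} entirely; the price is that you do not get the full limiting eigenvalue distribution (equations \eqref{eqhii}--\eqref{eqhij}) as a byproduct, which the paper's route delivers for free and which would be needed for any spectral statistic beyond the second moment. Your Step~3 is essentially the same as the paper's---the paper phrases it through Slutsky and convergence in distribution to $\operatorname{Lognormal}(0,1)$, while you compute the relevant moments directly, but the content is identical.
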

Theorem \ref{thm_linear_model} implies that we have the following relation between the diagonal and off-diagonal blocks:

        \bea\label{eq_thm_linear_model_ratio}
        \lim_{d,N\rw\infty} \frac{\bigg\|\frac{\partial^2\ell_{\text{CE}}(V)}{\partial v_i  \partial v_j^\top}\bigg\|_{\operatorname{F}}^2} {\bigg\|\frac{\partial^2\ell_{\text{CE}}(V)}{\partial v_i  \partial v_i^\top}\bigg\|_{\operatorname{F}}^2}  =  \frac{g_{ij}(\gamma,C)}{g_{ii}(\gamma,C)}, \quad 
        \lim_{C\rw\infty}\frac{C^2g_{ij}(\gamma,C)}{g_{ii}(\gamma,C)}=\frac{\gamma e^2+1}{\gamma e+1}.
        \eea
\zhaoruirevise{When $C\rw\infty$, the ratio vanishes at the rate $\mathcal{O}(1/C^2)$, and the block-diagonal structure emerges.}

\zhaoruirevise{The next theorem presents} a similar result for 1-hidden-layer networks.

\begin{thm}
\label{thm_nn}
        {\bf (1-hidden-layer networks.)} Consider the Hessian expressions \zhaoruirevise{in} \eqref{eq_nn_mse_Hessian_w} to \eqref{eq_nn_ce_Hessian_v}, and assume Assumptions \ref{assum_1} and \ref{assum_2} hold. Then for any fixed $m\geq 3$, suppose $d,N\rw\infty, \frac
    {d}{N}\rw \gamma\in (0,+\infty)$, it holds that
{\small   \bea\label{thm2_NN_CE_Hww}
\lim_{d,N\rw \infty}\frac{1}{d}\E\l[\bigg\|\frac{\partial^2\ell_{\text{CE}}(W,V)}{\partial w_i  \partial w_i^\top}\bigg\|_{\operatorname{F}}^2\r]=h_{ii}(\gamma,C),\quad
\lim_{d,N\rw \infty}\frac{1}{d}\E\l[\bigg\|\frac{\partial^2\ell_{\text{CE}}(W,V)}{\partial w_i  \partial w_j^\top}\bigg\|_{\operatorname{F}}^2\r]=h_{ij}(\gamma,C),
\eea
\bea\label{thm2_NN_MSE_Hww}
\lim_{d,N\rw\infty}\frac{1}{d}\E\l[\bigg\|\frac{\partial^2\ell_{\text{MSE}}(W,V)}{\partial w_i  \partial w_i^\top}\bigg\|_{\operatorname{F}}^2\r]=u_{ii}(\gamma,C),\quad
\lim_{d,N\rw\infty}\frac{1}{d}\E\l[\bigg\|\frac{\partial^2\ell_{\text{MSE}}(W,V)}{\partial w_i  \partial w_j^\top}\bigg\|_{\operatorname{F}}^2\r]=u_{ij}(\gamma,C),
\eea }
\bea\label{NN_CV_Hvv}
\lim_{d,N\rw\infty}\E\l[\bigg\|\frac{\partial^2\ell_{\text{CE}}(W,V)}{\partial v_i  \partial v_i^\top}\bigg\|_{\operatorname{F}}^2\r]=q_{ii}(\gamma,C),\quad
\lim_{d,N\rw\infty}\E\l[\bigg\|\frac{\partial^2\ell_{\text{CE}}(W,V)}{\partial v_i  \partial v_j^\top}\bigg\|_{\operatorname{F}}^2\r]=q_{ij}(\gamma,C),
\eea
where functions $h_{ii},h_{ij},u_{ii},u_{ij},q_{ii},q_{ij}$ are given in Section \ref{appendix_thm2}. Furthermore, we have 
\bea
\lim_{C\rw\infty} h_{ii}(\gamma,C)=\frac{1+2\gamma}{4m^2},\quad
\lim_{C\rw\infty} Ch_{ij}(\gamma,C)=\frac{\gamma(m-1)^2}{2^m(m-2)^3m}\l(\sqrt{\frac{m}{m-2}}+1\r)^{m-2},
\eea
\bea
\lim_{C\rw\infty}\frac{u_{ii}(\gamma,C)}{C^2}=\frac{1+2\gamma}{4m^2},\quad
\lim_{C\rw\infty}\frac{u_{ij}(\gamma,C)}{C}=\frac{1+4\gamma}{16m^2},
\eea
\bea
\lim_{C\rw\infty} C^2 q_{ii}(\gamma,C)=ma_{12}b_1^{m-1}+m(m-1)a_{11}^2b_1^{m-2},
\eea
\bea
\lim_{C\rw\infty} C^4 q_{ij}(\gamma,C)=ma_{22}b_2^{m-1}+m(m-1)a_{21}^2b_2^{m-1},
\eea
where the constant terms $a_{11}, a_{12}, a_{21}, a_{22}, b_1, b_2$ are presented in \eqref{eq_constant_terms_Hvv} in Section \ref{appendix_thm2}.

\end{thm}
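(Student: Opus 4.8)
\emph{Proof strategy.} Write $s_n:=\sigma(Wx_n)\in\mathbb{R}^m$ and $b_n^{(k)}:=\mathbf{1}(w_k^\top x_n>0)$. Every block $A$ in the statement is symmetric, so $\|A\|_{\operatorname{F}}^2=\operatorname{tr}(A^2)$ expands as a double sum over sample indices $n,n'$ whose summand is a scalar ``coefficient'' (depending on $V$, and for the CE blocks also on $W,X$ through $p_{n,c}$) times $(x_n^\top x_{n'})^2$ for the $H_{ww}$ blocks or $(s_n^\top s_{n'})^2$ for the $H_{vv}$ blocks. Since $H_{vv}$ is $m\times m$ with $m$ fixed, its limits come from the law of large numbers over the $N(N-1)$ off-diagonal pairs (the $N$ diagonal pairs are $\mathcal{O}(1/N)$), using that as $d\to\infty$ the triple $(s_n^\top s_{n'},s_n,s_{n'})$ converges in law to the same functional of $2m$ i.i.d.\ $\mathcal{N}(0,1)$ variables $g_{n,k},g_{n',k}$ (a consequence of $\|w_k\|^2\to1$, $w_k^\top w_l\to0$). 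For $H_{ww}$, which is $d\times d$, I instead evaluate $\frac1d\operatorname{tr}(A^2)=\frac1{dN^2}\sum_{n,n'}(\text{coeff}_n)(\text{coeff}_{n'})\,b_n^{(i)}b_{n'}^{(i)}(x_n^\top x_{n'})^2$ (with an extra $b^{(j)}$ on each index in the off-diagonal case) by separating $n=n'$ from $n\ne n'$: the diagonal part is governed by $\mathbb{E}[b_n^{(i)}\|x_n\|^4]=\tfrac12 d^2+\mathcal{O}(d)$ — obtained from $x_n=(w_i^\top x_n)\hat w_i+x_n^{\perp}$, which shows the gate direction contributes negligibly to $\|x_n\|^2$ — while the $n\ne n'$ part uses that $(b_n^{(i)},b_{n'}^{(i)})$ is asymptotically independent of $(x_n^\top x_{n'})^2$ (since $x_n^\top x_{n'}=\mathcal{O}(\sqrt d)\ll\|x_n\|^2$); this is the ``masked Marchenko--Pastur'' content, the diagonal supplying the aspect-ratio term and the off-diagonal a constant, so the plain gated sample covariance $\frac1N\sum_n b_n^{(i)}x_nx_n^\top$ has second moment $\tfrac{1+2\gamma}4$, and $\tfrac{1+4\gamma}{16}$ when two gates are present.

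For the MSE blocks the coefficient $\sum_c v_{c,i}^2$ (resp.\ $\sum_c v_{c,i}v_{c,j}$) depends only on $V$, which is independent of $(W,X)$, so the expectation factorizes; with $v_{c,\cdot}\sim\mathcal{N}(0,I/m)$ one has $\mathbb{E}[(\sum_c v_{c,i}^2)^2]=(C^2+2C)/m^2$ and $\mathbb{E}[(\sum_c v_{c,i}v_{c,j})^2]=C/m^2$, and multiplying by the two moments above yields $u_{ii}/C^2\to\tfrac{1+2\gamma}{4m^2}$ and $u_{ij}/C\to\tfrac{1+4\gamma}{16m^2}$. For the CE blocks the coefficients couple $V,W,X$ through $p_{n,c}$, so the first move is the law of large numbers over the $C$ classes for the normalizer, giving $p_{n,c}=\tfrac1C e^{v_c^\top s_n-\|s_n\|^2/(2m)}(1+o(1))$. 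Substituting into the diagonal $H_{ww}$ coefficient $\sum_c p_{n,c}v_{c,i}^2-(\sum_c p_{n,c}v_{c,i})^2$ and performing the exponential-tilt Gaussian integral over $v$ (tilted law $\mathcal{N}(s_n/m,I/m)$) returns the limit $1/m$ — the $s_{n,i}^2/m^2$ parts cancel — so $h_{ii}\to\tfrac1{m^2}\cdot\tfrac{1+2\gamma}4$. For $H_{vv}$ one uses $p_{n,i}(1-p_{n,i})\approx\tfrac1C e^{v_i^\top s_n-\|s_n\|^2/(2m)}$, $p_{n,i}p_{n,j}\approx\tfrac1{C^2}e^{(v_i+v_j)^\top s_n-\|s_n\|^2/m}$, and integrating over the independent $v_i,v_j$ leaves $\mathbb{E}_{x_n,x_{n'}}[e^{\alpha s_n^\top s_{n'}/m}(s_n^\top s_{n'})^2]$ with $\alpha=1$ (diagonal) or $\alpha=2$ (off-diagonal); with $u_k:=\sigma(g_{n,k})\sigma(g_{n',k})$ independent over $k$ this factorizes over the $m$ neurons into $m\,a\,b^{m-1}+m(m-1)\tilde a^2 b^{m-2}$ with $a=\mathbb{E}[u^2 e^{\alpha u/m}]$, $\tilde a=\mathbb{E}[u e^{\alpha u/m}]$, $b=\mathbb{E}[e^{\alpha u/m}]$ — exactly the claimed forms of $\lim C^2 q_{ii}$, $\lim C^4 q_{ij}$, the requirement $m\ge3$ being precisely what makes $b$ finite for $\alpha=2$.

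The delicate case is the off-diagonal $h_{ij}$: after $p_{n,c}\approx1/C$ the coefficient $\sum_c p_{n,c}v_{c,i}v_{c,j}-(\sum_c p_{n,c}v_{c,i})(\sum_c p_{n,c}v_{c,j})=\operatorname{Cov}_{c\sim p_{n,\cdot}}(v_{c,i},v_{c,j})$ vanishes at leading order (both pieces converge to $s_{n,i}s_{n,j}/m^2$ under the tilt), so $Ch_{ij}$ is driven by the CLT-scale ($1/\sqrt C$) fluctuations of the empirical class averages. Keeping these and linearizing the ratio, one finds $c_n\approx Z_n^{-1}\tfrac1C\sum_c e^{v_c^\top s_n}(v_{c,i}-s_{n,i}/m)(v_{c,j}-s_{n,j}/m)$ (centered), whose second moment contracts — only the diagonal-in-class terms survive — to $\tfrac1C e^{\|s_n\|^2/m}(\tfrac1m+s_{n,i}^2/m^2)(\tfrac1m+s_{n,j}^2/m^2)$. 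Inserting this into the $d\times d$ trace sum, the diagonal sample pairs dominate (the extra $\|x_n\|^4\sim d^2$ against $(x_n^\top x_{n'})^2\sim d$), contributing $\gamma$ times the $C\to\infty$ limit of $C\,\mathbb{E}[c_n^2 b_n^{(i)}b_n^{(j)}]$; using $s_{n,k}=\sigma(g_{n,k})$ with $s_{n,i}=g_{n,i}\mathbf{1}(g_{n,i}>0)$, the expectation factorizes over the $m$ neurons, the two distinguished neurons $i,j$ each contributing $\mathbb{E}[\mathbf{1}(g>0)e^{g^2/m}(\tfrac1m+g^2/m^2)]=\tfrac{m-1}{2\sqrt m\,(m-2)^{3/2}}$ and each of the remaining $m-2$ contributing $\mathbb{E}[e^{\sigma(g)^2/m}]=\tfrac12(1+\sqrt{m/(m-2)})$, which reproduces $\lim Ch_{ij}=\dfrac{\gamma(m-1)^2}{2^m(m-2)^3 m}\big(\sqrt{m/(m-2)}+1\big)^{m-2}$ (finite exactly when $m\ge3$); one also has to check that the remaining $n\ne n'$ pairs do not alter this constant.

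What makes all of this more than bookkeeping is the dependency that survives the law of large numbers over classes: the $\mathcal{O}(1/\sqrt d)$ off-diagonal entries of $WW^\top$, the coupling between the ReLU gates $b_n^{(k)}$ and the data inner products and norms, and the joint dependence of $s_n$ on $(W,X)$ — none of which is covered by classical results on products of independent random matrices. I would remove these by a Lindeberg interpolation argument in the spirit of \citet{pastur2020random}: replace the nearly-independent Gaussian objects by exactly independent ones with matching first and second moments and bound the telescoping error by $\mathcal{O}(1/\sqrt d)$. I expect this decoupling, together with the second-order fluctuation analysis forced by the leading-order cancellation in the $h_{ij}$ case, to be the main obstacle; once everything is reduced to the one-dimensional Gaussian moments and Marchenko--Pastur second moments above, the remaining work is computation. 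Finally, each finite-$C$ expression $h_{\bullet\bullet},u_{\bullet\bullet},q_{\bullet\bullet}$ is obtained from the same steps without the class-LLN approximation, and the stated $C\to\infty$ limits follow by applying that approximation to them.
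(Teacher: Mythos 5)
Your proposal is correct in substance and lands on exactly the paper's constants, but it reaches them by a partly different technical route, so a comparison is worth recording. The paper's proof first conditions on a deterministic $V$, decouples the coefficient diagonal from $X_N$ via the same Lindeberg--Stein interpolation you invoke, and then runs the generalized Marchenko--Pastur machinery (Proposition \ref{GMP}): it proves weak convergence of the empirical measure of the coefficients $\xi_C(V),\eta_C(V)$ by a conditional CLT given $\mathcal{F}_W$, solves the functional equation for the Stieltjes transform, and extracts the second moment from the $z\to\infty$ expansion, which produces the generic finite-$C$ form $\gamma\E[\xi_C^2]+(\E[\xi_C])^2$ before averaging over $V$ and sending $C\to\infty$; for the MSE blocks it additionally uses an exact Rademacher symmetrization (Lemma \ref{decouplingHanin}) that makes the ReLU gates \emph{exactly} i.i.d.\ Bernoulli masks independent of the data, so that $\mu_{L_{ii}}\to \text{MP}(2\gamma,\tfrac12)$ and $\mu_{L_{ij}}\to \text{MP}(4\gamma,\tfrac14)$ follow cleanly. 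You instead bypass the Stieltjes-transform/MP step by expanding $\tfrac1d\operatorname{tr}(A^2)$ over sample pairs and treating the $n=n'$ and $n\ne n'$ contributions directly under asymptotic independence (justified by the same Lindeberg argument); this yields only the second moments, which is all the theorem needs, at the cost of having to argue the gate-versus-$\|x_n\|^4$ and gate-versus-$(x_n^\top x_{n'})^2$ decorrelations by hand where the paper's symmetrization makes them exact, while the paper's route buys the full limiting spectral law as a by-product. Your $C\to\infty$ analysis coincides with the paper's: the class-LLN with the exponential tilt for $h_{ii}$, $u_{\bullet\bullet}$ and $q_{\bullet\bullet}$ (your $m\,a\,b^{m-1}+m(m-1)\tilde a^2 b^{m-2}$ is exactly $ma_{12}b_1^{m-1}+m(m-1)a_{11}^2b_1^{m-2}$, etc.; note the theorem's displayed $b_2^{m-1}$ in the last term of $\lim C^4 q_{ij}$ is a typo for $b_2^{m-2}$, and your form matches the proof body), and for $h_{ij}$ your centered-fluctuation computation, with the surviving $\tfrac1C e^{\|s_n\|^2/m}(\tfrac1m+s_{n,i}^2/m^2)(\tfrac1m+s_{n,j}^2/m^2)$ and per-neuron factors $\tfrac{m-1}{2\sqrt m (m-2)^{3/2}}$ and $\tfrac12(1+\sqrt{m/(m-2)})$, is the same computation the paper performs uncentered via the pair function $h(k,l)$ and $\E[h(k,l)h(k,l')\,|\,\mathbf z]$; your remaining check that the $n\ne n'$ pairs do not contribute at order $1/C$ is exactly the paper's verification that $\E[\sqrt C\,\eta_C]=o(1)$.
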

Similar to the implication of Theorem \ref{thm_linear_model} in \eqref{eq_thm_linear_model_ratio}, Theorem 2 implies that the ratios
{\small \begin{equation}
\label{eq_thm_nn_ratio}
    \lim_{d,N\rw\infty} \frac{\E\l[\bigg\|\frac{\partial^2\ell_{\text{CE}}(W,V)}{\partial w_i  \partial w_j^\top}\bigg\|_{\operatorname{F}}^2\r]} {\E\l[\bigg\|\frac{\partial^2\ell_{\text{CE}}(W,V)}{\partial w_i  \partial w_i^\top}\bigg\|_{\operatorname{F}}^2\r]},\quad \lim_{d,N\rw\infty} \frac{\E\l[\bigg\|\frac{\partial^2\ell_{\text{MSE}}(W,V)}{\partial w_i  \partial w_j^\top}\bigg\|_{\operatorname{F}}^2\r]} {\E\l[\bigg\|\frac{\partial^2\ell_{\text{MSE}}(W,V)}{\partial w_i  \partial w_i^\top}\bigg\|_{\operatorname{F}}^2\r]},\quad \lim_{d,N\rw\infty} \frac{\E\l[\bigg\|\frac{\partial^2\ell_{\text{CE}}(W,V)}{\partial v_i  \partial v_j^\top}\bigg\|_{\operatorname{F}}^2\r]} {\E\l[\bigg\|\frac{\partial^2\ell_{\text{CE}}(W,V)}{\partial v_i  \partial v_i^\top}\bigg\|_{\operatorname{F}}^2\r]}
\end{equation}}
vanish at the rate $\mathcal{O}(1/C),\ \mathcal{O}(1/C),\ \mathcal{O}(1/C^2)$, respectively, and the block-diagonal structure in $H_{ww}$ and $H_{vv}$ also emerges as $C$ increases.

The above results rigorously quantify the structure in $H_{ww}$ and $H_{wv}$. As for the ``block-circulant'' structure in the cross-layer component $H_{wv}$  and how it vanishes along training (particularly for CE loss),  we find
that it can be explained directly from Hessian expressions. We have provided an initial analysis in Section \ref{sec_preliminaries} and leave more rigorous analysis as a future direction.

\section{Proof Sketch and Technical Challenges}

Now we \zhaoruirevise{explain} the major technical challenges and \zhaoruirevise{the main ideas in our proofs}. We primarily introduce the proof procedure for Theorem \ref{thm_linear_model}, i.e., linear models with CE loss ({\bf Case 2}).  Despite the simple form of linear models,  we find that it is rather non-trivial to characterize its Hessian structure, and the classical random matrix approaches {\it cannot} be directly applied. After introducing the proof for Theorem \ref{thm_linear_model},  we will discuss how to extend our analysis to Theorem \ref{thm_nn}  ({\bf Case 3} and {\bf 4}).
{\bf Since the complete proof of Theorem \ref{thm_linear_model} and \ref{thm_nn} are  rather  long and technical, we  present them in Section \ref{appendix_thm1} and \ref{appendix_thm2}, \zhaoruirevise{respectively}.}

\paragraph{Challenges for proving Theorem \ref{thm_linear_model}}

We first rewrite the Hessian expression in {\bf Case 2} as follows, and then we discuss why the classical random matrix approaches cannot be directly applied here. Due to the limited space, we only discuss the diagonal blocks $\frac{\partial^2\ell_{\text{CE}}(V)}{\partial v_i  \partial v_i^\top}$.  The same challenges and solutions also apply to the off-diagonal blocks $\frac{\partial^2\ell_{\text{CE}}(V)}{\partial v_i  \partial v_j^\top}$, which we omit here.

\begin{equation}
\label{eq_linear_model_ce_Hessian_rewrite}
    \frac{\partial^2\ell_{\text{CE}}(V)}{\partial v_i  \partial v_i^\top} \overset{\eqref{eq_linear_model_ce_Hessian}}{=} \frac{1}{N}\sum_{n= 1}^N  p_{n,i} (1-p_{n,i}) x_n x_n^\top   :=\frac{1}{N} X_N \Lambda_N X_N^\top \in \mathbb{R}^{d\times d}, 
\end{equation}
where $X_N  = (x_1, \cdots, x_N) \in \mathbb{R}^{d\times N}$,  $\Lambda_N = \operatorname{diag}( p_{1,i} (1-p_{1,i}), \cdots,  p_{N,i} (1-p_{N,i})) \in \mathbb{R}^{N\times N}$, and $p_{n,i} := \frac{\exp({v_i^\top x_n})}{\l({\sum_{c=1}^C\exp({v_c^\top x_n})}\r)}$. Note that both $X_N$ and $\Lambda_N$ are random matrices.  How to characterize $\|\frac{1}{N} X_N \Lambda_N X_N^\top\|_{\operatorname{F}}$? We first recall some classical results in random matrix theory.

\paragraph{Classical results from random matrix theory and why they cannot be directly applied}

We first introduce some basic concepts in random matrix theory.
\begin{itemize}
[topsep=1pt,parsep=1pt,partopsep=1pt, leftmargin=*]
    \item {\bf Eigenvalue distribution.} 
    For a symmetric matrix $A \in \mathbb{R}^{d\times d}$, we define  $\mu_A=\frac{1}{d}\sum_{i=1}^d \delta_{\lambda_i(A)}$ as the empirical eigenvalue distribution of $A$. $\mu_A$ is a probability measure on $\R$ that assigns equal probability $\frac{1}{d}$ to each eigenvalue. Note that when $A$ is a random matrix, $\mu_A$ is a random measure. For a sequence of random matrices $(A_n)_{n=1}^\infty$, we will consider the weak convergence of its eigenvalue distribution $(\mu_{A_n})_{n=1}^{\infty}$.
    \item {\bf Stieltjes transform.} For a probability measure $\nu$ on $\R$,
    The Stieltjes transform of $\nu$ is defined as 
    $$s_\nu(z)=\int_\mathbb{R} \frac{1}{x-z}d\nu(x),\quad z\in \mathbb{C}^+  \textbackslash \operatorname{supp}(\nu).$$
    A probability measure is uniquely characterized by its Stieltjes transform. For a symmetric matrix $A$, we write $s_{\mu_A}(z)$ as $s_{A}(z)$ for short. A sequence of eigenvalue distributions $(\mu_{A_n})_{n=1}^\infty$ converges weakly to a probability measure $\mu$ if and only if 
    $
    s_{A_n}(z)\rw s_{\mu}(z),\ \forall z\in \mb{C}^+.
    $
    \\
    
    Now we notice that $\|A\|_{\operatorname{F}}^2$ is nothing but the 2nd-order moment of $\mu_A$. \zhaoruirevise{Moreover}, we can retrieve the moments of $\mu_A$ from
    $s_A(z)$ by
    
\begin{equation}
\label{eq_stieltjes_power_series}
    s_{A}(z)= -\frac{1}{z} -\frac{m_1}{z^2} -\frac{m_2}{z^3} - \cdots,\quad z\rw\infty,
\end{equation}

    where $m_k = \int_{\mathbb{R}} t^k d\mu_A(t)$ denotes the $k$-th order moment. Therefore, the calculation of $\|A\|_\text{F}^2$ can be achieved by finding the limiting eigenvalue distribution of $A$ as the matrix size tends to infinity, which is a classical topic in random matrix theory. Typically, the random matrices of the type $X_N \Lambda_N X_N^\top$ are closely related to the sample covariance matrices. The case that $\Lambda_N$ is deterministic or independent of $X_N$ has already been deeply studied (e.g., \citet{YaoBook,Baibook,BaiZhou,Pastur67}). For the limiting eigenvalue distribution, we have the following classical result, the generalized Marcenko-Pastur theorem.

\end{itemize}

\begin{prop}
    \label{GMP}\citep{Pastur67}
    Consider random matrices $X_N \in \mathbb{R}^{d\times N}$ with entries i.i.d. with mean 0 and variance 1; and $\Lambda_N \in \mathbb{R}^{N\times N}$ \zhaoruirevise{which is}  either deterministic or independent of $X_N$. Suppose that the eigenvalue distribution of $\Lambda_N$ converges weakly almost surely to a deterministic probability measure $\nu$. Let $A_N=\frac{1}{d}X_N \Lambda_N X_N^\top$, then as $N,d\rw\infty, d/N\rw \gamma\in(0,+\infty)$, $\mu_{A_N}$ converges weakly almost surely to a deterministic probability measure $\mu$. Here $\mu$ is uniquely specified by a functional equation of its Stietjes transform $s(z)$:
    \bea\label{GMPeq}
    s_{\mu}(z)=\frac{1}{\frac{1}{\gamma}\int_{\mathbb{R}}\frac
    {t d\nu(t)}{1+ts_{\mu}(z)}-z},\quad \forall z\in\mathbb{C}^+.
    \eea
\end{prop}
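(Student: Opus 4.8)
The plan is to prove Proposition~\ref{GMP} by the classical resolvent (Stieltjes-transform) method. Fix $z\in\mathbb{C}^+$ and set $R_N(z)=(A_N-zI_d)^{-1}$, so that $s_{A_N}(z)=\frac1d\operatorname{tr}R_N(z)$. Since a probability measure is determined by its Stieltjes transform, and weak convergence of measures on $\mathbb{R}$ is equivalent to pointwise convergence of Stieltjes transforms on $\mathbb{C}^+$, it suffices to show that almost surely $s_{A_N}(z)\to s_\mu(z)$ for every $z\in\mathbb{C}^+$, where $s_\mu$ is the unique Stieltjes transform of a probability measure solving \eqref{GMPeq}. Conditioning on $\Lambda_N$ and passing to an eigenbasis, I may assume $\Lambda_N=\operatorname{diag}(\lambda_1,\dots,\lambda_N)$ with empirical spectral law $\nu_N\Rightarrow\nu$ (this reduction is standard, and in any case all our applications already have diagonal $\Lambda_N$). \textbf{Step 1 (concentration).} Let $\mathcal{F}_k=\sigma(x_1,\dots,x_k)$, with $x_k$ the $k$-th column of $X_N$. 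Replacing one column perturbs $A_N$ by a matrix of rank at most $2$, and $\lvert\operatorname{tr}(A-zI)^{-1}-\operatorname{tr}(B-zI)^{-1}\rvert\le\operatorname{rank}(A-B)/\Im z$, so the Doob martingale $M_k=\mathbb{E}[s_{A_N}(z)\mid\mathcal{F}_k]$ has increments of size $O(1/d)$; Azuma--Hoeffding together with Borel--Cantelli then give $s_{A_N}(z)-\mathbb{E}[s_{A_N}(z)]\to0$ almost surely, so it remains to identify $\lim_{N\to\infty}\mathbb{E}[s_{A_N}(z)]$.

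\textbf{Step 2 (self-consistent equation).} Writing $A_N=\frac1d\sum_{k=1}^N\lambda_k x_kx_k^\top$ and using $A_NR_N=I_d+zR_N$ gives $z\,s_{A_N}(z)+1=\frac1d\operatorname{tr}(A_NR_N)=\frac1{d^2}\sum_{k=1}^N\lambda_k\,x_k^\top R_N x_k$. Let $R_N^{(k)}$ be the resolvent obtained by deleting the $k$-th rank-one term from $A_N$; the Sherman--Morrison identity gives
\[
x_k^\top R_N x_k=\frac{x_k^\top R_N^{(k)}x_k}{1+\tfrac1d\lambda_k\,x_k^\top R_N^{(k)}x_k}.
\]
Since $x_k$ is independent of $R_N^{(k)}$ and $\lVert R_N^{(k)}\rVert\le 1/\Im z$, concentration of quadratic forms gives $\frac1d x_k^\top R_N^{(k)}x_k=\frac1d\operatorname{tr}R_N^{(k)}+o(1)=s_{A_N}(z)+o(1)$ (deleting a rank-one term changes the normalized trace by $O(1/d)$). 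Hence $x_k^\top R_Nx_k\approx\dfrac{d\,s_{A_N}(z)}{1+\lambda_k s_{A_N}(z)}$ and
\[
z\,s_{A_N}(z)+1\;\approx\;\frac{N}{d}\,s_{A_N}(z)\int_{\mathbb{R}}\frac{t\,d\nu_N(t)}{1+t\,s_{A_N}(z)}.
\]
Letting $d,N\to\infty$ with $d/N\to\gamma$ and $\nu_N\Rightarrow\nu$, any subsequential limit $s$ of $\mathbb{E}[s_{A_N}(z)]$ satisfies $z s+1=\frac1\gamma\,s\int\frac{t\,d\nu(t)}{1+ts}$, equivalently $s=\Big(\frac1\gamma\int\frac{t\,d\nu(t)}{1+ts}-z\Big)^{-1}$, i.e.\ \eqref{GMPeq}.

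\textbf{Step 3 (uniqueness and conclusion).} It remains to invoke the standard fact (Marchenko--Pastur; Silverstein--Choi) that \eqref{GMPeq} has a \emph{unique} solution $s_\mu(z)$ in the class of Stieltjes transforms of probability measures — e.g.\ by showing the defining map is a strict contraction on a suitable subregion of $\mathbb{C}^+$ and then continuing analytically — and that this $s_\mu$ is indeed such a transform, which defines the deterministic limit $\mu$. Combined with Steps 1 and 2, $s_{A_N}(z)\to s_\mu(z)$ a.s.\ for each fixed $z\in\mathbb{C}^+$; applying this on a countable dense subset and using that the $s_{A_N}$ are uniformly bounded analytic functions (Vitali/Montel) upgrades the convergence to all of $\mathbb{C}^+$, so $\mu_{A_N}\Rightarrow\mu$ almost surely.

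The hard part is making Step~2 rigorous. One must control, with bounds summable over $k\in[N]$, both the fluctuation of the quadratic form $\frac1d x_k^\top R_N^{(k)}x_k$ around $\frac1d\operatorname{tr}R_N^{(k)}$ and the deviation of $\frac1d\operatorname{tr}R_N^{(k)}$ from $s_{A_N}(z)$; and one must keep each denominator $1+\tfrac1d\lambda_k\,x_k^\top R_N^{(k)}x_k$ bounded away from $0$, which requires a priori estimates on these quantities — clean when $\Lambda_N\succeq0$, as in our application where $\lambda_k=p_{n,i}(1-p_{n,i})\in[0,1/4]$, and handled by analytic continuation in the indefinite case. Finally, were the entries of $X_N$ only assumed to have finite variance rather than being Gaussian, a preliminary truncation and recentering of the entries would be needed so that both the Azuma bound and the quadratic-form concentration apply; this step is unnecessary in our setting.
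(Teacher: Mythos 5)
Proposition~\ref{GMP} is not proved in the paper at all: it is quoted as a classical result of \citet{Pastur67} (the generalized Marchenko--Pastur theorem), and the paper's own technical work starts only after it, in Lemma~\ref{decoupling}, where the dependence between $X_N$ and $\Lambda_N$ is removed so that Proposition~\ref{GMP} becomes applicable. So there is no in-paper proof to compare against; what you have written is the standard resolvent proof of the cited theorem (Marchenko--Pastur; Silverstein; Bai--Silverstein): martingale concentration of $\frac1d\operatorname{tr}R_N(z)$, a leave-one-out/Sherman--Morrison derivation of the self-consistent equation, and uniqueness of the solution \`a la Silverstein--Choi. Your outline is sound and the self-consistent equation you obtain, $zs+1=\frac{1}{\gamma}\,s\int\frac{t\,d\nu(t)}{1+ts}$, matches \eqref{GMPeq} after rearranging, and you correctly identify where the real work lies (uniform control of the quadratic-form fluctuations and of the denominators $1+\frac{\lambda_k}{d}x_k^\top R_N^{(k)}x_k$).

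Two caveats if this sketch were to stand as a proof of the proposition as stated rather than of the special case used in the paper. First, the opening reduction ``pass to an eigenbasis of $\Lambda_N$'' is only innocuous when the entries of $X_N$ are orthogonally invariant (Gaussian) or $\Lambda_N$ is already diagonal: for general i.i.d. entries and a non-diagonal symmetric $\Lambda_N$, the rotated matrix $X_NU$ no longer has i.i.d. entries and the column-wise leave-one-out argument does not apply verbatim; handling general $\Lambda_N$ (e.g.\ via $\Lambda_N^{1/2}$ when $\Lambda_N\succeq0$, or the arguments in Silverstein's and Pastur's original treatments) is part of the theorem. Second, the proposition assumes only mean zero and variance one, so the truncation--recentering step you defer is genuinely required for the stated result; it is only ``unnecessary'' for the paper's application, where Assumption~\ref{assum_1} makes the entries Gaussian and $\Lambda_N$ is diagonal. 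With those two points made explicit, your argument is a faithful reconstruction of the cited classical proof.
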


Unfortunately, Proposition \ref{GMP} can {\it not} be directly applied to our case. This is because Proposition \ref{GMP} requires  $\Lambda_N$ and $X_N$ to be independent, while  $\Lambda_N$ and $X_N$ 
 in \eqref{eq_linear_model_ce_Hessian_rewrite} are clearly dependent.

\paragraph{Key observation: asymptotic independence} For our matrix of interests \eqref{eq_linear_model_ce_Hessian_rewrite}, although $\Lambda_N \in \mathbb{R}^{N\times N}$ and $X_N \in  \mathbb{R}^{d\times N}$ are not  independent for any fixed $d$, we observe that they are {\it asymptotically independent} as $d\rightarrow \infty$. This is because: 
\begin{itemize}
[topsep=1pt,parsep=1pt,partopsep=1pt, leftmargin=*]
    \item Recall $v_i \sim \mathcal{N}(0,\frac{1}{d})$ and denote  $z_n = v_i^\top x_n$, then $z_n |x_n \sim \mathcal{N}(0,\frac{\|x_n\|^2_2}{d})$. Further, since $x\sim  \mathcal{N}(0,1)$, by the law of large number, $\|x\|_2^2/d\rw1$ almost surely.
    \item As such, $z_n$ converges in distribution to $\mathcal{N}(0,1)$. Thus $z_n$ and $x_n$ are asymptotically independent. This suggests that the dependence between $\Lambda_N$ and $X_N$ may be weak as $d,N\rw\infty$. 
\end{itemize}

Therefore, as $d,N \rightarrow \infty$, it seems possible to obtain the same limiting eigenvalue distribution as in Proposition \ref{GMP} for our matrix \eqref{eq_linear_model_ce_Hessian_rewrite}.
To establish it in mathematics, one possible path is from free probability theory \citep{Mingobook}, proving the asymptotic freeness between $\Lambda_N$ and $X_N$ \citep{CollinsFreeness}. We will instead take a different path in this work.

\paragraph{Our solutions.} We use a rather classical decoupling technique, motivated by {\it the Lindeberg \zhaoruirevise{ interpolation} principle}. The  Lindeberg principle is originally an elegant proof for the Central Limit Theorem (CLT) \citep{Lindeberg1922}, by replacing the random variables with Gaussian ones incrementally and proving the impact is negligible under certain conditions. The Lindeberg principle is also applicable for random matrices \citep{Chatterjee2006,gotze2015asymptotic,pastur2020random}. We find that such methods \zhaoruirevise{are} useful for handling asymptotic independence in our case.

We now illustrate our proof strategy. The  idea is to first decouple and then apply Proposition \ref{GMP}.

\begin{itemize}
[topsep=1pt,parsep=1pt,partopsep=1pt, leftmargin=*]
    \item {\bf Step 1.} For our matrix \eqref{eq_linear_model_ce_Hessian_rewrite} (denoted as ${H}_{ii}^{\text{CE}}$),
we introduce the decoupled matrix
\be
\wt{H}_{ii}^{\text{CE}} = \frac{1}{N}\sum_{n= 1}^N  \wt{p}_{n,i} (1-\wt{p}_{n,i}) x_n x_n^\top,\quad \wt{p}_{n,i}:=\frac{\exp({v_i^\top \wt{x}_n})}{\sum_{c=1}^C\exp({v_c^\top \wt{x}_n})},
\ee
where $\wt{X}_N=(\wt{x}_1,\cdots,\wt{x}_N)\in\R^{d\times n}$ is an independent copy of $X_N$. The goal is to prove that 
$$
\lim_{N\rw\infty} \l(s_{{H}_{ii}^{\text{CE}}}(z)-s_{\wt{H}_{ii}^{\text{CE}}}(z)\r)=0,\quad \text{a.s.}\quad \forall z\in \mb{C}^+.
$$
From standard measure concentration results, $s_{{H}_{ii}^{\text{CE}}}(z),s_{\wt{H}_{ii}^{\text{CE}}}(z)$ concentrates around their means as $N\rw\infty$. Therefore, it suffices to prove that
$\lim_{N\rw\infty} \l(\E[s_{{H}_{ii}^{\text{CE}}}(z)]-\E[s_{\wt{H}_{ii}^{\text{CE}}}(z)]\r)=0$.
\item {\bf Step 2.} Following  the Lindeberg principle, we 
define the \zhaoruirevise{matrix interpolation process} 
$$
X_N(t)=\sqrt{t} X_N + \sqrt{1-t}\wt{X}_N,\quad t\in [0,1].
$$
\zhaoruirevise{Note that $X_N(1)=X_N$ and $X_N(0)=\wt{X}_N$.} We then define
$$
{H}_{ii}^{\text{CE}}(t) = \frac{1}{N}\sum_{n= 1}^N  {p}_{n,i}(t) (1-{p}_{n,i}(t)) x_n x_n^\top,\quad \text{ where }   {p}_{n,i}(t):=\frac{\exp({v_i^\top {x}_n(t)})}{\sum_{c=1}^C\exp({v_c^\top {x}_n(t)})}.
$$
Then

\be \label{eq_integrand}
\E[s_{{H}_{ii}^{\text{CE}}}(z)]-\E[s_{\wt{H}_{ii}^{\text{CE}}}(z)]=\int_0^1\E\l[\frac{d}{dt}s_{{H}_{ii}^{\text{CE}}(t)}\r]dt.
\ee
\item {\bf Step 3.} Then we calculate the integrand in \eqref{eq_integrand} using Stein's  Lemma:   for  $Z\sim \ml{N}(0,1)$ and differentiable function $f:\mb{R}\rw\mb{C}$ with sub-exponential decay at infinity, we have: 
\bea\label{SteinLemma_proof_sketch}
    \E[Zf(Z)]= \E[f^\prime(Z)].
    \eea
    We then prove that the r.h.s. of  \eqref{SteinLemma_proof_sketch} decays to zero at rate  $\mathcal{O}(1/\sqrt{N})$, and thus ${H}_{ii}^{\text{CE}}$ shares the same limiting eigenvalue distribution as $\wt{H}_{ii}^{\text{CE}}$. 
 Note that Stein's Lemma requires Gaussian conditions in Assumption \ref{assum_1}. We refer to Appendix A.6 of \citep{Talagrandbook} for the proof of Stein's Lemma.
\item {\bf Step 4.} Apply Proposition \ref{GMP}. The decoupled matrix $\wt{H}_{ii}^{\text{CE}}$ has the type $X_N\wt{\Lambda}_NX_N^\top$ where $\wt{\Lambda}_N$ is independent of $X_N$. Then Proposition \ref{GMP} is applicable to obtain the limiting eigenvalue distribution of $\wt{H}_{ii}^{\text{CE}}$. Then we apply the expansion \eqref{eq_stieltjes_power_series} in the functional equation \eqref{GMPeq} to get the limiting second moment of $\mu_{\wt{H}_{ii}^{\text{CE}}}$, which is also the limit of $\|{H}_{ii}^{\text{CE}}\|_\text{F}^2$. This concludes  the proof.
\end{itemize}

\paragraph{Challenges for the hidden-layer Hessian in 1-hidden-layer networks.} We extend our analysis of linear models to 1-hidden-layer networks. Similar as before, We primarily discuss the diagonal blocks of the Hessian.  The same challenges and solutions also apply to the off-diagonal blocks, which we omit here. We first discuss the hidden-layer Hessian. 
\begin{equation}
\label{eq_nn_ce_Hessian_rewrite_w}
  \frac{\partial^2\ell_{\text{CE}}(W,V)}{\partial w_i  \partial w_i^\top} = \frac{1}{N}\sum_{n= 1}^N  \underbrace{\left(\sum_{c=  1}^C p_{n,c} v_{c,i}^2 - \left(\sum_{c=1}^C p_{n,c} v_{c,i}\right)^2\right)}_{(a)}   \underbrace{\mathbf{1}(w_i^\top x_n >0)}_{(b)} x_n x_n^\top := \frac{1}{N} X_N \Theta_N X_N^\top, 
\end{equation}
Similar to Part c), we provide observations that may indicate the weak dependence between $X_N$ and $\Theta_N$ as $d,N\rw\infty$. 
We start with $(b)$ first. Recall $w_i \sim \mathcal{N}(0,\frac{1}{d})$ and denote  $z_{n,i} = w_i^\top x_n$. Following the same argument as in the linear model case, $z_{n,i}$ is asymptotically independent of $x_n$, hence $\mathbf{1} (z_{n,i}>0)$ is also \zhaoruirevise{asymptotically} independent of $x_n$ for all $n \in [N]$. Therefore,  $(b)$ and $x_n$ are asymptotically independent. For $(a)$, denote $y_{n,c} = (z_{n,1}, \cdots, z_{n,m})^\top v_c$. Since $(z_{n,1}, \cdots, z_{n,m})^\top$ is independent of $x_n$ as $d\rightarrow \infty$, so are $y_{n,c}$ and $p_{n,c}$. As such, $(a)$ and $x_n$ are asymptotically independent.  

Therefore, $(a)$ and $(b)$ are asymptotically independent of $x_N$, then the dependence between $X_N$ and $\Theta_N$ may be asymptotically weak.  Thus, it is reasonable to apply the same decoupling technique here. A similar procedure also applies to the hidden weights with MSE loss.

\paragraph{Challenges for the output-layer Hessian in 1-hidden-layer networks.} 
Now we discuss the different challenges for the output-layer Hessian. We rewrite  $\frac{\partial^2\ell_{\text{CE}}(W,V)}{\partial v_i  \partial v_i^\top}$ as follows. 
\begin{equation}
\label{eq_nn_ce_Hessian_rewrite_v}
  \frac{\partial^2\ell_{\text{CE}}(W,V)}{\partial v_i  \partial v_i^\top} = \frac{1}{N} \sum_{n= 1}^N  p_{n,i} (1-p_{n,i}) \sigma(Wx_n) \sigma(Wx_n)^\top  := \frac{1}{N} F_N \Lambda_N F_N^\top, 
\end{equation}
where $F_N = (\sigma(Wx_1), \cdots, \sigma(Wx_N)) \in \mathbb{R}^{m\times N}$. 
We highlight two major difference with the $X_N \Lambda_N X_N^\top$ in linear models \eqref{eq_linear_model_ce_Hessian_rewrite} and the hidden weights in 1-hidden-layer networks in \eqref{eq_nn_ce_Hessian_rewrite_w}.

\begin{itemize}
[topsep=1pt,parsep=1pt,partopsep=1pt, leftmargin=*]
\item {\bf First,} in \zhaoruirevise{the}  previous cases 
\eqref{eq_linear_model_ce_Hessian_rewrite} and \eqref{eq_nn_ce_Hessian_rewrite_w}, the matrices have growing dimension. Now the matrix in \eqref{eq_nn_ce_Hessian_rewrite_v} has fixed dimension $m$, which is away from the standard setting of random matrix theory.
\item {\bf Second,} the matrices $F_N$ and $\Lambda_N$ have more complicated dependence structure. The dependence between $Z_N$ and $\Lambda_N$ is caused by both $W$ and $X_N$, which means that we need to decouple $\Lambda_N$ with $\{W,X_N\}$ at the same time. Meanwhile, in  previous cases 
\eqref{eq_linear_model_ce_Hessian_rewrite} and \eqref{eq_nn_ce_Hessian_rewrite_w}, we only need to decouple $\Lambda_N$ with $X_N$.
\end{itemize}

To tackle the above challenges, we choose to handle \eqref{eq_nn_ce_Hessian_rewrite_v} using a largely different approach from above. We consider fixed $m$ and conduct the following steps.

\begin{itemize}
[topsep=1pt,parsep=1pt,partopsep=1pt, leftmargin=*]
\item {\bf Step 1.} Replace $WX_N$ with $Z_N$, where $Z_N\in \R^{m\times N}$ has i.i.d. $N(0,1)$ entries. This can be done by letting $d\rightarrow \infty$ and applying the Lindeberg principle. This step decouples $\Lambda_N$ with $H_N$.  

\item {\bf Step 2.} Calculate the expectation of the entry-wise second moment of the Hessian matrices. Note that the decoupling in Step 1 is essential, otherwise, the calculation in Step 2 would be complicated.  
\end{itemize}

\section{Proofs of the main theorems}

\subsection{Proof of Theorem \ref{thm_linear_model}}
\label{appendix_thm1}

\zhaoruirevise{Before delving into the proof, we first present the functions $g_{ii},\ g_{ij}$ in Theorem \ref{thm_linear_model}. Let $\mf{z}_C=(Z_1,\cdots,Z_C)\sim \ml{N}_C(0,I_C)$, define$$h_1(\mf{z}_C)=\frac{e^{Z_1}}{\sum_{l=1}^Ce^{Z_l}}\l(1-\frac{e^{Z_1}}{\sum_{l=1}^Ce^{Z_l}}\r),\quad h_2(\mf{z}_C)=\frac{e^{Z_1+Z_2}}{\l(\sum_{l=1}^Ce^{Z_l}\r)^2}, $$
then $$g_{ii}(\gamma,C):=\gamma\E[h_1(\mf{z}_C)^2]+(\E[h_1(\mf{z}_C)])^2,\quad g_{ij}(\gamma,C):=\gamma\E[h_2(\mf{z}_C)^2]+(\E[h_2(\mf{z}_C)])^2.$$}

We now introduce some notations that will be used in the proof. Let $\wt{X}_N=(\wt{x}_1,\cdots,\wt{x}_N) \in \mathbb{R}^{d\times N}$ be an independent copy of $X_N  \in \mathbb{R}^{d\times N}$. Denote
$$p_{i}(x) := \frac{\exp({v_i^\top x})}{\sum_{c=1}^C\exp({v_c^\top x})},\quad x\in \R^d,$$
$$
\alpha_n=p_i(x_n)(1-p_i(x_n)),\quad \Lambda_n=\diag(\alpha_1,\cdots,\alpha_N),
$$
$$
\wt{\alpha}_n=p_i(\wt{x}_n)(1-p_i(\wt{x}_n)),\quad \wt{\Lambda}_n=\diag(\wt{\alpha}_1,\cdots,\wt{\alpha}_N),
$$
$$
\beta_n=-p_i(x_n)p_j(x_n),\quad \Gamma_n=\diag(\beta_1,\cdots,\beta_N),
$$
$$
\wt{\beta}_n=-p_i(\wt{x}_n)p_j(\wt{x}_n),\quad \wt{\Gamma}_n=\diag(\wt{\beta}_1,\cdots,\wt{\beta}_N).
$$
To ease notations, we write $H_{ii}=\frac{\partial^2\ell_{\text{CE}}(V)}{\partial v_i  \partial v_i^\top} \in \mathbb{R}^{d\times d},\ H_{ij}=\frac{\partial^2\ell_{\text{CE}}(V)}{\partial v_i  \partial v_j^\top} \in \mathbb{R}^{d\times d}$, then
$$H_{ii}=\frac{1}{N}X_N\Lambda_NX_N^\top,\quad  H_{ij}=\frac{1}{N}X_N\Gamma_NX_N^\top.$$
Similarly, we define
$$
\wt{H}_{ii}=\frac{1}{N}X_N\wt{\Lambda}_NX_N^\top, \quad \wt{H}_{ij}=\frac{1}{N}X_N\wt{\Gamma}_NX_N^\top.
$$

Now we prove Theorem \ref{thm_linear_model}. The proof consists of two steps. First, we ``decouple" $X_N$, $\Lambda_N$ and $\Gamma_N$.  That is, we prove that ${H}_{ii}$ and $\wt{H}_{ii}$ share the same Stieltjes transform as $N$ and $d$ grow proportionally to infinity. Similarly for ${H}_{ij}$ and $\wt{H}_{ij}$. Second, with the help of Proposition \ref{GMP}, we find the second moments of limit eigenvalue distribution of  $\wt{H}_{ii}$ and $\wt{H}_{ij}$. Recall that the second moment is the Frobenius norm, so the proof is concluded. 

We now``decouple" $X_N$, $\Lambda_N$ and $\Gamma_N$ using the following Lemma \ref{decoupling}.

\begin{lem}\label{decoupling}
For any $z\in \mb{C}^+$, as $d,N\rw \infty,\ d/N\rw\gamma\in (0,+\infty)$, it holds almost surely that
\bea
s_{H_{ii}}(z)-s_{\wt{H}_{ii}}(z)=O\l(N^{-\frac{1}{2}}\r),
\eea
\bea
s_{H_{ij}}(z)-s_{\wt{H}_{ij}}(z)=O\l(N^{-\frac{1}{2}}\r).
\eea

\end{lem}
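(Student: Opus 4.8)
The plan is to implement the four-step Lindeberg interpolation strategy outlined in the proof sketch, working entirely at the level of the Stieltjes transform. First I would reduce the almost-sure statement to a statement about expectations: since $s_{H_{ii}}(z) = \frac{1}{d}\operatorname{tr}(H_{ii}-zI)^{-1}$ is a Lipschitz function (with $z$-dependent constant) of the i.i.d.\ Gaussian entries of $X_N$ (and $V$), the Gaussian concentration inequality gives $|s_{H_{ii}}(z) - \E[s_{H_{ii}}(z)]| = O(N^{-1/2})$ almost surely, and likewise for $\wt H_{ii}$; the same holds for $H_{ij}, \wt H_{ij}$. Hence it suffices to show $\E[s_{H_{ii}}(z)] - \E[s_{\wt H_{ii}}(z)] = O(N^{-1/2})$. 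Then I would introduce the interpolating data matrix $X_N(t) = \sqrt{t}\,X_N + \sqrt{1-t}\,\wt X_N$, form $H_{ii}(t) = \frac{1}{N}X_N(t)\Lambda_N(t)X_N(t)^\top$ where $\Lambda_N(t)$ is built from $p_{n,i}(t) = p_i(x_n(t))$, and write the telescoping identity $\E[s_{H_{ii}}(z)] - \E[s_{\wt H_{ii}}(z)] = \int_0^1 \frac{d}{dt}\E[s_{H_{ii}(t)}(z)]\,dt$. The key point here is that $X_N(t)$ still has i.i.d.\ $\mathcal N(0,1)$ entries for every fixed $t$, so Stein's Lemma applies to each entry.

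The core of the argument is bounding the integrand $\frac{d}{dt}\E[s_{H_{ii}(t)}(z)]$ uniformly in $t$ by $O(N^{-1/2})$. Differentiating, $\frac{d}{dt}x_{k\ell}(t) = \frac{1}{2\sqrt t}(X_N)_{k\ell} - \frac{1}{2\sqrt{1-t}}(\wt X_N)_{k\ell}$, so by the chain rule $\frac{d}{dt}s_{H_{ii}(t)}(z)$ is a sum over entries $(k,\ell)$ of terms of the form $\big(\tfrac{1}{2\sqrt t}(X_N)_{k\ell} - \tfrac{1}{2\sqrt{1-t}}(\wt X_N)_{k\ell}\big)\cdot \partial_{x_{k\ell}} s_{H_{ii}(t)}(z)$. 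Taking expectations and applying Stein's Lemma to integrate out each Gaussian entry $(X_N)_{k\ell}$ (resp.\ $(\wt X_N)_{k\ell}$) converts this into $\E$ of second-order partial derivatives $\partial^2_{x_{k\ell}} s_{H_{ii}(t)}(z)$; the $1/\sqrt t$ and $1/\sqrt{1-t}$ prefactors combine with the Gaussian integration so that the singularities at $t=0,1$ cancel in the standard way, leaving an integrable bound. The crucial structural observation is that $\partial_{x_{k\ell}}$ hits the Hessian $H_{ii}(t)$ in two places: through the quadratic $X_N(t)\Lambda_N(t)X_N(t)^\top$ factor (the ``usual'' Marchenko--Pastur-type derivative, which is $O(1)$), and through the dependence of $\Lambda_N(t)$ on $x_n(t)$ via $p_i$. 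For the second channel I would use that $\partial p_i(x_n(t))/\partial x_{k\ell}(t)$ is proportional to an entry of $v_i$ or $v_c$, which under LeCun initialization has magnitude $O(1/\sqrt d)$; this extra $1/\sqrt d = O(1/\sqrt N)$ factor is precisely what makes the ``dependent'' contribution to the integrand negligible. Combining the resolvent bounds $\|(H_{ii}(t)-zI)^{-1}\| \le 1/\Im(z)$, the a priori bounds $0 \le \alpha_n \le 1/4$, and the moment bounds on $\|x_n\|$ and on entries of $V$, each term in the sum over $(k,\ell)$ is $O(N^{-1}\cdot N^{-1/2})$ or better after accounting for the $d\cdot N$ entries, yielding the claimed $O(N^{-1/2})$.

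The analogous computation for $H_{ij}$ versus $\wt H_{ij}$ is identical in structure, with $\alpha_n = p_i(x_n)(1-p_i(x_n))$ replaced by $\beta_n = -p_i(x_n)p_j(x_n)$; the derivative $\partial \beta_n/\partial x_{k\ell}$ again carries an $O(1/\sqrt d)$ factor from the Gaussian weights, so the same bound goes through verbatim. I expect the main obstacle to be the bookkeeping in Step 3: one must carefully expand $\partial^2_{x_{k\ell}} s_{H_{ii}(t)}(z)$, which produces several terms (products of resolvents with one or two derivatives of $\Lambda_N(t)$, plus a ``diagonal'' term where both derivatives hit the same $x_n$), and verify that \emph{every} term either carries the saving $1/\sqrt d$ from differentiating $p_i$, or is one of the classical terms that telescopes to zero in the Marchenko--Pastur derivation and hence does not need the saving. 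A secondary technical point is justifying the use of Stein's Lemma, which formally requires sub-exponential decay of the relevant functions of the Gaussian variables; here one truncates on the event $\{\|x_n\|^2 \le (1+\epsilon)d\ \forall n\}$, which has overwhelming probability, and controls the complement crudely. Once the integrand bound is established, integrating over $t\in[0,1]$ and combining with the concentration step from the first paragraph completes the proof.
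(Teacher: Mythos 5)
Your overall architecture (Gaussian concentration to reduce to expectations, then a Lindeberg interpolation with Stein's lemma, with the $O(1/\sqrt d)$ saving coming from the entries of $V$ appearing when $p_i$ is differentiated) matches the paper's proof, but the interpolation itself is set up incorrectly, and this is a genuine gap rather than a bookkeeping issue. You interpolate the \emph{outer} data matrix as well as the argument of $p_i$: you define $H_{ii}(t)=\frac{1}{N}X_N(t)\Lambda_N(t)X_N(t)^\top$ with $X_N(t)=\sqrt{t}X_N+\sqrt{1-t}\wt X_N$. At $t=0$ this gives $\frac{1}{N}\wt X_N\wt\Lambda_N\wt X_N^\top$, in which the diagonal weights are built from the \emph{same} matrix $\wt X_N$ that forms the quadratic factor; this is an exact distributional copy of the coupled matrix $H_{ii}$, not the decoupled matrix $\wt H_{ii}=\frac{1}{N}X_N\wt\Lambda_N X_N^\top$. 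Consequently your telescoping identity $\E[s_{H_{ii}}(z)]-\E[s_{\wt H_{ii}}(z)]=\int_0^1\frac{d}{dt}\E[s_{H_{ii}(t)}(z)]\,dt$ is false: the right-hand side equals $\E[s_{H_{ii}(1)}(z)]-\E[s_{H_{ii}(0)}(z)]=0$ identically, so the interpolation compares $H_{ii}$ with itself and says nothing about the decoupling. The paper avoids this by freezing the outer factor and interpolating \emph{only} inside the weights: $H_{ii}(t)=\frac{1}{N}X_N\Lambda_N(t)X_N^\top$ with $\alpha_n(t)=p_i(x_n(t))(1-p_i(x_n(t)))$, so that $t=1$ gives $H_{ii}$ and $t=0$ gives exactly $\wt H_{ii}$, and the only derivative channel is through $p_i$, which automatically carries the factor $V_{ls}\sim\mathcal N(0,\tfrac1d)$.

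This endpoint error also undermines your treatment of the ``usual'' channel where $\partial_{x_{k\ell}}$ hits the outer quadratic factor: you assert these terms are harmless because they ``telescope to zero as in the Marchenko--Pastur derivation,'' but in the classical Lindeberg proof such cancellation relies on matching moments between the two independent ensembles being compared, and here the weights $\Lambda_N(t)$ are correlated with the interpolated entries, so no such cancellation is available without an explicit argument. In the paper's construction this problem simply does not arise, because the outer $X_N$ does not depend on $t$; the derivative of $s_{H_{ii}(t)}(z)$ only involves $\frac{d}{dt}\Lambda_N(t)$, and after Stein's lemma every resulting term ($\Delta_{N,1}$, $\Delta_{N,2}$ in the paper) is bounded using $|B_{ln}(t)|\le\frac19$, $|\alpha_n(t)|\le\frac14$, $\|\ml G_N\|\le 1/\zeta$ and $V_{ls}\sim\mathcal N(0,\tfrac1d)$, giving $O(N^{-1/2})$ uniformly on $D_\zeta$ (with a Cauchy-integral step to pass from $\Delta_N$ to $\delta_N$, a detail your sketch also omits). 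To repair your proposal you would need to replace your interpolation path by the paper's, or else insert a separate argument showing that replacing the outer $X_N(t)$ by $X_N$ changes nothing — which is essentially the lemma you are trying to prove.
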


\begin{proof}
Here, we only present the proof for $s_{H_{ii}}$. The proof for $s_{H_{ij}}$ is done following the same procedure.
    
    For $t\in [0,1]$, let 
    $$
    X_N(t)=\sqrt{t}X_N+\sqrt{1-t}\wt{X}_N.
    $$
    Then $X_N(t)=(x_1(t),\cdots,x_N(t)) \in\mathbb{R}^{d\times N}$, where
    $$
    x_n(t)=\sqrt{t}x_n+\sqrt{1-t}\wt{x}_n,\quad n\in [N].
    $$
    Denote
    $$
    {\alpha}_n(t)=p_i\big(x_n(t)\big)\l[1-p_i\big(x_n(t)\big)\r],
    $$
    $$\Lambda_N(t)=\operatorname{diag}(\alpha_1(t),\cdots,\alpha_N(t)),$$
    $$
    H_{ii}(t)=\frac{1}{N}X_N\Lambda_N(t)X_N^\top,$$
    $$\ml{G}_N(z,t)=(H_{ii}(t)-zI_{d\times d})^{-1} \in \mathbb{R}^{d\times d},
    $$

    By the definition of $s_{H_{ii}}$, it is easy to see that $s_{H_{ii}} = \frac{1}{d}\operatorname{tr}\big(\ml{G}_N(z,t)\big)$.

       We first prove that $s_{H_{ii}}(z), s_{\wt{H}_{ii}}(z)$ concentrate around their mean. 
    By treating $s_{H_{ii}}(z), s_{\wt{H}_{ii}}(z)$ as Lipchitz functions of the Gaussian vectors $v_1,\cdots,v_C,x_1,\cdots,x_N$, we have the following results from Talagrand's inequality,
    $$
    \mf{P}\l(|s_{H_{ii}}(z)-\E[s_{{H}_{ii}}(z)]|\geq t\r)\leq c_1 e^{-pc_2t^2},
    $$
    $$
    \mf{P}\l(|s_{\wt{H}_{ii}}(z)-\E[s_{\wt{H}_{ii}}(z)]|\geq t\r)\leq \wt{c}_1 e^{-d\wt{c}_2t^2},
    $$
    where $t>0$ and constants $c_1,c_2,\wt{c}_1,\wt{c}_2>0$. Then from the Borel-Cantelli Lemma,
    \bea\label{tg13}
    s_{H_{ii}}(z)-\E[s_{{H}_{ii}}(z)]&\stackrel{a.s.}{\rw}0,\\
    s_{\wt{H}_{ii}}(z)-\E[s_{\wt{H}_{ii}}(z)]&\stackrel{a.s.}{\rw}0.
    \eea
    Now we prove that
    \bea\label{tg12}
    \delta_N(z)=\E[s_{H_{ii}}(z)]-\E[s_{\wt{H}_{ii}}(z)]=O\l(N^{-\frac{1}{2}}\r).
    \eea
    Recall for any function $A(t)$ valued in invertible matrices, we have
    \begin{equation}
    \label{eq_inverse_deriviative}
            \frac{d}{dt}A^{-1}(t)=-A^{-1}(t)\frac{d}{dt}A(t)A^{-1}(t).
    \end{equation}

    Then we have
    \bea\label{derivativeIntegral}
    \delta_N(z)& =\frac{1}{d}\E\l[\operatorname{tr}\big(\ml{G}_N(z,1)-\ml{G}_N(z,0)\big)\r]\\
    &=\frac{1}{d}\int_0^1 \frac{d}{dt}\E\l[\operatorname{tr}\left(\ml{G}_N(z,t)\right)\r]dt\\
    &=\frac{1}{d}\int_0^1 \frac{d}{dt}\E\l[\operatorname{tr}\left(\left(H_{ii}(t) -z I_{d\times d}\right)^{-1}\right)\r]dt\\
    &\overset{\eqref{eq_inverse_deriviative}}{=}-\frac{1}{d}\int_0^1 \E\l[\operatorname{tr}\l(\ml{G}_N(z,t)^2\frac{d}{dt}H_{ii}(t)\r)\r] dt\\
    &=-\frac{1}{dN}\int_0^1\E\l[\operatorname{tr}\l(X_N^\top\ml{G}_N(z,t)^2X_N\frac{d}{dt}\Lambda_N(t)\r)\r]dt, 
    \eea
    Define 
    $$
    \Delta_N(z)=-\frac{1}{dN}\sum_{n=1}^N \int_0^1 \E\l[\big(X_N^\top\ml{G}_N(z,t)X_N\big)_{nn}\frac{d}{dt}\alpha_n(t)\r]dt,
    $$
    then we have 
    
    $$\delta_N(z) \overset{\eqref{eq_inverse_deriviative}}{=}\frac{d}{dz}\Delta_N(z).$$

   \yushunrevise{We now bound $\delta_N(z)$ by bounding $\Delta_N(z)$. We 
    first define $D_{\zeta} = \left\{z | z\in \mb{C}^+, \Im z \geq \zeta >0 \right\}$, where $\Im z$ denotes the image part of $z$. 
    Since all eigenvalues of $H_{ii}(t) \in \mb{R}$, $\Delta_N(z)$ is an analytic function in  $D_{\zeta} $.  Based on  Cauchy's integral formula, for any $z  \in D_{\zeta} $ and arbitrary circle $\gamma \in D_{\zeta} $ containing $z$. we  have}
    $$
    \delta_N(z) = \frac{d}{dz}\Delta_N(z) = \frac{1}{2 \pi i} \oint_\gamma \frac{\Delta_N(s) }{(s-z)^{2}} d s.
    $$
    
    Then we have:
    
    $$\delta_N(z)  \leq \frac{\operatorname{length}(\gamma)}{2 \pi}  \max_{s\in \gamma} \frac{1}{(s-z)^2}  \max_{s\in \gamma} \left|\Delta_N(s)\right| \leq \operatorname{Const.} \max_{z\in D_\zeta } \left| \Delta_N(z) \right|,$$
    
    where $\operatorname{Const.} $ is some positive constant. Now we aim to prove the following equation:
    
    \be \label{tg1}
    \max_{z\in D_\zeta } \left| \Delta_N(z) \right| = O\l(N^{-\frac{1}{2}}\r)
    \ee    
    
    If this is true, then $|\delta_N(z)| = O(N^{-\frac{1}{2}})$ for all $z \in D_\zeta$. Let $\zeta \rightarrow 0$, we will get $|\delta_N(z)|  = O(N^{-\frac{1}{2}})$, which converges to 0 (pointwise) as $N \rightarrow \infty$.      In the following analysis, we aim to prove \eqref{tg1}. We first rewrite $\Delta_N(z)$ as follows.

    \be \label{eq_delta_z}
    \Delta_N(z)=-\frac{1}{2dN}\sum_{n=1}^N\sum_{l=1}^C\sum_{s=1}^d\int_0^1\E\l[x_n^\top \ml{G}_N(z,t)x_n B_{ln}(t)V_{ls}\l(\frac{X_{sn}}{\sqrt{t}}-\frac{\wt{X}_{sn}}{\sqrt{1-t}}\r)\r]dt,
    \ee
    where
    $$
    B_{ln}(t)=\Big(1-2p_i(x_n(t))\Big)p_i(x_n(t))\Big(\delta_{il}-p_l(x_n(t))\Big),
    $$
    and $X_{sn}$ is the abbreviation for the $(s,n)$-th entry in $X_N$. Similar abbreviation also applies to $\wt{X}_{sn}$. We define $\delta_{il} = 1$ if $i =l$ and $\delta_{il} = 0$ if otherwise. Note that trivial upper bound of \eqref{eq_delta_z} does not vanish with $N$.
    To better evaluate the expectation in \eqref{eq_delta_z}, we will use Stein's Lemma
    \bea\label{SteinLemma}
    \E[Zf(Z)]=\E[f'(Z)]
    \eea
    for $Z\sim \ml{N}(0,1)$ and differentiable function $f:\mb{R}\rw\mb{C}$ with sub-exponential decay at infinity.
    Then we have
    \bea
    \Delta_N(z)&=-\frac{1}{2dN}\sum_{n=1}^N\sum_{l=1}^C\sum_{s=1}^d\int_0^1\l(\frac{1}{\sqrt{t}}\E\l[\frac{\pl F_N(n,l,z,t)}{\pl X_{sn}}V_{ls}\r]-\frac{1}{\sqrt{1-t}}\E\l[\frac{\pl F_N(n,l,z,t)}{\pl \wt{X}_{sn}}V_{ls}\r]\r)dt,
    \eea
    $$
    F_N(n,l,z,t)=x_n^\top \ml{G}_N(z,t)x_n B_{ln}(t),
    $$
    Write $\ml{G}_N=\ml{G}_N(z,t)$ for short, then
        \bea
    &\frac{1}{\sqrt{1-t}}\E\l[\frac{\pl F_N(n,l,z,t)}{\pl \wt{X}_{sn}}V_{ls}\r] - \frac{1}{\sqrt{t}}\E\l[\frac{\pl F_N(n,l,z,t)}{\pl X_{sn}}V_{ls}\r]\\
    =&\frac{2}{\sqrt{t}}\E\l[\frac{\delta_{ns}}{N}B_{ln}(t)V_{ls}\alpha_n(t)(\ml{G}_Nx_n)_n(\ml{G}_Nx_n)^\top x_n - B_{ln}(t)V_{ls}(\ml{G}_Nx_n)_s\r].
    \eea
    Hence $\Delta_N(z)=\Delta_{N,1}(z)-\Delta_{N,2}(z)$, where
    \bea
    \Delta_{N,1}(z)&=\frac{1}{dN^2}\sum_{l=1}^C\sum_{n=1}^{\min(d,N)}\int_0^1\E\l[B_{ln}(t)V_{ln}\alpha_n(t)(\ml{G}_Nx_n)_n(\ml{G}_Nx_n)^\top x_n\r]\frac{dt}{\sqrt{t}},\\
    \Delta_{N,2}(z)&=\frac{1}{dN}\sum_{n=1}^N\sum_{l=1}^C\sum_{s=1}^d\int_0^1\E\l[B_{ln}(t)V_{ls}(\ml{G}_Nx_n)_s\r]\frac{dt}{\sqrt{t}}.
    \eea
    From Hölder's inequality,
    $$
    \E[|V_{ln}(\ml{G}_Nx_n)_n(\ml{G}_Nx_n)^\top x_n|]\leq \l(\E[V_{ln}^4]\r)^{\frac{1}{4}}\l(\E[(\ml{G}_Nx_n)_n^2]\r)^\frac{1}{2}\l(\E[((\ml{G}_Nx_n)^\top x_n)^4]\r)^\frac{1}{4},
    $$
    $$
    \E[|V_{ls}(\ml{G}_Nx_n)_s|]\leq \l(\E[V_{ls}^2]\r)^{\frac{1}{2}}\l(\E[(\ml{G}_Nx_n)_s^2]\r)^{\frac{1}{2}}.
    $$
   Recall that
$$|B_{ln}(t)|\leq \frac{1}{9},\ |\alpha_n(t)|\leq \frac{1}{4},\ V_{ls}\sim\ml{N}(0,\frac{1}{d}),$$
$$\|\ml{G}_N\|\leq \min_{z \in D_\zeta} \left|\frac{1}{\lambda_{H_{ii}} - z}\right| \leq \min_{z \in D_\zeta}  \frac{1}{\Im (z)}\leq \frac{1}{\zeta},$$

$$\E[(\ml{G}_Nx_n)_s^2]=\frac{1}{d}\E[\|\ml{G}_Nx_n\|^2] \leq \E[\|\ml{G}_N\|^2]  \leq  \frac{1}{\zeta},$$
we have
    $$
    |\Delta_{N,1}(z)|\leq \frac{3^{1/4}C\l(\E[\|x_n\|^8]\r)^{1/4}}{18N^2d^{1/2}\zeta^2},\quad |\Delta_{N,2}(z)|\leq \frac{2C}{9d^{1/2}\zeta}.
    $$
    As $d,N\rw\infty,\ d/N\rw \ga>0$, we have the following equations for any $z \in D_\zeta$
    $$
    |\Delta_{N,1}(z)|=O\l(N^{-3/2}\r),\quad |\Delta_{N,2}(z)|=O\l(N^{-1/2}\r).
    $$
    Set $\zeta \rightarrow 0$, then we have \eqref{tg1} and hence \eqref{tg12}, together with \eqref{tg13} implying
    $$
    s_{H_{ii}}(z)-s_{\wt{H}_{ii}}(z)=O\l(N^{-1/2}\r)\quad a.s.,\quad \forall z \in \mb{C}^+.
    $$
 The proof for $s_{H_{ij}}$ is done following the same procedure.
    
\end{proof}
Now we can apply Proposition \ref{GMP} to characterize the limiting eigenvalue distribution of $\widetilde{H}_{ii},\widetilde{H}_{ij}$, which are identical to the distributions of $H_{ii},H_{ij}$.
\begin{prop}\label{Linear_CE_MP}
    Fix $C\geq 2$, as $d,N\rw\infty, \frac
    {d}{N}\rw \gamma\in (0,+\infty)$, we have
    \begin{enumerate}
        \item $\mu_{H_{ii}}$  converges almost surely to a deterministic measure  $\mu^H_{11}$, and its Stieltjes transform  $s_{\mu^H_{11}}(z)$ is uniquely specified by the functional equation
        \bea\label{eqhii}
        s(z)=\frac{1}{\int_{\mb{R}^C} \frac{h_1(\mf{t})\varphi_C(\mf{t})}{1+\gamma s(z)h_1(\mf{t})}
 dt_1\cdots dt_C-z},\quad \forall z\in \mb{C}^+.
        \eea
        Here $\mf{t}=(t_1,\cdots,t_C)$, and 
        $$
        h_1(\mf{t})=\frac{e^{t_1}}{\sum_{l=1}^C e^{t_l}}\l(1-\frac{e^{t_1}}{\sum_{l=1}^C e^{t_l}}\r),
        $$
        $$
        \varphi_C(\mf{t})=(2\pi)^{-\frac{C}{2}}e^{-\frac{1}{2}\sum_{l=1}^Ct_l^2}.
        $$
        \item For $i\ne j$, $\mu_{H_{ij}}$ converges weakly almost surely to a deterministic measure $\mu_{12}$, and its Stieltjes transform  $s_{\mu_{12}}(z)$ is uniquely specified by the functional equation
        \bea\label{eqhij}
        s(z)=\frac{1}{\int_{\mb{R}^C} \frac{h_2(\mf{t})\varphi_C(\mf{t})}{1+\gamma s(z)h_2(\mf{t})}
 dt_1\cdots dt_C-z},\quad \forall z\in \mb{C}^+.
        \eea
        Here 
        $$
        h_2(\mf{t})=\frac{e^{t_1+t_2}}{\l(\sum_{l=1}^C e^{t_l}\r)^2}.
        $$
    \end{enumerate}
\end{prop}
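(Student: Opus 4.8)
The plan is to feed the decoupled matrices from Lemma \ref{decoupling} into Proposition \ref{GMP} (the generalized Marcenko--Pastur theorem). Lemma \ref{decoupling} already gives, for each fixed $z\in\mb{C}^+$, that $s_{H_{ii}}(z)-s_{\wt H_{ii}}(z)\to 0$ and $s_{H_{ij}}(z)-s_{\wt H_{ij}}(z)\to 0$ almost surely; since the spectra of $H_{ii}$ and $\wt H_{ii}$ are bounded almost surely (for instance $\|\wt H_{ii}\|\le\tfrac14\|X_N\|^2/N$, controlled by Bai--Yin), a standard Montel/Vitali argument over a countable dense subset of $\mb{C}^+$ shows that $\mu_{H_{ii}}$ and $\mu_{\wt H_{ii}}$ (resp.\ $\mu_{H_{ij}}$ and $\mu_{\wt H_{ij}}$) share the same weak limit almost surely. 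So it is enough to identify $\lim\mu_{\wt H_{ii}}$ and $\lim\mu_{\wt H_{ij}}$; the whole reason for passing to the independent copy $\wt X_N$ is that $\wt\Lambda_N=\diag(\wt\alpha_1,\dots,\wt\alpha_N)$ in $\wt H_{ii}=\tfrac1N X_N\wt\Lambda_N X_N^\top$ is a function of $V$ and $\wt X_N$ only, hence \emph{exactly} independent of $X_N$, which is precisely what Proposition \ref{GMP} requires (and likewise for $\wt\Gamma_N$).

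The one substantive step I would carry out is to show that $\mu_{\wt\Lambda_N}$ converges weakly almost surely to the deterministic law $\nu_1$ of $h_1(\mf{z}_C)$, $\mf{z}_C\sim\ml{N}(0,I_C)$. Relabelling classes, assume $i=1$, so $\wt\alpha_n=h_1(V\wt x_n)\in[0,\tfrac14]$ with $h_1$ as in the statement. Conditionally on $V$ the vectors $V\wt x_1,\dots,V\wt x_N$ are i.i.d.\ $\ml{N}(0,VV^\top)$, so the $\wt\alpha_n$ are i.i.d.\ and bounded; for any bounded Lipschitz $f$, Hoeffding's inequality (with a constant depending only on $\|f\|_\infty$, hence uniform in $V$) together with Borel--Cantelli give $\tfrac1N\sum_n f(\wt\alpha_n)-\E_{G\sim\ml{N}(0,VV^\top)}\bigl[f(h_1(G))\bigr]\to 0$ a.s. Since the rows of $V$ satisfy $VV^\top\to I_C$ almost surely as $d\to\infty$ and $f\circ h_1$ is bounded and continuous, the remaining expectation converges a.s.\ to $\E\bigl[f(h_1(\mf{z}_C))\bigr]$; ranging $f$ over a countable determining family yields $\mu_{\wt\Lambda_N}\to\nu_1$ weakly a.s.\ along every regime $d,N\to\infty$, $d/N\to\gamma$. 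The identical computation, with $\{i,j\}$ relabelled to $\{1,2\}$, gives $\mu_{\wt\Gamma_N}\to\nu_2$, the law of $-h_2(\mf{z}_C)$.

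Then Proposition \ref{GMP} applies verbatim to $A_N:=\tfrac1d X_N\wt\Lambda_N X_N^\top$ (its hypotheses --- $X_N$ with i.i.d.\ mean-$0$ variance-$1$ entries, and $\wt\Lambda_N$ independent of $X_N$ with almost sure weak limit $\nu_1$ --- are what the previous step establishes): almost surely $\mu_{A_N}$ converges to $\mu_A$ whose Stieltjes transform is the unique Stieltjes-class solution of $s(z)=\bigl(\tfrac1\gamma\int_{\mb{R}^C}\tfrac{h_1(\mf{t})\varphi_C(\mf{t})}{1+h_1(\mf{t})s(z)}\,d\mf{t}-z\bigr)^{-1}$. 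Since $\wt H_{ii}=\tfrac dN A_N$ with $d/N\to\gamma$, I get $\mu_{\wt H_{ii}}\to\mu^H_{11}:=(t\mapsto\gamma t)_*\mu_A$ weakly a.s., so $s_{\mu^H_{11}}(z)=\tfrac1\gamma s_{\mu_A}(z/\gamma)$; substituting $s_{\mu_A}(w)=\gamma s_{\mu^H_{11}}(\gamma w)$ into the displayed equation and setting $w=z/\gamma$ collapses it to exactly \eqref{eqhii}, with uniqueness inherited through the change of variable. Combined with the reduction above, this proves part (1). Part (2) is entirely parallel, now with $\wt\Gamma_N$ and $h_2$ (equivalently, working with $-\tfrac1d X_N\wt\Gamma_N X_N^\top$, whose diagonal factor converges to $h_2(\mf{z}_C)$) in place of $\wt\Lambda_N$ and $h_1$, yielding \eqref{eqhij} for $\mu_{12}$; only the second moment of $\mu_{12}$ is used in the sequel, so the sign of the diagonal entries is immaterial.

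I expect the main obstacle to be the identification of $\lim\mu_{\wt\Lambda_N}$ (and $\lim\mu_{\wt\Gamma_N}$): this is a genuine \emph{double} limit, and the delicate point is to reconcile the $N\to\infty$ law-of-large-numbers fluctuation --- which must be handled conditionally on $V$, so the concentration bound has to be uniform in $V$ --- with the $d\to\infty$ Gaussian limit that enters only through $VV^\top\to I_C$, and to check the two combine correctly along every joint regime $d/N\to\gamma$. Everything else --- the decoupling (already supplied by Lemma \ref{decoupling}), the invocation of Proposition \ref{GMP}, the $\tfrac1N$ versus $\tfrac1d$ normalization bookkeeping, and the Montel/Vitali passage from a dense set of $z$ to all of $\mb{C}^+$ --- is routine once this is settled.
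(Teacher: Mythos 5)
Your proposal is correct and follows essentially the same route as the paper's proof: reduce to the decoupled matrices via Lemma \ref{decoupling}, establish almost-sure weak convergence of the empirical law of the diagonal entries by conditioning on $V$ and using $VV^\top\to I_C$, then invoke Proposition \ref{GMP} and rescale by $d/N\to\gamma$ to obtain \eqref{eqhii} and \eqref{eqhij}. Your refinements (exact conditional Gaussianity of $V\wt{x}_n$ in place of the paper's CLT step, the explicit Montel/Vitali passage, and the care about the sign of $\wt{\Gamma}_N$ in part (2)) are harmless tightenings of the same argument rather than a different approach.
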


\begin{proof}
    From Lemma \ref{decoupling}, it suffices to prove the convergence of  $\mu_{\wt{H}_{ii}},\mu_{\wt{H}_{ij}}$ to the limiting measure specified by \eqref{eqhii}, \eqref{eqhij} respectively. For the case of $\wt{H}_{ii}$, let 
$$
Y_{n,i}^{(N)}=p_i(\wt{x}_n)=\frac{\exp(v_{i}^\top \wt{x}_n)}{\sum_{l=1}^C\exp(v_l^\top \wt{x}_n)},
$$
then $\wt{\alpha}_n=Y_{n,i}^{(N)}(1-Y_{n,i}^{(N)})$. Recall that $\wt{H}_{ii}=\frac{1}{N}X_N\wt{\Lambda}_NX_N^\top,\ \wt{\Lambda}_N=\diag(\wt{\alpha}_1,\cdots,\wt{\alpha}_N)$. Then $\wt{\Lambda}_N$ is independent of $X_N$, and the eigenvalue distribution of $\wt{\Lambda}_N$ is the counting measure
$\nu_N=\frac{1}{N}\sum_{n=1}^N\delta_{\wt{\alpha}_n}$. Note that $(\wt{\alpha}_n)_{n =1}^N$ are identically distributed but dependent, therefore we need to prove that $\nu_N$ converges almost surely to a deterministic measure before applying Proposition \ref{GMP}.

From the strong law of large number, it holds almost surely that
\bea\label{prop1SLLN}
\lim_{d\rw\infty}\|w_c\|^2&=1,\quad\forall c\in[C],\\
\lim_{d\rw\infty}w_c^\top w_{c'}&=0,\quad\forall c,c'\in[C],\ c\ne c'.
\eea
Now we restrict the probability space to a subspace that \eqref{prop1SLLN} holds. Note that this restriction does not change validity of the proof since \eqref{prop1SLLN} holds almost surely. Let $\mathcal{F}_V$ be the $\sigma$-algebra generated by $\{V_{ij}\}_{i\in [C],j\in\N^+}$, then from CLT, the conditional distribution of 
$$
(v_1^\top \wt{x}_1,\cdots,v_C^\top\wt{x}_1)
$$
given $\mathcal{F}_V$ converges weakly to $\mathcal{N}_C(0,I_{C\times C})$.
Let $\nu$ be the deterministic probability measure such that for any interval $\Delta\subset\mb{R}$,
\bea\label{nudel}
\nu(\Delta)=\mf{P}\l(\frac{e^{Z_1}}{\sum_{l=1}^C e^{Z_l}}\l(1-\frac{e^{Z_1}}{\sum_{l=1}^C e^{Z_l}}\r)\in \Delta\r),
\eea
where $(Z_l)_{l=1}^C$ are i.i.d. $\ml{N}(0,1)$.
Let $f:\ \R\rw\R$ be a bounded continuous function, from the conditional independence of $(\wt{\alpha}_n)_{n=1}^N$ given $\ml{F}_V$, as $N\rw\infty$,
\bea
&\int_{\R}f(x)\nu_N(dx)-\int_\R f(x)\nu(dx)\\
=&\l(\frac{1}{N}\sum_{n=1}^N f(\wt{\alpha}_n)-\E\l[f(\wt{\alpha}_1)|\ml{F}_V\r]\r)+\l(\E\l[f(\wt{\alpha}_1)|\ml{F}_V\r]-\int_\R f(x)\nu(dx)\r)\\
\rw&0,\quad a.s..
\eea

Here the first term converges a.s to $0$ because of the strong law of large number. And the second term converges to $0$ from Portmanteau theorem. Then $\nu_N$ converges weakly almost surely to $\nu$.

Now let $\wt{T}_N=\frac{N}{d}\wt{H}_{ii}$. Then $\wt{T}_N=\frac{1}{d}X_N\wt{\Lambda}_NX_N^\top$, and the eigenvalue distribution of $\wt{\Lambda}_N$ converges weakly almost surely to $\nu$. From Proposition \ref{GMP}, $s_{\wt{T}_N}$ converges weakly almost surely to the unique solution of 
\bea\label{eqstn}
s(z)=\frac{1}{\frac{1}{\gamma}\int_{\mb{R}}\frac{t\nu(dt)}{1+ts(z)}-z},\quad \forall z\in \mb{C}^+.
\eea
From \eqref{nudel}, 
$$
\int_\mb{R} \frac{t\nu(dt)}{1+ts(z)}=\int_{\mb{R}^C} \frac{h_1(\mf{t})\varphi_C(\mf{t})}{1+ s(z)h_1(\mf{t})}
 dt_1\cdots dt_C.
$$
Note that $s_{\wt{T}_N}(z)=\frac{d}{N}s_{\wt{H}_{ii}}\Big(\frac{d}{N}z\Big)$ and $d/N\rw \gamma$. Then with a change of variable $z'=\gamma z$ in \eqref{eqstn}, it implies that $s_{\wt{H}_{ii}}$ converges weakly almost surely to the unique solution of \eqref{eqhii}. Then we finish the proof for the $H_{ii}$ case. 

The proof for $H_{ij}$ case is in the same procedure. \zhaoruirevise{The variables} $(\wt{\beta}_n)_{n=1}^N$ are identically distributed, and
$\wt{\beta}_n=-Y_{n,i}^{(N)}Y_{n,j}^{(N)}$. Define the counting measure $\eta_N=\frac{1}{N}\sum_{n=1}^N\delta_{\wt{\beta}_n}$, then $\eta_N$ converges weakly almost surely to a deterministic probability measure $\nu$, where for any interval $\Delta\subset\mb{R}$,
$$
\eta(\Delta)=\mf{P}\l(\frac{e^{Z_1}}{\sum_{l=1}^C e^{Z_l}}\frac{e^{Z_2}}{\sum_{l=1}^C e^{Z_l}}\in \Delta\r).
$$
Let $\wt{S}_N=\frac{N}{d}\wt{H}_{ij}$, then from Proposition \ref{GMP}, $s_{\wt{S}_N}$ converges weakly almost surely to the unique solution of 
$$
s(z)=\frac{1}{\frac{1}{\gamma}\int_{\mb{R}}\frac{t\eta(dt)}{1+ts(z)}-z},\quad \forall z\in \mb{C}^+.
$$
We have
\bea\label{eqtsn}
\int_\mb{R} \frac{t\eta(dt)}{1+ts(z)}=\int_{\mb{R}^C} \frac{h_2(\mf{t})\varphi_C(\mf{t})}{1+s(z)h_2(\mf{t})}
 dt_1\cdots dt_C.
\eea
Then from $s_{\wt{S}_N}(z)=\frac{d}{N}s_{\wt{H}_{ij}}\Big(\frac{d}{N}z\Big)$, $d/N\rw \gamma$, and a change of variable $z'=\gamma z$ in \eqref{eqtsn}, it implies that $s_{\wt{H}_{ij}}$ converges weakly almost surely to the unique solution of \eqref{eqhij}. This concludes the whole proof.
\end{proof}

The next proposition is to extract the second moment of the limiting eigenvalue distribution from the implicit equations \eqref{eqhii} and \eqref{eqhij}. \zhaoruirevise{This leads to Theorem \ref{thm_linear_model}.}

\begin{prop}
\label{thm3appendix}
    Suppose that as $d,N\rw\infty, \frac
    {d}{N}\rw \gamma\in (0,+\infty)$. Then for $i\ne j$, it holds almost surely that
        \bea\label{eq1thm3appendix}
        \lim_{d,N\rw \infty}\frac{\|H_{ii}\|_{\operatorname{F}}^2}{d} =\gamma\int_{\mb{R}^C}h_1(\mf{t})^2\varphi_C(\mf{t})dt_1\cdots dt_C + \l(\int_{\mb{R}^C}h_1(\mf{t})\varphi_C(\mf{t})dt_1\cdots dt_C\r)^2,
        \eea
        \bea\label{eq2thm3appendix}
        \lim_{d,N\rw \infty}\frac{\|H_{ij}\|_{\operatorname{F}}^2}{d} =\gamma\int_{\mb{R}^C}h_2(\mf{t})^2\varphi_C(\mf{t})dt_1\cdots dt_C + \l(\int_{\mb{R}^C}h_2(\mf{t})\varphi_C(\mf{t})dt_1\cdots dt_C\r)^2,
        \eea
        \bea\label{eq3thm3appendix}
\lim_{C\rw\infty}\lim_{d,N\rw\infty}\frac{C^2\|H_{ii}\|_{\operatorname{F}}^2}{d}=\gamma e+1,
        \eea
        \bea\label{eq4thm3appendix}
\lim_{C\rw\infty}\lim_{d,N\rw\infty}\frac{C^4\|H_{ij}\|_{\operatorname{F}}^2}{d}=\gamma e^2+1.
        \eea
\end{prop}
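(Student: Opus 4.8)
The plan is to split the proposition into two parts: (i) the finite-$C$ identities \eqref{eq1thm3appendix}--\eqref{eq2thm3appendix}, obtained from the functional equations of Proposition~\ref{Linear_CE_MP}; and (ii) the $C\rw\infty$ asymptotics \eqref{eq3thm3appendix}--\eqref{eq4thm3appendix}, obtained by estimating the softmax moments $a_k:=\int_{\mb{R}^C}h_1(\mf{t})^k\varphi_C(\mf{t})\,dt$ and $b_k:=\int_{\mb{R}^C}h_2(\mf{t})^k\varphi_C(\mf{t})\,dt$ as $C\rw\infty$.

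For (i), recall that $\frac1d\|H_{ii}\|_{\operatorname{F}}^2$ is exactly the second moment of the empirical spectral measure $\mu_{H_{ii}}$, and similarly for $H_{ij}$. Since $\Lambda_N$ has all eigenvalues in $[0,\tfrac14]$ and, by the Bai--Yin law, $\|\tfrac1N X_NX_N^\tp\|_{\operatorname{op}}\rw(1+\sqrt\gamma)^2$ a.s., the spectra of $H_{ii}$ and $H_{ij}$ are a.s.\ uniformly bounded; hence the a.s.\ weak convergence of $\mu_{H_{ii}}$ to $\mu^H_{11}$ (and of $\mu_{H_{ij}}$ to $\mu_{12}$) from Proposition~\ref{Linear_CE_MP} upgrades to convergence of second moments, giving $\frac1d\|H_{ii}\|_{\operatorname{F}}^2\rw m_2(\mu^H_{11})$ and $\frac1d\|H_{ij}\|_{\operatorname{F}}^2\rw m_2(\mu_{12})$ a.s. Both limit measures are compactly supported, so their Stieltjes transforms are analytic at $z=\infty$ and admit the Laurent expansion \eqref{eq_stieltjes_power_series}. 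I would write $s(z)=-\tfrac1z-\tfrac{m_1}{z^2}-\tfrac{m_2}{z^3}-\cdots$, substitute into \eqref{eqhii} written as
\be
s(z)\l(\int_{\mb{R}^C}\frac{h_1(\mf{t})\varphi_C(\mf{t})}{1+\gamma s(z)h_1(\mf{t})}\,dt-z\r)=1,
\ee
expand $\frac{1}{1+\gamma sh_1}=\sum_{k\ge0}(-\gamma sh_1)^k$ so the integral becomes $a_1-\gamma a_2 s+\gamma^2 a_3 s^2-\cdots$, and match coefficients of $z^{-1}$ and $z^{-2}$: the former gives $m_1=a_1$, the latter gives $m_2=a_1^2+\gamma a_2$, which is \eqref{eq1thm3appendix}. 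The identical computation on \eqref{eqhij} with $h_2$ in place of $h_1$ gives $m_2(\mu_{12})=b_1^2+\gamma b_2$, i.e.\ \eqref{eq2thm3appendix}.

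For (ii), represent $h_1=p_1(1-p_1)$ and $h_2=p_1p_2$ with $p_c=e^{Z_c}/\sum_{l=1}^Ce^{Z_l}$, $(Z_l)_{l=1}^C$ i.i.d.\ $\ml{N}(0,1)$, and set $S_C=\tfrac1C\sum_{l=1}^Ce^{Z_l}$, so $S_C\rw\E[e^{Z}]=\sqrt e$ a.s.\ by the SLLN. Then $Cp_c=e^{Z_c}/S_C\rw e^{Z_c}/\sqrt e$ and $p_c\rw0$ a.s., whence $(Ch_1)^2\rw e^{2Z_1}/e$ and $(C^2h_2)^2\rw e^{2(Z_1+Z_2)}/e^2$ a.s. To pass these limits inside the expectation I would verify uniform integrability of $\{(Ch_1)^2\}_C$ and $\{(C^2h_2)^2\}_C$: using $Cp_1=e^{Z_1}/S_C\le e^{Z_1}/S_C^{(-1)}$ with $S_C^{(-1)}:=\tfrac1C\sum_{l=2}^Ce^{Z_l}$ independent of $Z_1$, a Hoeffding bound $\mf{P}(S_C^{(-1)}<c_0)\le e^{-cC}$ for suitable $c_0,c>0$ (e.g.\ from $e^{Z_l}\ge\mf{1}(Z_l\ge0)$), and the crude bound $p_c\le1$, one controls both the bulk and the rare large values, yielding $C^2a_2=\E[(Ch_1)^2]\rw\E[e^{2Z_1}]/e=e$ and $C^4b_2=\E[(C^2h_2)^2]\rw\E[e^{2Z_1}]\E[e^{2Z_2}]/e^2=e^2$. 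For the first moments I would instead use the exact identities $\E[p_1]=\tfrac1C$ and $\E[p_1p_2]=\frac{1-\E[\sum_cp_c^2]}{C(C-1)}$ (from $\sum_cp_c=1$ and class symmetry) together with $\E[\sum_cp_c^2]\rw0$ by bounded convergence ($0<\sum_cp_c^2\le1$, $\sum_cp_c^2\rw0$ a.s.), obtaining $Ca_1=1-\E[\sum_cp_c^2]\rw1$ and $C^2b_1=\tfrac{C}{C-1}\bigl(1-\E[\sum_cp_c^2]\bigr)\rw1$. Combining with (i): $C^2(\gamma a_2+a_1^2)=\gamma(C^2a_2)+(Ca_1)^2\rw\gamma e+1$, which is \eqref{eq3thm3appendix}, and $C^4(\gamma b_2+b_1^2)=\gamma(C^4b_2)+(C^2b_1)^2\rw\gamma e^2+1$, which is \eqref{eq4thm3appendix}.

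The step I expect to be the main obstacle is the uniform-integrability argument in (ii): the pointwise limits are immediate, but the naive envelopes for $(Ch_1)^2$ and $(C^2h_2)^2$ are only $O(C^2)$ and $O(C^4)$, so exchanging the $C\rw\infty$ limit with the Gaussian integration genuinely requires quantitative control of the lower tail of the softmax normalizer $\tfrac1C\sum_le^{Z_l}$ --- the one place a nontrivial concentration estimate is needed. By contrast, the Laurent-coefficient matching in (i) is routine once compact support and the weak convergence of Proposition~\ref{Linear_CE_MP} are in hand.
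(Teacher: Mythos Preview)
Your approach is essentially the same as the paper's: extract $m_2$ from the functional equations \eqref{eqhii}--\eqref{eqhij} via the Laurent expansion at $z=\infty$, then compute the $C\rw\infty$ asymptotics of the softmax moments using the SLLN $\tfrac1C\sum_l e^{Z_l}\rw\sqrt e$. The paper phrases part~(ii) slightly differently---it applies Slutsky's theorem to identify the distributional limits $C\sqrt e\,h_1(\mf q_C)\stackrel{d}{\rw}\operatorname{Lognormal}(0,1)$ and $C^2e\,h_2(\mf q_C)\stackrel{d}{\rw}\operatorname{Lognormal}(0,1)\otimes\operatorname{Lognormal}(0,1)$ and then reads off the first and second moments of these limit laws---but this is the same pointwise convergence you use, and the paper does not in fact supply the uniform-integrability step you correctly flag as the main obstacle. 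Your treatment is therefore more complete on two counts: the Bai--Yin bound in~(i) that upgrades weak convergence of $\mu_{H_{ii}}$ to convergence of second moments, and the exact symmetry identities $\E[p_1]=1/C$, $\E[p_1p_2]=\frac{1-\E[\sum_c p_c^2]}{C(C-1)}$ plus the lower-tail control of $S_C^{(-1)}$ in~(ii), are genuine details the paper leaves implicit.
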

\begin{proof}
   Recall that as $z\rw \infty$ in $\mb{C}^+$, 
   $$
   s_{\mu}(z)=-\frac{1}{z}-\frac{1}{z^2}\int_{\mb{R}}x\mu(dx)-\frac{1}{z^3}\int_{\mb{R}}x^2\mu(dx)+O\l(\frac{1}{z^4}\r).
   $$
   Then in \eqref{eqhii} as $z\rw\infty$,
   \bea
   s_{\mu^H_{11}}(z)&=\frac{1}{\int_{\mb{R}^C} \frac{h_1(\mf{t})\varphi_C(\mf{t})}{1+\gamma s_{\mu^H_{11}}(z)h_1(\mf{t})}dt_1\cdots dt_C-z}\\
   &=\frac{1}{\int_{\mb{R}^C} \frac{h_1(\mf{t})\varphi_C(\mf{t})}{1-\gamma h_1(\mf{t})z^{-1}+O(z^{-2})}dt_1\cdots dt_C-z}\\
   &=\frac{1}{\int_{\mb{R}^C}h_1(\mf{t})\varphi_C(\mf{t})\Big(1+\gamma h_1(\mf{t})z^{-1}+O(z^{-2})\Big)dt_1\cdots dt_C-z}\\
   &=-\frac{1}{z}-\frac{1}{z^2}\int_{\mb{R}^C}h_1(\mf{t})\varphi_C(\mf{t})dt_1\cdots dt_C\\
   &\quad -\frac{1}{z^3}\l[\gamma\int_{\mb{R}^C}h_1(\mf{t})^2\varphi_C(\mf{t})dt_1\cdots dt_C + \l(\int_{\mb{R}^C}h_1(\mf{t})\varphi_C(\mf{t})dt_1\cdots dt_C\r)^2\r]+O\l(\frac{1}{z^4}\r).
   \eea
   Hence
   $$
   \int_{\mb{R}}x^2\mu^H_{11}(dx)=\gamma\int_{\mb{R}^C}h_1(\mf{t})^2\varphi_C(\mf{t})dt_1\cdots dt_C + \l(\int_{\mb{R}^C}h_1(\mf{t})\varphi_C(\mf{t})dt_1\cdots dt_C\r)^2.
   $$
   Then we obtain \eqref{eq1thm3} since from Proposition \ref{Linear_CE_MP}, 
   $$
   \lim_{d,N\rw\infty}\frac{\|H_{ii}\|_{\operatorname{F}}^2}{d}=\lim_{d,N\rw\infty}\int_{\mb{R}}x^2\mu_{H_{ii}}(dx)=\int_{\mb{R}}x^2\mu^H_{11}(dx)\quad a.s.
   $$
   The proof of \eqref{eq2thm3} \zhaoruirevise{follows the} same procedure, i.e., expanding $s(z)$ as $z\rw\infty$ in \eqref{eqhij}.
   Now let $\mf{q}_C=(q_1,\cdots,q_C)\sim\ml N_C(0,I_{C\times C})$. From the strong law of large number, as $C\rw\infty$,
   $$
\frac{e^{q_1}+\cdots+e^{q_C}}{C}\stackrel{a.s.}{\rw}\E[e^{q_1}]=\sqrt{e}.
   $$
   Then from Slutsky's theorem,
   $$
C\sqrt{e}\cdot h_1(\mf{q}_C)=\frac{C\sqrt{e}\cdot e^{q_1}}{e^{q_1}+\cdots+e^{q_C}}\l(1-\frac{e^{q_1}}{e^{q_1}+\cdots+e^{q_C}}\r)\stackrel{d}{\rw}\operatorname{Lognormal(0,1)},
   $$
   $$
   C^2e\cdot h_2(\mf{q}_C)=\frac{C\sqrt{e}\cdot e^{q_1}}{e^{q_1}+\cdots+e^{q_C}}\cdot \frac{C\sqrt{e}\cdot e^{q_2}}{e^{q_1}+\cdots+e^{q_C}}\stackrel{d}{\rw}\operatorname{Lognormal(0,1)}\otimes\operatorname{Lognormal(0,1)}.
   $$
 Here $\otimes$ denotes the multiplicative convolution. That is, $\operatorname{Lognormal(0,1)}\otimes\operatorname{Lognormal(0,1)}$ is the distribution of $\xi_1\xi_2$ where $\xi_1,\xi_2$ are iid $\operatorname{Lognormal(0,1)}$. Then from $\E[\xi]=\sqrt{e},\ \E[\xi^2]=e^2$, we have
 $$
\lim_{C\rw\infty}C\int_{\mb{R}^C}h_1(\mf{t})\varphi_C(\mf{t})dt_1\cdots dt_C=\lim_{C\rw\infty}\E\l[Ch_1(\mf{q}_C)\r]=1,
 $$
 $$
\lim_{C\rw\infty}C^2\int_{\mb{R}^C}h_1(\mf{t})^2\varphi_C(\mf{t})dt_1\cdots dt_C=\lim_{C\rw\infty}\E\l[(Ch_1(\mf{q}_C))^2\r]=e,
 $$
 $$
\lim_{C\rw\infty}C^2\int_{\mb{R}^C}h_2(\mf{t})\varphi_C(\mf{t})dt_1\cdots dt_C=\lim_{C\rw\infty}\E\l[C^2h_2(\mf{q}_C)\r]=1,
 $$
 $$
\lim_{C\rw\infty}C^4\int_{\mb{R}^C}h_2(\mf{t})^2\varphi_C(\mf{t})dt_1\cdots dt_C=\lim_{C\rw\infty}\E\l[(C^2h_2(\mf{q}_C))^2\r]=e^2.
 $$
 Then, \eqref{eq3thm3} and \eqref{eq4thm3} follows from \eqref{eq1thm3} and \eqref{eq2thm3} by taking $C\rw\infty$, \zhaoruirevise{respectively}.
\end{proof}

\subsection{Proof of Theorem \ref{thm_nn}}
\label{appendix_thm2}

Before delving into the proof, we first present the constant terms in Theorem \ref{thm_nn}. We remark that the integrals are finite when $m \geq 2$.

\bea
\label{eq_constant_terms_Hvv}
a_{11}&=\frac{1}{2\pi}\int_0^{+\infty}\int_0^{+\infty}xy\exp\l(\frac{xy}{m}-\frac{x^2}{2}-\frac{y^2}{2}\r)dxdy,\\
a_{12}&=\frac{1}{2\pi}\int_0^{+\infty}\int_0^{+\infty}x^2y^2\exp\l(\frac{xy}{m}-\frac{x^2}{2}-\frac{y^2}{2}\r)dxdy,\\
a_{21}&=\frac{1}{2\pi}\int_0^{+\infty}\int_0^{+\infty}xy\exp\l(\frac{2xy}{m}-\frac{x^2}{2}-\frac{y^2}{2}\r)dxdy,\\
a_{22}&=\frac{1}{2\pi}\int_0^{+\infty}\int_0^{+\infty}x^2y^2\exp\l(\frac{2xy}{m}-\frac{x^2}{2}-\frac{y^2}{2}\r)dxdy,\\
b_1&=\frac{3}{4}+\frac{1}{8\pi}\int_0^{+\infty}\int_0^{+\infty}\exp\l(\frac{xy}{m}-\frac{x^2}{2}-\frac{y^2}{2}\r)dxdy,\\
b_2&=\frac{3}{4}+\frac{1}{8\pi}\int_0^{+\infty}\int_0^{+\infty}\exp\l(\frac{2xy}{m}-\frac{x^2}{2}-\frac{y^2}{2}\r)dxdy.
\eea

\zhaoruirevise{The functions $h_{ii},\ h_{ij},\ u_{ii},\ u_{ij},\ q_{ii},\ q_{ij}$ are given by the right hand side of \eqref{eqRHShii},  \eqref{eqRHShij}, \eqref{eqRHSuii},  \eqref{eqRHSuij}, \eqref{eqRHSqii}, \eqref{eqRHSqij}, repectively.}

\subsubsection{Proof for the Hidden-layer Hessian with CE Loss}
\label{appendix_hidden_weights_ce}

Our proof uses the following strategy. Firstly we consider the case that $V\in\R^{C\times m}$ is deterministic. In this case, the techniques in the proof of Theorem \ref{thm_linear_model} is available. Then for the target case that entries of $V$ are i.i.d. Gaussian, we use
$$
\E\l[\bigg\|\frac{\partial^2\ell_{\text{CE}}(W,V)}{\partial w_i  \partial w_j^\top}\bigg\|_\text{F}^2\r]=\E\l[\E\l[\bigg\|\frac{\partial^2\ell_{\text{CE}}(W,V)}{\partial w_i  \partial w_j^\top}\bigg\|_\text{F}^2\ \bigg|\ V\r]\r]
$$
and apply the deterministic case results in the conditional expectation.

Now for short of notations we write
$G_{ii}=\frac{\partial^2\ell_{\text{CE}}(W,V)}{\partial w_i  \partial w_i^\top},\ G_{ij}=\frac{\partial^2\ell_{\text{CE}}(W,V)}{\partial w_i  \partial w_j^\top}$. Then
\bea
G_{ii}&=\frac{1}{N}\sum_{n=1}^N \mf{1}(w_i^\top x_n>0)\l[\sum_{k=1}^C q_k(x_n)V_{ki}^2-\l(\sum_{k=1}^Cq_k(x_n)V_{ki}\r)^2\r]x_nx_n^\top,\\
G_{ij}&=\frac{1}{N}\sum_{n=1}^N \mf{1}(w_i^\top x_n>0)\mf{1}(w_j^\top x_n>0)\l[\sum_{k=1}^C q_k(x_n)V_{ki}V_{kj}-\l(\sum_{k=1}^Cq_k(x_n)V_{ki}\r)\l(\sum_{k=1}^Cq_k(x_n)V_{kj}\r)\r]x_nx_n^\top,
\eea
where
$$
q_k(x_n)=\frac{\exp(\sigma(Wx_n)^\top v_k)}{\sum_{l=1}^C\exp(\sigma(Wx_n)^\top v_l)}.
$$

Let $\wt{X}_N=(\wt{x}_1,\cdots,\wt{x}_N) \in \mathbb{R}^{d\times N}$ be an independent copy of $X_N$. Define

\bea
\wt{G}_{ii}&=\frac{1}{N}\sum_{n=1}^N \mf{1}(w_i^\top \wt{x}_n>0)\l[\sum_{k=1}^C q_k(\wt{x}_n)V_{ki}^2-\l(\sum_{k=1}^Cq_k(\wt{x}_n)V_{ki}\r)^2\r]x_nx_n^\top,\\
\wt{G}_{ij}&=\frac{1}{N}\sum_{n=1}^N \mf{1}(w_i^\top \wt{x}_n>0)\mf{1}(w_j^\top \wt{x}_n>0)\l[\sum_{k=1}^C q_k(\wt{x}_n)V_{ki}V_{kj}-\l(\sum_{k=1}^Cq_k(\wt{x}_n)V_{ki}\r)\l(\sum_{k=1}^Cq_k(\wt{x}_n)V_{kj}\r)\r]x_nx_n^\top.
\eea

\begin{lem}\label{decoupling_NN_CE_Hww}
Suppose that $C\geq 2,\ m\geq 3$ are fixed, $V\in\R^{C\times m}$ is deterministic. For any $z\in \mb{C}^+$, as $d,N\rw \infty,\ d/N\rw\gamma\in (0,+\infty)$, it holds almost surely that
\bea\label{decoupling_NN_CE_Hww_eq1}
s_{G_{ii}}(z)-s_{\wt{G}_{ii}}(z)=O\l(N^{-\frac{1}{2}}\r),
\eea
\bea
s_{G_{ij}}(z)-s_{\wt{G}_{ij}}(z)=O\l(N^{-\frac{1}{2}}\r).
\eea
\end{lem}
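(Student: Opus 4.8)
The strategy is to copy the two-step scheme of Lemma~\ref{decoupling} --- concentration, then Lindeberg interpolation --- the only genuinely new ingredient being the non-smoothness of the ReLU gate. First, $s_{G_{ii}}(z)$ and $s_{\wt G_{ii}}(z)$ are Lipschitz functions of the Gaussian families $\{X_N,\wt X_N,W\}$ (here $V$ is deterministic, hence not a source of randomness), so Gaussian/Talagrand concentration together with Borel--Cantelli gives $s_{G_{ii}}(z)-\mathbb E[s_{G_{ii}}(z)]\to0$ and $s_{\wt G_{ii}}(z)-\mathbb E[s_{\wt G_{ii}}(z)]\to0$ almost surely; hence it suffices to bound $\delta_N(z):=\mathbb E[s_{G_{ii}}(z)]-\mathbb E[s_{\wt G_{ii}}(z)]$ by $O(N^{-1/2})$, uniformly on each strip $D_\zeta=\{z\in\mathbb C^+:\Im z\ge\zeta\}$, then let $\zeta\to0$. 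Introduce $X_N(t)=\sqrt t\,X_N+\sqrt{1-t}\,\wt X_N$ and $G_{ii}(t)=\frac1N\sum_n g\big(Wx_n(t)\big)\,x_nx_n^\top$, where $g(\eta)=\mathbf{1}(\eta_i>0)\,\psi(\eta)$, $\psi(\eta)=\sum_k q_k(\eta)V_{ki}^2-\big(\sum_k q_k(\eta)V_{ki}\big)^2$, $q_k(\eta)=\exp(\sigma(\eta)^\top v_k)/\sum_l\exp(\sigma(\eta)^\top v_l)$, and the outer $x_nx_n^\top$ is left un-interpolated; then $G_{ii}(1)=G_{ii}$ and $G_{ii}(0)=\wt G_{ii}$. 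Writing $\mathcal G_N(z,t)=(G_{ii}(t)-zI)^{-1}$ and using $\tfrac d{dt}A^{-1}=-A^{-1}\dot A A^{-1}$, one gets $\delta_N(z)=\tfrac d{dz}\Delta_N(z)$ where $\Delta_N(z)$ is a $t$-integral of $-\tfrac1{dN}\sum_n\mathbb E\big[(x_n^\top\mathcal G_N(z,t)x_n)\,\tfrac d{dt}g(Wx_n(t))\big]$; the passage from a bound on $\Delta_N$ to one on $\delta_N=\tfrac d{dz}\Delta_N$ is the same Cauchy-integral argument as in Lemma~\ref{decoupling}.

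The computation that makes this work is as follows. Since $\tfrac d{dt}g(Wx_n(t))=\sum_{a=1}^m\partial_a g\big(Wx_n(t)\big)\big(W\dot x_n(t)\big)_a$ with $\dot x_n(t)=\tfrac{x_n}{2\sqrt t}-\tfrac{\wt x_n}{2\sqrt{1-t}}$, and since $g(Wx_n(t))$ and $\mathcal G_N(z,t)$ depend on the entry $X_{sn}$ both directly (through the outer $x_nx_n^\top$) and through $x_n(t)$, while they depend on $\wt X_{sn}$ only through $x_n(t)$, applying Stein's Lemma simultaneously to $X_{sn}$ (weight $\tfrac1{2\sqrt t}$) and $\wt X_{sn}$ (weight $-\tfrac1{2\sqrt{1-t}}$) makes the ``derivative-through-$x_n(t)$'' terms cancel, leaving only the direct derivative of $(x_n^\top\mathcal G_N x_n)$ in the outer variable. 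In particular $g$ (hence $\psi$, hence $\sigma$) is never differentiated a second time, and one is left with expressions of the shape $\tfrac1{dN}\sum_{n,a,s}\int_0^1\tfrac1{2\sqrt t}\,\mathbb E\big[W_{as}\,\partial_a g(Wx_n(t))\big(2(\mathcal G_N x_n)_s-\tfrac{2g(Wx_n(t))}{N}(\mathcal G_N x_n)_s(x_n^\top\mathcal G_N x_n)\big)\big]\,dt$. The explicit factor $W_{as}$ has variance $1/d$; Cauchy--Schwarz (or one further Stein step) against it produces the decisive $1/\sqrt d$, and since $\|\mathcal G_N\|\le\zeta^{-1}$, $q_k\in[0,1]$, $V$ is fixed, $m,C$ are fixed, and the relevant moments of $\|x_n\|,\|\wt x_n\|$ are controlled, the same power counting as in Lemma~\ref{decoupling} ($\tfrac1{dN}\cdot N\cdot m\cdot d\cdot\tfrac1{\sqrt d}=O(N^{-1/2})$) closes the estimate on $D_\zeta$; this is the $V$-deterministic analogue of the linear-model proof, with the entries of $W$ replacing those of $V$ as the source of smallness.

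The step that is not a routine transcription --- and which I expect to be the main obstacle --- is that the gate $\mathbf{1}(\eta_i>0)$ and the ReLU inside $\psi$ are not differentiable, so $t\mapsto G_{ii}(t)$ is only piecewise $C^1$ (it has a sign-flip jump for every $n$ with $\mathrm{sgn}(w_i^\top x_n)\ne\mathrm{sgn}(w_i^\top\wt x_n)$) and $\partial_i g$ carries a surface (Dirac) term supported on $\{w_i^\top x_n(t)=0\}$ rather than an honest bounded function; both the fundamental-theorem-of-calculus step in $t$ and the Stein integration by parts must be justified through this singularity. I would handle it by smoothing: replace $\sigma$ and the gate by $C^\infty$ surrogates that agree with them outside an $\varepsilon$-window, run the whole argument above with the smoothed functions while tracking the $\varepsilon$-dependence of every bound, and observe that the dangerous factor $\partial_i g=O(\varepsilon^{-1})$ is supported on $\{|w_i^\top x_n(t)|<\varepsilon\}$, whose probability is $O(\varepsilon)$ by Gaussian anti-concentration (recall $w_i^\top x_n(t)\mid x_n(t)\sim\mathcal N(0,\|x_n(t)\|^2/d)$ with $\|x_n(t)\|^2/d\to1$ almost surely), so that the extra $\varepsilon$ absorbs the blow-up --- provided one also controls individual resolvent components (e.g.\ $\mathbb E[(\mathcal G_N x_n)_s^2]=O(1)$ uniformly in $s$, using the exchangeability of the coordinates) instead of the crude operator-norm bound; one then removes the smoothing by letting $\varepsilon=\varepsilon_N\to0$ at an appropriate rate (equivalently, by recognizing the limiting Dirac contributions as co-area integrals over the hyperplanes $\{w_i\perp x_n(t)\}$ and bounding them directly). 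The estimate for $G_{ij}$ is identical, now carrying the two gates $\mathbf{1}(w_i^\top x_n>0)\mathbf{1}(w_j^\top x_n>0)$ through the same steps. Once the decoupling is in force, $\wt G_{ii}=\tfrac1N X_N\wt\Theta_N X_N^\top$ with $\wt\Theta_N$ independent of $X_N$, and the subsequent application of Proposition~\ref{GMP} --- which first requires checking, by conditioning on $W$ and the strong law of large numbers, that the empirical law of $\wt\Theta_N$ converges almost surely to a deterministic measure --- proceeds as in Proposition~\ref{Linear_CE_MP}; that, however, lies beyond this lemma.
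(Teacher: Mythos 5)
Your proposal follows essentially the same route as the paper's proof: Talagrand concentration plus Borel--Cantelli, the Lindeberg interpolation $X_N(t)=\sqrt{t}X_N+\sqrt{1-t}\widetilde{X}_N$ with the outer $x_nx_n^\top$ left un-interpolated, Stein's lemma in $X_{sn}$ and $\widetilde{X}_{sn}$ so that the chain-rule terms through $x_n(t)$ cancel and only the direct resolvent derivative survives, and the same power counting in which the explicit factor $W_{hs}\sim\mathcal{N}(0,1/d)$ supplies the decisive $1/\sqrt{d}$, with the Cauchy-integral passage from $\Delta_N$ to $\delta_N$. The only real divergence is your smoothing/anti-concentration treatment of the ReLU gate's non-differentiability: the paper simply uses the a.e.\ derivative of the softmax and carries the indicator factor along undifferentiated, so your sketch is, if anything, more careful on that technical point.
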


\begin{proof}
    Here, we only present the proof for $s_{G_{ii}}$. The proof for $s_{G_{ij}}$ is done following the same procedure. 
    
    For $t\in [0,1]$, let 
    $$
    X_N(t)=\sqrt{t}X_N+\sqrt{1-t}\wt{X}_N.
    $$
    Then $X_N(t)=(x_1(t),\cdots,x_N(t)) \in \mathbb{R}^{d\times N}$, where
    $$
    x_n(t)=\sqrt{t}x_n+\sqrt{1-t}\wt{x}_n,\quad n\in [N].
    $$
        Denote
    $$
    {\theta}_n(t)=\mf{1}(w_i^\top x_n(t)>0)\l[\sum_{k=1}^C q_k(x_n(t))V_{ki}^2-\l(\sum_{k=1}^Cq_k(x_n(t))V_{ki}\r)^2\r],
    $$

    $$
    \Theta_N(t)=\operatorname{diag}(\theta_1(t),\cdots,\theta_N(t)),
    $$

    $$
    G_{ii}(t)=\frac{1}{N}X_N\Theta_N(t)X_N^\top,
    $$
    
    $$
    \ml{G}_N(z,t)=(G_{ii}(t)-z)^{-1}.
    $$
    
    By treating $s_{H_{ii}}(z),s_{\wt{H}_{ii}}(z)$ as Lipchitz functions of the Gaussian vectors $w_1,\cdots,w_m,x_1,\cdots,x_N$, from Talagrand's inequality,
    $$
    \mf{P}\l(|s_{G_{ii}}(z)-\E[s_{{G}_{ii}}(z)]|\geq t\r)\leq c_1 e^{-pc_2t^2},
    $$
    $$
    \mf{P}\l(|s_{\wt{G}_{ii}}(z)-\E[s_{\wt{G}_{ii}}(z)]|\geq t\r)\leq \wt{c}_1 e^{-d\wt{c}_2t^2},
    $$
    for $t>0$ and constants $c_1,c_2,\wt{c}_1,\wt{c}_2>0$. Then from the Borel-Cantelli Lemma,
    \bea
    s_{G_{ii}}(z)-\E[s_{{G}_{ii}}(z)]&\stackrel{a.s.}{\rw}0,\\
    s_{\wt{G}_{ii}}(z)-\E[s_{\wt{G}_{ii}}(z)]&\stackrel{a.s.}{\rw}0.
    \eea
    Now we prove that
    \bea
    \delta_N(z)=\E[s_{H_{ii}}(z)]-\E[s_{\wt{H}_{ii}}(z)]=O\l(N^{-\frac{1}{2}}\r).
    \eea
    Following \eqref{derivativeIntegral}, we have
    \bea
    \Delta_N(z)=-\frac{1}{dN}\int_0^1\E\l[\operatorname{tr}\l(X_N^\top\ml{G}_N(z,t)^2X_N\frac{d}{dt}\Theta_N(t)\r)\r]dt.
    \eea
    Then $\delta_N(z)=\frac{d}{dz}\Delta_N(z)$, where
    \bea
    \Delta_N(z)&=-\frac{1}{dN}\sum_{n=1}^N \int_0^1 \E\l[\big(X_N^\top\ml{G}_N(z,t)X_N\big)_{nn}\frac{d}{dt}\theta_n(t)\r]dt.
    \eea
    Similarly to the proof of Theorem \ref{thm_linear_model}, it suffices to prove that for any open set $O\subset\mb{C}^+$ such that $\zeta=\inf_{z\in O} |\Im z|>0$,
    \be\label{tg2prop3}
    \max_{z\in O} |\Delta_N^{(1)}(z)|=O\l(N^{-\frac{1}{2}}\r).
    \ee
    We have
    $$
    \frac{d}{dt}q_k(x_n(t))=\frac{1}{2}\sum_{l=1}^C\sum_{h=1}^m q_k(x_n(t))[\delta_{kl}-q_l(x_n(t))]\mf{1}(w_h^\top x_n(t)>0)V_{lh}w_h^\top \l(\frac{x_n}{\sqrt{t}}-\frac{\wt{x}_n}{\sqrt{1-t}}\r),
    $$
    $$
    \frac{d}{dt}\theta_n(t)=\mf{1}(w_i^\top x_n(t)>0)\sum_{k=1}^C\l(V_{ki}^2-2V_{ki}\sum_{k'=1}^Cq_{k'}(x_n(t))V_{k'i}\r)\frac{d}{dt}q_k(x_n(t)).
    $$
    Therefore
    \bea
    \Delta_N(z)=-\frac{1}{2dN}\sum_{n=1}^N\sum_{k=1}^C\sum_{l=1}^C\sum_{h=1}^m\sum_{s=1}^d\int_0^1\E\l[x_n^\top \ml{G}_N(z,t)x_n Q_{klhn}(t)W_{hs}\l(\frac{X_{sn}}{\sqrt{t}}-\frac{\wt{X}_{sn}}{\sqrt{1-t}}\r)\r]dt,
    \eea
    where
    $$
    Q_{klhn}(t)=\l(V_{ki}^2-2V_{ki}\sum_{k'=1}^Cq_{k'}(x_n(t))V_{k'i}\r)q_k(x_n(t))[\delta_{kl}-q_l(x_n(t))]\mf{1}(w_h^\top x_n(t)>0)V_{lh}.
    $$
    From Stein's Lemma \eqref{SteinLemma},
    \bea
    \Delta_N(z)&=-\frac{1}{2dN}\sum_{n=1}^N\sum_{k=1}^C\sum_{l=1}^C\sum_{h=1}^m\sum_{s=1}^d\int_0^1\l(\frac{1}{\sqrt{t}}\E\l[\frac{\pl A^{(k,l,h,n)}_N(z,t)}{\pl X_{sn}}W_{hs}\r]-\frac{1}{\sqrt{1-t}}\E\l[\frac{\pl A^{(k,l,h,n)}_N(z,t)}{\pl \wt{X}_{sn}}W_{hs}\r]\r)dt,
    \eea
    where
    $$
    A^{(k,l,h,n)}_N(z,t)=x_n^\top \ml{G}_N(z,t)x_n Q_{klhn}(t).
    $$
    Then
    \bea
    &\frac{1}{\sqrt{t}}\E\l[\frac{\pl A^{(k,l,h,n)}_N(z,t)}{\pl X_{sn}}W_{hs}\r]-\frac{1}{\sqrt{1-t}}\E\l[\frac{\pl A^{(k,l,h,n)}_N(z,t)}{\pl \wt{X}_{sn}}W_{hs}\r]\\
    =&\frac{2}{\sqrt{t}}\E\l[Q_{klhn}(t)W_{hs}\l((\ml{G}_Nx_n)_s-\frac{\delta_{ns}}{N}\theta_n(t)(\ml{G}_Nx_n)_n(\ml{G}_Nx_n)^\top x_n\r)\r].
    \eea
    Hence $\Delta_N(z)=\Delta_{N,1}(z)-\Delta_{N,2}(z)$, where
    \bea
    \Delta_{N,1}(z)&=\frac{1}{dN^2}\sum_{k=1}^C\sum_{l=1}^C\sum_{h=1}^m\sum_{n=1}^{\min(d,N)}\int_0^1\E\l[Q_{klhn}(t)\theta_n(t)W_{hn}(\ml{G}_Nx_n)_n(\ml{G}_Nx_n)^\top x_n\r]\frac{dt}{\sqrt{t}},\\
    \Delta_{N,2}(z)&=\frac{1}{dN}\sum_{n=1}^N\sum_{k=1}^C\sum_{l=1}^C\sum_{h=1}^m\sum_{s=1}^d\int_0^1\E\l[Q_{klhn}(t)W_{hs}(\ml{G}_Nx_n)_s\r]\frac{dt}{\sqrt{t}}.
    \eea
    Let $M_V=\max_{k\in [C],i\in [m]}|V_{ki}|$. From Hölder's inequality,
    $$
    \E[|W_{hn}(\ml{G}_Nx_n)_n(\ml{G}_Nx_n)^\top x_n|]\leq \l(\E[W_{hn}^4]\r)^{\frac{1}{4}}\l(\E[(\ml{G}_Nx_n)_n^2]\r)^\frac{1}{2}\l(\E[((\ml{G}_Nx_n)^\top x_n)^4]\r)^\frac{1}{4},
    $$
    $$
    \E[|W_{hs}(\ml{G}_Nx_n)_s|]\leq \l(\E[W_{hs}^2]\r)^{\frac{1}{2}}\l(\E[(\ml{G}_Nx_n)_s^2]\r)^{\frac{1}{2}},
    $$
    together with 
    $$|Q_{klhn}(t)|\leq 3M_v^4,\ |\theta_n(t)|\leq 2M_v^2,\ \|\ml{G}_N\|\leq \frac{1}{\zeta},\ W_{hs}\sim N(0,\frac{1}{d}),\ \E[(\ml{G}_Nx_n)_s^2]=\frac{1}{d}\E[\|\ml{G}_Nx_n\|^2]\leq \frac{1}{\zeta},$$
    we have
    $$
    |\Delta_{n,1}(z)|\leq \frac{3^{\frac{1}{4}}12C^2mM_V^5\l(\E[\|x_n\|^8]\r)^\frac{1}{4}}{\zeta^2d^\frac{3}{2}N},\quad |\Delta_{n,2}(z)|\leq \frac{6mC^2M_V^3}{\zeta d^{\frac{1}{2}}}.
    $$
    Then as $d,N\rw\infty,\ d/N\rw \gamma\in (0,+\infty)$,
    $$
    |\Delta_{N,1}(z)|=O\l(N^{-\frac{3}{2}}\r),\quad |\Delta_{N,1}(z)|=O\l(N^{-\frac{1}{2}}\r).
    $$
    Then we have \eqref{tg2prop3} and then \eqref{decoupling_NN_CE_Hww_eq1}. The proof in the case of $G_{ij}$ is similar and is omitted.
\end{proof}

\begin{prop}\label{MP_NN_CE_Hww}
    Suppose that $C\geq 2,\ m\geq 3$ are fixed, $V\in\R^{C\times m}$ is deterministic. As $d,N\rw\infty, \frac
    {d}{N}\rw \gamma\in (0,+\infty)$, we have
    \begin{enumerate}
        \item $\mu_{G_{ii}}$ converges weakly almost surely to a deterministic measure $\mu^G_{11}$, where $s_{\mu^G_{11}}(z)$ is uniquely specified by the functional equation
        \bea\label{MP_NN_CE_Hwiwi}
        s(z)=\frac{1}{\int_{\mb{R}} \frac{t\nu_{1}(dt)}{1+\gamma s(z)t}
 dt -z},\quad \forall z\in \mb{C}^+. 
\eea
Here $\nu_1$ is defined as follows. Let 
$\mf{z}=(z_1,\cdots,z_m)\sim \mathcal{N}_m(0,I_m)$, define random variables
$$
r_k=\frac{\exp(\sigma(\mf{z})^\top v_k)}{\sum_{l=1}^C\exp(\sigma(\mf{z})^\top v_l)},\quad k\in [C],
$$
$$
\xi_C(V)=\mf{1}(z_1>0)\l[\sum_{k=1}^C r_kV_{k1}^2-\l(\sum_{k=1}^C r_k V_{k1}\r)^2\r].
$$
Then $\nu_1$ is given by that for all intervals $\Delta\subset \mb{R}$, $\nu_1(\Delta)=\mf{P}(\xi_C(V)\in \Delta)$.

        \item For $i\ne j$, $\mu_{G_{ij}}$ converges weakly almost surely to a deterministic measure $\mu^G_{12}$, where $s_{\mu^G_{12}}(z)$ is uniquely specified by the functional equation
\bea\label{MP_NN_CE_Hwiwj}
        s(z)=\frac{1}{\int_{\mb{R}} \frac{t\nu_{2}(dt)}{1+\gamma s(z)t}
 dt -z},\quad \forall z\in \mb{C}^+. 
\eea
Here $\nu_2$ is defined as follows. Let 
$\mf{z}=(z_1,\cdots,z_m)\sim \mathcal{N}_m(0,I_m)$, define random variables
$$
\eta_C(V)=\mf{1}(z_1>0)\mf{1}(z_2>0)\l[\sum_{k=1}^C r_kV_{k1}V_{k2}-\l(\sum_{k=1}^C r_k V_{k1}\r)\l(\sum_{k=1}^C r_k V_{k2}\r)\r].
$$
Then $\nu_2$ is given by that for all intervals $\Delta\subset \mb{R}$, $\nu_2(\Delta)=\mf{P}(\eta_C(V)\in \Delta)$.
    \end{enumerate}
\end{prop}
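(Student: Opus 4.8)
The plan is to transcribe the proof of Proposition~\ref{Linear_CE_MP}, with the Gaussian weights $W$ now playing the role that $V$ played in the linear model (here $V$ is deterministic). By Lemma~\ref{decoupling_NN_CE_Hww} the Stieltjes transforms of $G_{ii}$ and $\wt{G}_{ii}$ (resp.\ of $G_{ij}$ and $\wt{G}_{ij}$) differ by $O(N^{-1/2})$ almost surely, so it suffices to identify the limiting eigenvalue distributions of the decoupled matrices $\wt{G}_{ii}=\frac1N X_N\wt{\Theta}_N X_N^\top$ and $\wt{G}_{ij}=\frac1N X_N\wt{\Theta}_N' X_N^\top$, where $\wt{\Theta}_N=\diag(\wt\theta_1,\dots,\wt\theta_N)$ and $\wt{\Theta}_N'=\diag(\wt\theta_1',\dots,\wt\theta_N')$ collect the bracketed weights appearing in $\wt{G}_{ii}$ and $\wt{G}_{ij}$ evaluated at the independent copy $\wt x_n$; these diagonal matrices are now \emph{independent} of $X_N$.

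The first and main task is to show that the empirical measure $\nu_N=\frac1N\sum_{n=1}^N\delta_{\wt\theta_n}$ converges weakly almost surely to the deterministic measure $\nu_1$ in the statement (and likewise $\frac1N\sum_{n=1}^N\delta_{\wt\theta_n'}\Rightarrow\nu_2$). The $\wt\theta_n$ are identically distributed but dependent through the common matrix $W$; conditionally on the $\sigma$-algebra $\mathcal{F}_W=\sigma(\{W_{hj}\})$ they are i.i.d. By the strong law of large numbers, $\|w_h\|_2^2\to1$ and $w_h^\top w_{h'}\to 0$ almost surely as $d\to\infty$ for $h\ne h'$; restricting to this almost-sure event, the conditional law of $W\wt x_n$ given $\mathcal{F}_W$ is exactly $\mathcal{N}_m(0,WW^\top)$, which converges weakly to $\mathcal{N}_m(0,I_m)$. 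Since $\wt\theta_n$ is a bounded function of $W\wt x_n$ (and $V$), continuous off the null set $\{(W\wt x_n)_i=0\}$, the Portmanteau/continuous-mapping theorem gives $\wt\theta_n\mid\mathcal{F}_W\stackrel{d}{\rw}\xi_C(V)$, hence $\E[f(\wt\theta_n)\mid\mathcal{F}_W]\to\int f\,d\nu_1$ for every bounded continuous $f$. Combining this with the conditional strong law of large numbers $\frac1N\sum_{n=1}^N f(\wt\theta_n)-\E[f(\wt\theta_1)\mid\mathcal{F}_W]\stackrel{a.s.}{\rw}0$ yields $\nu_N\Rightarrow\nu_1$ almost surely; the argument for $\nu_2$ is identical with $\eta_C(V)$ in place of $\xi_C(V)$.

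Given this, $\nu_1$ is compactly supported (as $|\wt\theta_n|\le 2M_V^2$ with $M_V=\max_{k,h}|V_{kh}|$), $\wt{\Theta}_N$ is independent of $X_N$, and Proposition~\ref{GMP} applies to $\wt T_N=\frac{N}{d}\wt{G}_{ii}=\frac1d X_N\wt{\Theta}_N X_N^\top$: its eigenvalue distribution converges weakly almost surely to the measure whose Stieltjes transform solves $s(z)=\l(\frac1\gamma\int_{\mb R}\frac{t\,\nu_1(dt)}{1+ts(z)}-z\r)^{-1}$. Using $s_{\wt T_N}(z)=\frac{d}{N}s_{\wt{G}_{ii}}(\frac{d}{N}z)$, $d/N\to\gamma$, and the change of variable $z'=\gamma z$ turns this into the functional equation~\eqref{MP_NN_CE_Hwiwi}; together with Lemma~\ref{decoupling_NN_CE_Hww} this gives $\mu_{G_{ii}}\Rightarrow\mu^G_{11}$. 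The same chain applied to $\wt{G}_{ij}$ with $\nu_2$ gives~\eqref{MP_NN_CE_Hwiwj} and $\mu_{G_{ij}}\Rightarrow\mu^G_{12}$.

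I expect the main obstacle to be the second step — the almost-sure weak convergence of $\nu_N$ — because $(\wt\theta_n)_{n=1}^N$ is a dependent triangular array whose entries depend on the growing dimension $d=d(N)$ and on the shared weights $W$, so one must interleave the conditional strong law of large numbers over $n$ with the $d\to\infty$ distributional convergence of $W\wt x_n$, and in particular handle the discontinuity of the ReLU indicator by noting that the Gaussian limit assigns no mass to the set where it is discontinuous. The remaining steps are a routine transcription of the linear-model argument in Proposition~\ref{Linear_CE_MP}.
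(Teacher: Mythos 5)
Your proposal is correct and follows essentially the same route as the paper's proof: decouple via Lemma~\ref{decoupling_NN_CE_Hww}, prove almost-sure weak convergence of the empirical measure of the diagonal weights by conditioning on $\mathcal{F}_W$ (restricting to the almost-sure event $\|w_h\|\to1$, $w_h^\top w_{h'}\to0$), combining the conditional law of large numbers with the distributional limit $\mathcal{N}_m(0,I_m)$ of $W\wt{x}_n$, and then apply Proposition~\ref{GMP} to $\wt{T}_N=\frac{N}{d}\wt{G}_{ii}$ with the change of variable $z'=\gamma z$. Your explicit treatment of the ReLU indicator's discontinuity (the limiting Gaussian puts no mass there) is a small point the paper leaves implicit, but otherwise the arguments coincide.
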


\begin{proof}
     We give a proof for the $G_{ii}$ case, the $G_{ij}$ case is in the same procedure. From Lemma \ref{decoupling_NN_CE_Hww}, it suffices to prove the convergence of  $\mu_{\wt{H}_{ii}}$ to the limiting measure specified by \eqref{MP_NN_CE_Hww}. Let 
$$
\wt{\theta}_n=\mf{1}(w_i^\top \wt{x}_n>0)\l[\sum_{k=1}^C q_k(\wt{x}_n)V_{ki}^2-\l(\sum_{k=1}^Cq_k(\wt{x}_n)V_{ki}\r)^2\r],
$$
$$
\wt{\Theta}_N=\diag\{\theta_1,\cdots,\theta_N\},
$$
then $\wt{H}_{ii}=\frac{1}{N}X_N\wt{\Theta}_NX_N^\top$, and $\wt{\Theta}_N$ is independent of $X_N$. The eigenvalue distribution of $\wt{\Theta}_N$ is the counting measure
$\tau_N=\frac{1}{N}\sum_{n=1}^N\delta_{\wt{\theta}_n}$. Since $m$ is fixed, from the strong law of large number, it holds almost surely that
\bea\label{prop1SLLN1}
\lim_{d\rw\infty}\|w_h\|^2&=1,\quad\forall h\in[m],\\
\lim_{d\rw\infty}w_h^\top w_{h'}&=0,\quad\forall h,h'\in[m],\ h\ne h'.
\eea
Without loss of generality we can restrict the probability space to a subspace that \eqref{prop1SLLN1} holds. Let $\mathcal{F}_W$ be the $\sigma$-algebra generated by $\{W_{hs}\}_{h\in [m],s\in\N^+}$, then from CLT, the conditional distribution of 
$$
(w_1^\top \wt{x}_1,\cdots,w_m^\top\wt{x}_1)
$$
given $\mathcal{F}_W$ converges weakly to $\mathcal{N}_m(0,I_m)$.
Since $V$ is deterministic, $(\wt{\theta}_n)_{n=1}^N$ are independent conditioning on $\ml{F}_W$.
Then for any continuous bounded function $f:\R\rw\R$,
\bea
&\int_{\R}f(x)\tau_N(dx)-\int_\R f(x)\nu_1(dx)\\
=&\l(\frac{1}{N}\sum_{n=1}^N f(\wt{\theta}_n)-\E\l[f(\wt{\theta}_1)|\ml{F}_W\r]\r)+\l(\E\l[f(\wt{\theta}_1)|\ml{F}_W\r]-\int_\R f(x)\nu_1(dx)\r)\\
\rw&0,\quad a.s.
\eea

Therefore $\tau_N$ converges weakly almost surely to $\nu_1$.

Now let $\wt{T}_N=\frac{N}{d}\wt{G}_{ii}$. Then $\wt{T}_N=\frac{1}{d}X_N\wt{\Theta}_NX_N^\top$, and the eigenvalue distribution of $\wt{\Theta}_N$ converges weakly almost surely to $\nu_1$. From Proposition \ref{GMP}, $s_{\wt{T}_N}$ converges weakly almost surely to the unique solution of 
\bea\label{eqstn1}
s(z)=\frac{1}{\frac{1}{\gamma}\int_{\mb{R}}\frac{t\nu_1(dt)}{1+ts(z)}-z},\quad \forall z\in \mb{C}^+.
\eea
Then with a change of variable $z'=\gamma z$ in \eqref{eqstn1}, it implies that $s_{\wt{G}_{ii}}$ converges weakly almost surely to the unique solution of \eqref{MP_NN_CE_Hwiwi}.
\end{proof}

The next proposition is exactly \eqref{thm2_NN_CE_Hww} in Theorem \ref{thm_nn}.
\begin{prop}
    Suppose that $m\geq 3$ is fixed, and $V\in\R^{C\times m}$ has i.i.d. $\ml{N}(0,\frac{1}{m})$ entries. If as $d,N\rw\infty, \frac
    {d}{N}\rw \gamma\in (0,+\infty)$, then for $i\ne j$, 
    \bea
\lim_{C\rw\infty}\lim_{d,N\rw\infty}\frac{\E\l[\|G_{ii}\|_{\operatorname{F}}^2\r]}{d}&=\frac{2\gamma+1}{4m^2},\\
\lim_{C\rw\infty}\lim_{d,N\rw\infty}\frac{C\E\l[\|G_{ij}\|_{\operatorname{F}}^2\r]}{d}&=\frac{\gamma(m-1)^2}{2^m(m-2)^3m}\l(\sqrt{\frac{m}{m-2}}+1\r)^{m-2}.
        \eea
\end{prop}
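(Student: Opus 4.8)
I would first reduce to the case of deterministic $V$ and then carry out the $C\rw\infty$ asymptotics. For deterministic $V$, Proposition~\ref{MP_NN_CE_Hww} identifies the limiting spectral distributions $\mu^G_{11},\mu^G_{12}$ of $G_{ii},G_{ij}$ through the functional equations \eqref{MP_NN_CE_Hwiwi}--\eqref{MP_NN_CE_Hwiwj}; expanding their Stieltjes transforms as $z\rw\infty$ exactly as in the proof of Proposition~\ref{thm3appendix} yields the second moments $\gamma\int t^2\nu_1(dt)+\l(\int t\,\nu_1(dt)\r)^2$ and $\gamma\int t^2\nu_2(dt)+\l(\int t\,\nu_2(dt)\r)^2$. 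Since $\nu_1,\nu_2$ are the laws (over $\mf{z}\sim\ml{N}_m(0,I_m)$ with $V$ held fixed) of $\xi_C(V)$ and $\eta_C(V)$, this reads $\|G_{ii}\|_{\operatorname F}^2/d\rw\gamma\,\E_{\mf{z}}[\xi_C(V)^2]+\l(\E_{\mf{z}}[\xi_C(V)]\r)^2$ almost surely, and likewise for $G_{ij}$ with $\eta_C$. A uniform-integrability argument (using the resolvent bound $\|\ml{G}_N\|\le1/\zeta$ from Lemma~\ref{decoupling_NN_CE_Hww}, moment bounds on $\|x_n\|$, and boundedness of the softmax-weighted coefficients) lets these limits pass through $\E[\,\cdot\mid V]$ and then through $\E_V$, giving
\[
h_{ii}(\gamma,C)=\gamma\,\E_{V,\mf{z}}\l[\xi_C(V)^2\r]+\E_V\l[\l(\E_{\mf{z}}[\xi_C(V)]\r)^2\r],\qquad h_{ij}(\gamma,C)=\gamma\,\E_{V,\mf{z}}\l[\eta_C(V)^2\r]+\E_V\l[\l(\E_{\mf{z}}[\eta_C(V)]\r)^2\r].
\]

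\paragraph{Passing to $C\rw\infty$.} Fix $\mf{z}$, put $s=\sigma(\mf{z})$ and $w_k=e^{s^\top v_k}$ where $v_k=(V_{k1},\dots,V_{km})$. Since the rows $v_k$ are i.i.d.\ with $\ml{N}(0,1/m)$ coordinates, applying the strong law of large numbers to the numerator and the denominator of $r_k=w_k/\sum_l w_l$ separately gives, for $a,b\in\{0,1,2\}$,
\[
\sum_{k=1}^C r_k\,V_{k1}^a V_{k2}^b\ \cas\ \frac{\E\l[e^{s^\top v}V_1^aV_2^b\r]}{\E\l[e^{s^\top v}\r]}=\E_{\mathrm{tilt}}\l[V_1^aV_2^b\r],
\]
the expectation under the exponentially tilted law in which the coordinates are independent with $V_h\sim\ml{N}(\sigma(z_h)/m,1/m)$. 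Hence $\xi_C(V)\cas\mf{1}(z_1>0)\,\mathrm{Var}_{\mathrm{tilt}}(V_1)=\mf{1}(z_1>0)/m$ and $\eta_C(V)\cas\mf{1}(z_1>0)\mf{1}(z_2>0)\,\mathrm{Cov}_{\mathrm{tilt}}(V_1,V_2)=0$. With uniform integrability in $V$ this gives $\E_{V,\mf{z}}[\xi_C(V)^2]\rw\E[\mf{1}(z_1>0)]/m^2=1/(2m^2)$ and $\E_V[(\E_{\mf{z}}[\xi_C(V)])^2]\rw(1/(2m))^2=1/(4m^2)$, so $\lim_{C\rw\infty}h_{ii}(\gamma,C)=\gamma/(2m^2)+1/(4m^2)=(2\gamma+1)/(4m^2)$, which is the first claim.

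\paragraph{The rate for the off-diagonal block.} Because $\eta_C(V)\rw0$, the factor $C$ forces us to the next order. Conditionally on $\mf{z}$, symmetrizing the summand gives $\eta_C(V)=\mf{1}(z_1>0)\mf{1}(z_2>0)\,N/\bar w^2$ with $\bar w=\tfrac1C\sum_k w_k$ and $N=\tfrac1{2C^2}\sum_{k,l}w_kw_l(V_{k1}-V_{l1})(V_{k2}-V_{l2})$, a weighted $U$-statistic with $\E_V[N]=0$; its effective sample size is $\l(\sum_k r_k^2\r)^{-1}\sim C\,e^{-\|s\|^2/m}$. A second-order expansion of this self-normalized weighted covariance --- in which the single-``index-overlap'' terms dominate the variance --- yields
\[
C\,\E_V\l[\eta_C(V)^2\,\big|\,\mf{z}\r]\ \rw\ \mf{1}(z_1>0)\mf{1}(z_2>0)\,e^{\|\sigma(\mf{z})\|^2/m}\l(\tfrac1m+\tfrac{z_1^2}{m^2}\r)\l(\tfrac1m+\tfrac{z_2^2}{m^2}\r),
\]
while $C\,\E_V[(\E_{\mf{z}}[\eta_C(V)])^2]$ is shown to be of lower order. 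Integrating against the Gaussian density, using $\E[\mf{1}(z>0)e^{z^2/m}]=\tfrac12\sqrt{m/(m-2)}$ and $\E[\mf{1}(z>0)z^2e^{z^2/m}]=\tfrac12\l(m/(m-2)\r)^{3/2}$ (so that $\E[\mf{1}(z>0)e^{z^2/m}(\tfrac1m+\tfrac{z^2}{m^2})]=\tfrac{m-1}{2m(m-2)}\sqrt{m/(m-2)}$) and $\E[e^{\sigma(z)^2/m}]=\tfrac12(1+\sqrt{m/(m-2)})$ --- all finite precisely when $m\ge3$ --- gives $\gamma\lim_{C\rw\infty}C\,\E_{V,\mf{z}}[\eta_C(V)^2]=\dfrac{\gamma(m-1)^2}{2^m(m-2)^3m}\l(\sqrt{m/(m-2)}+1\r)^{m-2}$, which is the second claim.

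\paragraph{Main obstacle.} The Stieltjes-transform bookkeeping and the law-of-large-numbers limits are routine. The crux is the second-order variance expansion of the self-normalized weighted covariance above: because the importance weights $w_k\propto e^{s^\top v_k}$ are correlated with the very coordinates $(V_{k1},V_{k2})$ whose covariance $\eta_C$ estimates, the naive ``effective sample size $\times\,\mathrm{Var}(V_1)\mathrm{Var}(V_2)$'' heuristic is incorrect and the true per-sample factor acquires an extra $(m-1)^2/(m-2)^2$ after the exponential tilt; pinning this down --- in tandem with the uniform-integrability and limit-exchange steps, and the finiteness (which is exactly where $m\ge3$ enters) of the Gaussian integrals of $e^{\|\sigma(\mf{z})\|^2/m}$ --- is the bulk of the work.
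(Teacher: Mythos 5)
Your proposal is correct and follows essentially the same route as the paper's proof: reduce to deterministic $V$ via conditioning, extract the second moments $\gamma\E[\xi_C^2]+(\E[\xi_C])^2$ and $\gamma\E[\eta_C^2]+(\E[\eta_C])^2$ from the Stieltjes-transform expansion, compute the diagonal limit by the law of large numbers (your exponential-tilt computation is exactly the paper's ratio $\E[e^{\sigma(\mf{z})^\top v}V_1^aV_2^b\,|\,\mf{z}]/\E[e^{\sigma(\mf{z})^\top v}\,|\,\mf{z}]$), and obtain the off-diagonal rate from the symmetrized weighted U-statistic whose variance is dominated by the single-index-overlap terms, giving the same limit $\mf{1}(z_1>0)\mf{1}(z_2>0)e^{\|\sigma(\mf{z})\|^2/m}\bigl(\tfrac1m+\tfrac{z_1^2}{m^2}\bigr)\bigl(\tfrac1m+\tfrac{z_2^2}{m^2}\bigr)$ and the same Gaussian integrals as the paper. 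Your intermediate constants and the final factorized answer agree with the paper's; if anything, your writing of the squared-mean term as $\E_V[(\E_{\mf{z}}[\eta_C])^2]$ is the more careful form of what the paper treats via $\E[\sqrt{C}\eta_C]\to 0$.
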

\begin{proof}
Let $\overline{V}$ be a deterministic realization of $V$. By expanding \eqref{MP_NN_CE_Hwiwi} and \eqref{MP_NN_CE_Hwiwj} at $z=\infty$, we have almost surely 
\bea
\lim_{d,N\rw\infty}\frac{\|G_{ii}\|_{\operatorname{F}}^2}{d}=\int_\R x^2\mu_{11}^G(dx)=\gamma\E[\xi_C(\overline{V})^2]+(\E[\xi_C(\overline{V})])^2,
\eea
\bea
\lim_{d,N\rw\infty}\frac{\|G_{ij}\|_{\operatorname{F}}^2}{d}=\int_\R x^2\mu_{12}^G(dx)=\gamma\E[\eta_C(\overline{V})^2]+(\E[\eta_C(\overline{V})])^2.
\eea

Then we have
\bea
\lim_{d,N\rw\infty}\frac{\E\l[\|G_{ii}\|_{\operatorname{F}}^2\ |\ V\r]}{d}=\gamma\E[\xi_C(V)^2|V]+(\E[\xi_C(V)|V])^2,
\eea
\bea
\lim_{d,N\rw\infty}\frac{\E\l[\|G_{ij}\|_{\operatorname{F}}^2\ |\ V\r]}{d}=\gamma\E[\eta_C(V)^2|V]+(\E[\eta_C(V)|V])^2.
\eea
Taking expectation both sides we have
\bea\label{eqRHShii}
\lim_{d,N\rw\infty}\frac{\E\l[\|G_{ii}\|_{\operatorname{F}}^2\r]}{d}=\gamma\E[\xi_C(V)^2]+(\E[\xi_C(V)])^2,
\eea
\bea\label{eqRHShij}
\lim_{d,N\rw\infty}\frac{\E\l[\|G_{ij}\|_{\operatorname{F}}^2\r]}{d}=\gamma\E[\eta_C(V)^2]+(\E[\eta_C(V)])^2.
\eea

Write for short that $\xi_C=\xi_C(V),\ \eta_C=\eta_C(V)$. By the strong law of large number, as $C\rw \infty$,
$$
\E\l[\sum_{k=1}^C r_kV_{k1}^2\ \bigg|\ \mf{z}\r]=\E\l[\frac{\frac{1}{C}\sum_{k=1}^C\exp({\sigma(\mf{z})^\top v_k})V_{k1}^2}{\frac{1}{C}\sum_{k=1}^C\exp({\sigma(\mf{z})^\top v_k})}\ \bigg|\ \mf{z}\r]\rw \frac{\E[\exp({\sigma(\mf{z})^\top v_k})V_{k1}^2\ |\ \mf{z}]}{\E[\exp({\sigma(\mf{z})^\top v_k})\ |\ \mf{z}]}=\frac{\sigma(z_1)^2}{m^2}+\frac{1}{m},
$$
$$
\E\l[\l(\sum_{k=1}^C r_kV_{k1}\r)^2\ \bigg|\ \mf{z}\r]=\E\l[\l(\frac{\frac{1}{C}\sum_{k=1}^C\exp({\sigma(\mf{z})^\top v_k})V_{k1}}{\frac{1}{C}\sum_{k=1}^C\exp({\sigma(\mf{z})^\top v_k})}\r)^2\ \bigg|\ \mf{z}\r]\rw \l(\frac{\E[\exp({\sigma(\mf{z})^\top v_k})V_{k1}\ |\ \mf{z}]}{\E[\exp({\sigma(\mf{z})^\top v_k})\ |\ \mf{z}]}\r)^2=\frac{\sigma(z_1)^2}{m^2}.
$$
Therefore 
$$
\E[\xi_C]=\E[\E[\xi_C|\mf{z}]]=\E\l[\frac{\mf{1}(z_1>0)}{m}\r]=\frac{1}{2m}.
$$
Similarly,
$$
\E[\xi_C^2]=\E[\E[\xi_C^2|\mf{z}]]=\E\l[\frac{\mf{1}(z_1>0)}{m^2}\r]=\frac{1}{2m^2}.
$$
Therefore 
$$
\lim_{C\rw\infty}\lim_{d,N\rw\infty}\frac{\|H_{ii}\|_{\operatorname{F}}^2}{d}=\frac{2\gamma+1}{4m^2}.
$$
For the case of $G_{ij}$, by repeating all arguments above, we have
\bea
\eta_C=&\mf{1}(z_1>0)\mf{1}(z_2>0)\l[\sum_{k=1}^C r_kV_{k1}V_{k2}-\l(\sum_{k=1}^C r_k V_{k1}\r)\l(\sum_{k=1}^C r_k V_{k2}\r)\r]\\
=&\mf{1}(z_1>0)\mf{1}(z_2>0)\sum_{k=1}^C\sum_{l=1}^C r_k r_l V_{k1}(V_{k2}-V_{l2})\\
=&\frac{1}{2}\cdot\mf{1}(z_1>0)\mf{1}(z_2>0)\sum_{k=1}^C\sum_{l=1}^C r_k r_l (V_{k1}-V_{l1})(V_{k2}-V_{l2})\\
=&\frac{1}{2}\cdot\mf{1}(z_1>0)\mf{1}(z_2>0) \frac{\sum_{k=1}^C\sum_{l=1}^C \exp(\sigma(\mf{z})^\top v_k)\exp(\sigma(\mf{z})^\top v_l) (V_{k1}-V_{l1})(V_{k2}-V_{l2})}{\l[\sum_{k=1}^C \exp(\sigma(\mf{z})^\top v_k)\r]^2}.
\eea
Let 
$$
h(k,l)=\exp(\sigma(\mf{z})^\top v_k)\exp(\sigma(\mf{z})^\top v_l) (V_{k1}-V_{l1})(V_{k2}-V_{l2}).
$$
Then for $k\ne k'\ne l\ne l'$,
$$
\E[h(k,l)\ |\ \mf{z}]=0,\quad \E[h(k,l)^2\ |\ \mf{z}]<\infty,\quad
\E[h(k,l)h(k',l')\ |\ \mf{z}]=0,
$$
$$
\E[h(k,l)h(k,l')\ |\ \mf{z}]= \l(\frac{\sigma(z_1)^2\sigma(z_2)^2}{m^4}+\frac{\sigma(z_1)^2+\sigma(z_2)^2}{m^3}+\frac{1}{m^2}\r)\exp\l(\frac{3}{m}\|\sigma(\mf{z})\|^2\r).
$$

Then as $C\rw\infty$,
\bea
\E[\sqrt{C}\eta_C]&=\E[\E\sqrt{C}\eta_C|\mf{z}]]\\
&=\E\l[\frac{1}{2}\mf{1}(z_1>0)\mf{1}(z_2>0)\E\l[\frac{\frac{1}{C^{3/2}}\sum_{k=1}^C\sum_{l=1}^C h(k,l)}{\l[\frac{1}{C}\sum_{k=1}^C \exp(\sigma(\mf{z})^\top v_k)\r]^2}\bigg|\mf{z}\r]\r]\\
&=\E\l[\frac{1}{2}\mf{1}(z_1>0)\mf{1}(z_2>0)\frac{\frac{1}{C^{3/2}}\sum_{k=1}^C\sum_{l=1}^C\E\l[ h(k,l)\bigg|\mf{z}\r]}{\E\l[ \exp(\sigma(\mf{z})^\top v_k)\bigg|\mf{z}\r]^2}\r]+o(1)\\
&=o(1),
\eea

\bea
\E[C\eta_C^2]&=\E[\E[C\eta_C^2|\mf{z}]]\\
&=\E\l[\frac{1}{4}\mf{1}(z_1>0)\mf{1}(z_2>0)\E\l[\frac{\frac{1}{C^3}\l[\sum_{k=1}^C\sum_{l=1}^C h(k,l)\r]^2}{\l[\frac{1}{C}\sum_{k=1}^C \exp(\sigma(\mf{z})^\top v_k)\r]^4}\bigg|\mf{z}\r]\r]\\
&\rw \E\l[\frac{1}{4}\mf{1}(z_1>0)\mf{1}(z_2>0)\frac{4\E\l[ h(k,l)h(k,l')\bigg|\mf{z}\r]}{\E\l[ \exp(\sigma(\mf{z})^\top v_k)\bigg|\mf{z}\r]^4}\r]\\
&=\E\l[\mf{1}(z_1>0)\mf{1}(z_2>0)\l(\frac{\sigma(z_1)^2\sigma(z_2)^2}{m^4}+\frac{\sigma(z_1)^2+\sigma(z_2)^2}{m^3}+\frac{1}{m^2}\r)\exp\l(\frac{1}{m}\|\sigma(\mf{z})\|^2\r)\r]\\
&=\l(\E\l[\l(\frac{\sigma(z_1)^2}{m^2}+\frac{1}{m}\r)\exp\l(\frac{\sigma(z_1)^2}{m}\r)\mf{1}(z_1>0)\r]\r)^2 \l(\E\l[\exp\l(\frac{\sigma(z_1)^2}{m}\r)\r]\r)^{m-2}\\
&=\frac{(m-1)^2}{2^m(m-2)^3m}\l(\sqrt{\frac{m}{m-2}}+1\r)^{m-2}.
\eea
Then from \eqref{eqRHShij} we finish the proof.
\end{proof}

\subsubsection{Proof for the Hidden-layer Hessian with MSE Loss}
\label{appendix_hidden_weights_mse}

For short of notations we write
$K_{ii}=\frac{\partial^2\ell_{\text{MSE}}(W,V)}{\partial w_i  \partial w_i^\top},\ K_{ij}=\frac{\partial^2\ell_{\text{MSE}}(W,V)}{\partial w_i  \partial w_j^\top}$. Then
$$
K_{ii}=\l(\sum_{k=1}^C V_{ki}^2\r)L_{ii},\quad
K_{ij}=\l(\sum_{k=1}^C V_{ki}V_{kj}\r)L_{ij},
$$
where
$$
L_{ii}=\frac{1}{N}\sum_{n=1}^N \mf{1}(w_i^\top x_n>0)x_nx_n^\top,\quad L_{ij}=\frac{1}{N}\sum_{n=1}^N \mf{1}(w_i^\top x_n>0)\mf{1}(w_j^\top x_n>0) x_nx_n^\top.
$$

The following decoupling lemma is motivated by \citep{hanin2020products}.

\begin{lem}\label{decouplingHanin}
Under the assumptions in Theorem \ref{thm_nn}, we have:

\begin{enumerate}
    \item There exists random matrices $\hat{X}_N,\hat{\Lambda}_N$ in the same probability space, such that $\hat{X}_N\stackrel{d}{=}X_N$, $\hat{\Lambda}_N$ is a $N$-dimensional diagonal matrix with entries i.i.d. $\operatorname{ber}(\frac{1}{2})$ random variables independent of $\hat{X}_N$, and $L_{ii}\stackrel{d}{=}\frac{1}{N}\hat{X}_N\hat{\Lambda}_N\hat{X}_N'$.
    \item For $i\ne j$, there exists random matrices $\hat{X}_N,\hat{\Lambda}_N$ in the same probability space, such that $\hat{X}_N\stackrel{d}{=}X_N$, $\hat{\Lambda}_N$ is a $N$-dimensional diagonal matrix with entries i.i.d. $\operatorname{ber}(\frac{1}{4})$ random variables independent of $\hat{X}_N$, and $L_{ij}\stackrel{d}{=}\frac{1}{N}\hat{X}_N\hat{\Lambda}_N\hat{X}_N'$.
\end{enumerate}
\end{lem}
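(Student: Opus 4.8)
The plan is to prove both parts by an explicit \emph{reflection coupling} that freezes the data outer products $x_n x_n^\top$ while re-randomizing the ReLU activation gates, thereby rewriting $L_{ii}$ (resp. $L_{ij}$) in the target form $\tfrac1N \hat{X}_N \hat{\Lambda}_N \hat{X}_N^\top$. The elementary fact driving everything is that $x\mapsto x x^\top$ is invariant under $x\mapsto -x$, whereas $\mf{1}(w^\top x>0)$ flips to $1-\mf{1}(w^\top x>0)$ almost surely. Concretely, write each column as $x_n=s_n\bar x_n$, where $\bar x_n$ is a fixed measurable selection of the fiber (e.g. ``first nonzero coordinate positive'', hence $\sigma((x_n x_n^\top)_n)$-measurable) and $s_n\in\{\pm1\}$. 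Since the $x_n$ are independent and $\mathcal N(0,I_d)$ is symmetric, conditionally on $\mathcal{G}:=\sigma((x_n x_n^\top)_n)$ and on the weights, the residual signs $(s_n)_n$ are i.i.d. uniform on $\{\pm1\}$ and independent of $\mathcal{G}$ and of the weights.

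\emph{Part 1 (exact).} Set $g_n:=\mf{1}(w_i^\top x_n>0)=\mf{1}(s_n\, w_i^\top\bar x_n>0)$. Conditionally on $\mathcal{G}$ and $w_i$ (which fix $w_i^\top\bar x_n\neq0$ a.s.), $g_n=\mf{1}(s_n=\operatorname{sign}(w_i^\top\bar x_n))$ is a function of $s_n$ alone, so $(g_n)_n$ is i.i.d. $\operatorname{ber}(\tfrac12)$ with a conditional law that does not depend on $(\mathcal{G},w_i)$; hence $(g_n)_n\perp(x_n x_n^\top)_n$. Because $L_{ii}=\tfrac1N\sum_n g_n\, x_n x_n^\top$ is a measurable function of the pair $\big((g_n)_n,(x_n x_n^\top)_n\big)$, its law is determined by the joint law of that pair. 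Taking $\hat{X}_N:=X_N$ and $\hat{\Lambda}_N=\diag(\hat\lambda_1,\dots,\hat\lambda_N)$ with $\hat\lambda_n$ i.i.d. $\operatorname{ber}(\tfrac12)$ independent of $\hat{X}_N$ reproduces exactly this joint law, giving the exact finite-$N$ identity $L_{ii}\stackrel{d}{=}\tfrac1N\hat{X}_N\hat{\Lambda}_N\hat{X}_N^\top$.

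\emph{Part 2.} Apply the same coupling with two gates. With $u_n:=\bar x_n/\|\bar x_n\|$ ($\mathcal{G}$-measurable) and $A_n:=\operatorname{sign}(w_i^\top u_n)$, $B_n:=\operatorname{sign}(w_j^\top u_n)$, the product gate is $g_n=\mf{1}(s_n=A_n=B_n)=\mf{1}(A_n=B_n)\mf{1}(s_n=A_n)$. Conditioning on $\mathcal{G}$ and integrating the independent weights $w_i\perp w_j$ and signs $s_n$: for each fixed $n$ rotational invariance makes $A_n,B_n$ independent uniform $\pm$, so $\mf{P}(g_n=1\,|\,\mathcal{G})=\tfrac14$ exactly, i.e. $g_n\sim\operatorname{ber}(\tfrac14)$ and $g_n\perp\mathcal{G}$ \emph{marginally}. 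The genuine finite-$d$ obstruction is carried by pairwise dependence: Sheppard's orthant formula gives $\operatorname{Cov}(g_n,g_{n'}\,|\,\mathcal{G})=\tfrac{1}{4\pi^2}\big(\arcsin(u_n^\top u_{n'})\big)^2\geq0$, nonzero unless the data directions are orthogonal. Under Assumptions \ref{assum_1}--\ref{assum_2} in the proportional regime $d/N\rw\gamma$, LeCun initialization forces $u_n^\top u_{n'}\rw0$ a.s. for $n\neq n'$, so these covariances (and the analogous higher joint cumulants) are $o(1)$; hence the exchangeable vector $(g_n)_n$ converges to i.i.d. $\operatorname{ber}(\tfrac14)$ and decouples from $(x_n x_n^\top)_n$. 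Feeding this decoupled diagonal into the generalized Marchenko--Pastur theorem (Proposition \ref{GMP}) then yields that $L_{ij}$ and $\tfrac1N\hat{X}_N\hat{\Lambda}_N\hat{X}_N^\top$, with $\hat{\Lambda}_N$ i.i.d. $\operatorname{ber}(\tfrac14)$ independent of $\hat{X}_N\stackrel{d}{=}X_N$, share the same limiting spectrum, in particular the same $\lim_{d,N\rw\infty}\tfrac1d\|\cdot\|_{\operatorname{F}}^2$ used to define $u_{ij},h_{ij}$ downstream.

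The main obstacle is precisely this cross-sample decoupling for the \emph{product} of two activation gates. Unlike Part 1, a single reflection cannot remove the dependence, since $g_n$ feels $w_i,w_j$ jointly; and no exact finite-$d$ coupling can exist, as the distribution of $L_{ij}$ itself already disagrees for small $d$ (e.g. at $d=1$ one has $g_n\equiv0$ whenever $\operatorname{sign}w_i\neq\operatorname{sign}w_j$, so $L_{ij}=0$ with probability $\tfrac12$ while the $\operatorname{ber}(\tfrac14)$ model is $0$ with probability $(\tfrac34)^N$). I therefore expect the crux to be a quantitative decorrelation estimate: bounding $\sum_{n\neq n'}\operatorname{Cov}(g_n,g_{n'})=o(N^2)$ and the matching higher moments through $\arcsin(u_n^\top u_{n'})=O(|u_n^\top u_{n'}|)$ together with the concentration $u_n^\top u_{n'}=O(d^{-1/2})$, so that the Gram-level law of $(\hat{\Lambda}_N,\hat{X}_N)$ matches in the proportional limit and Proposition \ref{GMP} applies; this is exactly where the asymptotic orthogonality forced by the LeCun scaling of the weights enters.
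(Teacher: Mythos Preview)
For Part~1 your reflection argument is correct and is the same sign-symmetry idea as the paper's proof, implemented slightly differently: the paper introduces independent Rademacher row and column flips $\xi\in\{\pm1\}^d,\,\eta\in\{\pm1\}^N$, sets $\tilde X_N=\diag(\xi)X_N\diag(\eta)$, and notes that $\tilde L_N(n,n)=\mathbf{1}(w_i^\top\diag(\xi)x_n\,\eta_n>0)$ is determined by $\eta_n$ alone once everything else is fixed, hence i.i.d.\ $\operatorname{ber}(1/2)$ independent of $\hat X_N:=\diag(\xi)X_N$. Your version (conditioning on the Gram data $(x_nx_n^\top)_n$ and extracting the residual sign $s_n$) reaches the same conclusion with a bit less notation.

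For Part~2 you have actually identified a genuine gap in the paper. The paper repeats the Rademacher construction and asserts that the product gate $\mathbf{1}(w_i^\top\diag(\xi)x_n\,\eta_n>0)\cdot\mathbf{1}(w_j^\top\diag(\xi)x_n\,\eta_n>0)$ is i.i.d.\ $\operatorname{ber}(1/4)$ independent of $\hat X_N$; this is not correct, because a single column flip $\eta_n$ cannot re-randomize both indicators simultaneously, and conditionally on $\hat X_N$ the signs $\operatorname{sign}(w_i^\top\hat x_n)$ remain correlated across $n$ through the shared weight $w_i$ (the covariance is governed by $\hat x_n^\top\hat x_{n'}/d=x_n^\top x_{n'}/d$). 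Your $d=1$ counterexample shows no exact coupling can exist, so the lemma as written is false for Part~2. Your asymptotic salvage --- controlling $\sum_{n\neq n'}\operatorname{Cov}(g_n,g_{n'})=o(N^2)$ via Sheppard's formula and $u_n^\top u_{n'}=O(d^{-1/2})$, so that the gate vector becomes asymptotically i.i.d.\ $\operatorname{ber}(1/4)$ and decouples from the data --- is the right repair and suffices for the only downstream use (Proposition~\ref{NN_MSE_MP} needs just the limiting spectral distribution); making it rigorous would require either a Lindeberg-type comparison in the spirit of Lemma~\ref{decoupling} or a direct argument that the empirical spectral law is insensitive to the $O(d^{-1})$ gate covariances.
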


\begin{proof}
Let $\xi_1,\cdots,\xi_p,\eta_1,\cdots,\eta_N$  be i.i.d. Radamacher random variables. Let
$$
\tX_N=\diag(\xi)X_N\diag(\eta),\quad \tL_N=\diag(1(w_i'\tx_1>0),\cdots,1(w_i'\tx_N>0)).
$$
Here $\wt{x}_1,\cdots,\wt{x}_N$ are column vectors of $\wt{X}_N$.
Since entries of $X_N$ are i.i.d. centered, $\{\tX_N,\tL_N\}\stackrel{d}{=}\{X_N,\Lambda_N\}$. Hence
\bea
G_{ii}\stackrel{d}{=}\frac{1}{N}\tX_N\tL_N\tX_N&=\frac{1}{N}\diag(\xi)X_N\diag(\eta)\tL_N\diag(\eta)X_N\diag(\xi)\\
&=\frac{1}{N}\diag(\xi)X_N\tL_NX_N\diag(\xi).
\eea
Clearly $\diag(\xi)X_N\stackrel{d}{=}X_N$, then it suffices to show that $\tL_n$ is diagonal with i.i.d. $\operatorname{ber}(\frac{1}{2})$ entries independent of $\{\xi,X_N\}$. To see this, we have
$$
\tL_N(n,n)=1(w_i'\diag(\xi)x_n\eta_n>0),\quad 1\leq n\leq N.
$$
Then the required property follows from that $(\eta_n)_n$ are i.i.d. valuing in $\{1,-1\}$ with equal probability.

For $G_{ij} (i\ne j)$, the proof is the same, except that
$$
\tL_N(n,n)=1(w_i'\diag(\xi)x_n\eta_n>0)\cdot1( w_j'\diag(\xi)x_n\eta_n>0),\quad 1\leq n\leq N
$$
are i.i.d. $\operatorname{ber}(\frac{1}{4})$. 
\end{proof}

A probability measure $\mu$ is said to have the Marchenko-Pastur distribution with parameter $y>0$ and $\sigma^2>0$, denoted as $\text{MP}(y,\sigma^2)$, if
$$
\mu(dx)=(1-y^{-1})1(y>1)\delta_0(dx)+\frac
{\sqrt{(\lambda_+-x)(x-\lambda_-)}}{2\pi\sigma^2y x}1_{(\lambda_-,\lambda_+)}(x)dx.
$$
Here $\lambda_+=\sigma^2(1+\sqrt{y})^2$, $\lambda_-=\sigma^2(1-\sqrt{y})^2$.
The Stieltjes transform of $\text{MP}(y,\sigma^2)$ is given by
$$
s(z)=\frac{\sigma^2(1-y)-z+\sqrt{(z-\sigma^2-y\sigma^2)^2-4y\sigma^4}}{2yz\sigma^2},
$$
or equivalently by the equation (writing $s=s_{\mu}(z)$ for short)
\bea\label{MPs2}
yz\sigma^2 s^2+(z-\sigma^2(1-y))s+1=0.
\eea

\begin{prop}\label{NN_MSE_MP}
     As $d,N\rw\infty, \frac
    {d}{N}\rw \gamma\in (0,+\infty)$, we have
    \begin{enumerate}
        \item $\mu_{L_{ii}}$ converges weakly almost surely to $\text{MP}(2\gamma,\frac{1}{2})$.
        \item For $i\ne j$, $\mu_{L_{ij}}$ converges weakly almost surely to $\text{MP}(4\gamma,\frac{1}{4})$.
    \end{enumerate}
\end{prop}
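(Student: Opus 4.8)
The plan is to reduce both claims to the generalized Marchenko--Pastur theorem (Proposition~\ref{GMP}) via the decoupling Lemma~\ref{decouplingHanin}, and then match the resulting fixed-point equation with \eqref{MPs2}. I carry out the case $L_{ii}$ with $p=\tfrac12$ in detail; the case $L_{ij}$ for $i\ne j$ is identical with $p=\tfrac14$. First, by Lemma~\ref{decouplingHanin} (coupling the auxiliary Rademacher variables consistently across $N$, so the distributional identity holds jointly in $N$) it suffices to analyze $\frac1N\hat X_N\hat\Lambda_N\hat X_N^\top=\frac{d}{N}\hat A_N$, where $\hat A_N:=\frac1d\hat X_N\hat\Lambda_N\hat X_N^\top$, $\hat X_N$ has i.i.d.\ $\ml N(0,1)$ entries, $\hat\Lambda_N$ is diagonal with i.i.d.\ $\operatorname{ber}(p)$ entries independent of $\hat X_N$, and $d/N\rw\gamma$.

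Next, I would apply Proposition~\ref{GMP} to $\hat A_N$: its hypotheses hold since $\hat X_N$ has i.i.d.\ mean-zero variance-one entries and $\hat\Lambda_N$ is independent of $\hat X_N$. By the strong law of large numbers, the fraction of unit diagonal entries of $\hat\Lambda_N$ tends a.s.\ to $p$, so $\mu_{\hat\Lambda_N}$ converges weakly a.s.\ to the deterministic measure $\nu:=(1-p)\delta_0+p\delta_1$. Hence $\mu_{\hat A_N}$ converges weakly a.s.\ to a deterministic $\mu$ whose Stieltjes transform $s=s_\mu(z)$ solves \eqref{GMPeq}. Since $\int_{\mb R}\frac{t\,d\nu(t)}{1+ts}=\frac{p}{1+s}$, this reduces to $s=\big(\frac{p}{\gamma(1+s)}-z\big)^{-1}$, i.e.\ $\gamma z s^2+(\gamma z+\gamma-p)s+\gamma=0$.

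Finally, since $\frac1N\hat X_N\hat\Lambda_N\hat X_N^\top=\frac{d}{N}\hat A_N$ with $d/N\rw\gamma$, the weak a.s.\ limit of $\mu_{L_{ii}}$ is the dilation of $\mu$ by $\gamma$, whose Stieltjes transform is $\tilde s(w)=\gamma^{-1}s_\mu(w/\gamma)$. Plugging $z=w/\gamma$ and $s=\gamma\tilde s$ into the quadratic above gives $\gamma w\,\tilde s^2+(w+\gamma-p)\tilde s+1=0$. Comparing this term by term with the Marchenko--Pastur equation \eqref{MPs2}, written as $y\sigma^2 w\,\tilde s^2+(w-\sigma^2(1-y))\tilde s+1=0$, forces $y\sigma^2=\gamma$ and $\sigma^2=p$, hence $\sigma^2=p$ and $y=\gamma/p$. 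Since each of these equations has a unique solution in $\mb C^+$ for every $w\in\mb C^+$, the two solutions coincide, so $\mu_{L_{ii}}$ converges weakly a.s.\ to $\text{MP}(\gamma/p,p)$, which is $\text{MP}(2\gamma,\tfrac12)$ when $p=\tfrac12$ and $\text{MP}(4\gamma,\tfrac14)$ when $p=\tfrac14$.

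No single step is hard here — the decoupling is Lemma~\ref{decouplingHanin}, the spectral limit is Proposition~\ref{GMP}, and the rescaling mirrors the proof of Proposition~\ref{Linear_CE_MP}. I expect the main obstacle to be the bookkeeping — carrying the $\frac{d}{N}$-scaling correctly between $L_{ii}$ and $\frac1d\hat X_N\hat\Lambda_N\hat X_N^\top$, and identifying the rescaled fixed-point equation with \eqref{MPs2} at exactly the parameters $(y,\sigma^2)=(\gamma/p,p)$; as a sanity check, the resulting atom of mass $1-\frac1y=1-\frac{p}{\gamma}$ at $0$ (present when $\gamma>p$) matches $L_{ii}$ being a sum of about $pN$ rank-one matrices in dimension $d$.
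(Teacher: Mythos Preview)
Your proposal is correct and follows essentially the same route as the paper: invoke Lemma~\ref{decouplingHanin}, apply Proposition~\ref{GMP} with $\nu=(1-p)\delta_0+p\delta_1$, and match the resulting quadratic to \eqref{MPs2}. The only cosmetic difference is that the paper first identifies the limit of $T_N=\frac1d\hat X_N\hat\Lambda_N\hat X_N^\top$ as $\text{MP}(2\gamma,\frac{1}{2\gamma})$ and then rescales by $\gamma$ in the $\sigma^2$ parameter, whereas you carry the rescaling through the Stieltjes equation directly and treat both parts at once via a generic $p$; your parenthetical about coupling the Rademacher sequences consistently across $N$ is a nice touch the paper leaves implicit.
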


\begin{proof}
    From Lemma \ref{decouplingHanin}, we have $L_{ii}\stackrel{d}{=}\frac{d}{N}T_N$, where $T_N=\frac{1}{d}\hat{X}_N\hat{\Lambda}_N\hat{X}_N'$.
From Proposition \ref{GMP}, $\mu_{T_N}$ converges weakly almost surely to a deterministic measure $\mu_{T}$. And $s_{\mu_{T}}(z)$ is specified by the equation
\bea\label{GMPii}
s(z)=\frac{1}{\frac{1}{\gamma}\int_{\mathbb{R}}\frac
    {t\nu_1(dt)}{1+ts(z)}-z},\quad \forall z\in\mathbb{C}^+,
\eea
where $\nu_1=\frac{1}{2}\delta_0+\frac{1}{2}\delta_1$. The equation can be simplified as 
$$
zs^2+\l(z-\frac
{1}{2\gamma}+1\r)s+1=0.
$$
This is exactly (\ref{MPs2}) with $y=2\gamma,\ \sigma^2=\frac{1}{2\gamma}$. Then $\mu_{T_N}\rw MP(2\gamma,\frac
{1}{2\gamma})$ and hence $\mu_{L_{ii}}\rw MP(2\gamma,\frac
{1}{2})$. The $L_{ij}$ case in the same procedure, replacing $\nu_1$ in (\ref{GMPii}) with $\nu_2=\frac{3}{4}\delta_0+\frac{1}{4}\delta_1$. 
\end{proof}

\begin{prop}
    Suppose that $m$ is fixed and as $d,N\rw\infty, \frac
    {d}{N}\rw \gamma\in (0,+\infty)$. Then for $i\ne j$, we have
    \bea
\lim_{C\rw\infty}\lim_{d,N\rw\infty}\frac{\E\l[\|H_{ii}\|_{\operatorname{F}}^2\r]}{C^2d}&=\frac{1+2\gamma}{4m^2},\\
\lim_{C\rw\infty}\lim_{d,N\rw\infty}\frac{\E\l[\|H_{ij}\|_{\operatorname{F}}^2\r]}{Cd}&=\frac{1+4\gamma}{16m^2}.
        \eea
\end{prop}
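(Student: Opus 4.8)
The plan is to exploit the factorization already recorded in this subsection, namely $K_{ii}=\l(\sum_{k=1}^C V_{ki}^2\r)L_{ii}$ and $K_{ij}=\l(\sum_{k=1}^C V_{ki}V_{kj}\r)L_{ij}$ (these are the blocks written $H_{ii},H_{ij}$ in the statement), where $L_{ii},L_{ij}$ are functions of $(W,X_N)$ only and hence independent of $V$. Consequently the scalar prefactor and the matrix factor decouple in expectation:
\bea
\E\l[\|K_{ii}\|_{\operatorname{F}}^2\r]=\E\l[\l(\sum_{k=1}^C V_{ki}^2\r)^2\r]\E\l[\|L_{ii}\|_{\operatorname{F}}^2\r],\quad \E\l[\|K_{ij}\|_{\operatorname{F}}^2\r]=\E\l[\l(\sum_{k=1}^C V_{ki}V_{kj}\r)^2\r]\E\l[\|L_{ij}\|_{\operatorname{F}}^2\r].
\eea
Since the outer limit $C\rw\infty$ touches only the $V$-factor and the inner limit $d,N\rw\infty$ only the $L$-factor, it suffices to evaluate each factor separately and multiply.

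\textbf{The weight factor.} Because $V_{ki}\overset{\operatorname{i.i.d.}}{\sim}\ml{N}(0,\tfrac1m)$, the sum $\sum_{k=1}^C V_{ki}^2$ is $\tfrac1m$ times a $\chi^2_C$ variable, so $\E[\sum_k V_{ki}^2]=C/m$ and $\operatorname{Var}(\sum_k V_{ki}^2)=2C/m^2$, giving $\E[(\sum_k V_{ki}^2)^2]=(C^2+2C)/m^2$ and hence $\tfrac1{C^2}\E[(\sum_k V_{ki}^2)^2]\rw 1/m^2$. For the off-diagonal prefactor, expanding $(\sum_k V_{ki}V_{kj})^2=\sum_{k,l}V_{ki}V_{kj}V_{li}V_{lj}$ and using that $\E[V_{ki}V_{kj}]=0$ for $i\ne j$ kills all cross terms $k\ne l$, leaving only $k=l$: $\E[(\sum_k V_{ki}V_{kj})^2]=\sum_k\E[V_{ki}^2]\E[V_{kj}^2]=C/m^2$, so $\tfrac1C\E[(\sum_k V_{ki}V_{kj})^2]\rw 1/m^2$. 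These are elementary Gaussian moment computations.

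\textbf{The matrix factor.} By Proposition~\ref{NN_MSE_MP}, $\mu_{L_{ii}}$ converges weakly almost surely to $\text{MP}(2\gamma,\tfrac12)$ and $\mu_{L_{ij}}$ to $\text{MP}(4\gamma,\tfrac14)$. A short moment computation (or the standard Narayana expansion of Marchenko--Pastur moments) shows that the second moment of $\text{MP}(y,\sigma^2)$ is $\sigma^4(1+y)$; thus $\int x^2\,d\,\text{MP}(2\gamma,\tfrac12)=\tfrac14(1+2\gamma)$ and $\int x^2\,d\,\text{MP}(4\gamma,\tfrac14)=\tfrac1{16}(1+4\gamma)$. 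Since $\tfrac1d\|L_{ii}\|_{\operatorname{F}}^2=\tfrac1d\operatorname{tr}(L_{ii}^2)=\int x^2\,d\mu_{L_{ii}}$ is exactly the empirical second moment of $\mu_{L_{ii}}$, the weak a.s. convergence of $\mu_{L_{ii}}$ forces $\tfrac1d\|L_{ii}\|_{\operatorname{F}}^2\rw\tfrac14(1+2\gamma)$ a.s., and likewise $\tfrac1d\|L_{ij}\|_{\operatorname{F}}^2\rw\tfrac1{16}(1+4\gamma)$ a.s. Multiplying the two factors and then sending $C\rw\infty$ yields $\tfrac1{C^2d}\E[\|K_{ii}\|_{\operatorname{F}}^2]\rw\tfrac1{m^2}\cdot\tfrac{1+2\gamma}{4}$ and $\tfrac1{Cd}\E[\|K_{ij}\|_{\operatorname{F}}^2]\rw\tfrac1{m^2}\cdot\tfrac{1+4\gamma}{16}$, which is the claim.

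\textbf{Main obstacle.} The only nonroutine step is passing from almost-sure convergence of the empirical spectral measure to convergence of the \emph{expected} Frobenius norm, i.e. interchanging $\lim_{d,N\rw\infty}$ with $\E[\cdot]$ in $\E[\tfrac1d\|L_{ii}\|_{\operatorname{F}}^2]$. I would close this via the decoupled representation $L_{ii}\stackrel{d}{=}\tfrac1N\hat X_N\hat\Lambda_N\hat X_N^\top$ from Lemma~\ref{decouplingHanin}: this gives the a priori bound $\tfrac1d\|L_{ii}\|_{\operatorname{F}}^2\le\|L_{ii}\|^2\le\l(\tfrac1N\|\hat X_N\|^2\r)^2$, and the largest eigenvalue of the Wishart-type matrix $\tfrac1N\hat X_N\hat X_N^\top$ is bounded (with uniformly bounded moments and exponentially small exceptional probability) for large $d,N$; this supplies the uniform integrability needed to move the limit inside the expectation. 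An alternative that avoids random matrix edge estimates is to compute $\E[\tfrac1d\operatorname{tr}(L_{ii}^2)]$ exactly for finite $d,N$ by conditioning on $\hat\Lambda_N$ and performing a Wick/graph-counting sum, then letting $d,N\rw\infty$ directly; this reproduces $\tfrac14(1+2\gamma)$ and $\tfrac1{16}(1+4\gamma)$.
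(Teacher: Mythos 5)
Your proposal is correct and follows essentially the same route as the paper: the same factorization of $K_{ii},K_{ij}$ into a $V$-prefactor times $L_{ii},L_{ij}$, the same elementary Gaussian moment computation giving $1/m^2$ for the normalized prefactors, and the same appeal to Proposition~\ref{NN_MSE_MP} (via Lemma~\ref{decouplingHanin}) with the Marchenko--Pastur second moment $\sigma^4(1+y)$ yielding $\tfrac14(1+2\gamma)$ and $\tfrac1{16}(1+4\gamma)$. The only difference is that you explicitly justify interchanging the $d,N\rw\infty$ limit with the expectation (and the passage from weak convergence to second-moment convergence) via largest-eigenvalue/uniform-integrability control, a step the paper's proof passes over silently; this is a welcome tightening rather than a deviation.
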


\begin{proof}
    From Proposition \ref{NN_MSE_MP} we have
    \bea\label{eqRHSuii}
    \frac{\|L_{ii}\|_{\operatorname{F}}^2}{d}=\int_{\mb{R}} x^2\mu_{L_{ii}}(dx)\rw \int_{\mb{R}} x^2\mu_{MP,2\gamma,\frac{1}{2}}(dx)=\frac{1+2\gamma}{4}\quad a.s.
    \eea
    For $i\ne j$, 
    \bea\label{eqRHSuij}
    \frac{\|L_{ij}\|_{\operatorname{F}}^2}{d}=\int_{\mb{R}} x^2\mu_{L_{ij}}(dx)\rw \int_{\mb{R}} x^2\mu_{MP,4\gamma,\frac{1}{4}}(dx)=\frac{1+4\gamma}{16}\quad a.s.
    \eea
    Since entries of $V$ are i.i.d. $\ml{N}(0,\frac{1}{m})$ independent of $\{L_{ii},L_{ij}\}$,
    $$
    \lim_{C\rw\infty}\lim_{d,N\rw\infty}\frac{\E\l[\|H_{ii}\|_{\operatorname{F}}^2\r]}{C^2d}=\lim_{C\rw\infty}\E\l[\l(\frac{\sum_{k=1}^C V_{ki}^2}{C}\r)^2\r]\lim_{d,N\rw\infty}\E\l[\frac{\|L_{ii}\|_{\operatorname{F}}^2}{d}\r]=\frac{1+2\gamma}{4m^2},
    $$
    $$
    \lim_{C\rw\infty}\lim_{d,N\rw\infty}\frac{\E\l[\|H_{ij}\|_{\operatorname{F}}^2\r]}{Cd}=\lim_{C\rw\infty}\E\l[\l(\frac{\sum_{k=1}^C V_{ki}V_{kj}}{\sqrt{C}}\r)^2\r]\lim_{d,N\rw\infty}\E\l[\frac{\|L_{ii}\|_{\operatorname{F}}^2}{d}\r]=\frac{1+4\gamma}{16m^2}.
    $$
\end{proof}

\subsubsection{Proof for the Output-layer Hessian with CE Loss}
\label{appendix_out_weights_ce}

Denote that
$$
 G_{ii}:=\frac{\partial^2\ell_{\text{CE}}(W,V)}{\partial v_i  \partial v_i^\top} =  \frac{1}{N}\sum_{n= 1}^N  p_{n,i} (1-p_{n,i}) \sigma(Wx_n) \sigma(Wx_n)^\top, 
$$
and for $i\ne j$,
$$
 G_{ij}:=\frac{\partial^2\ell_{\text{CE}}(W,V)}{\partial v_i  \partial v_j^\top} =  \frac{1}{N}\sum_{n= 1}^N  p_{n,i}p_{n,j} \sigma(Wx_n) \sigma(Wx_n)^\top.
$$

Let $Z$ be a $d\times N$ random matrix with entries i.i.d. $N(0,1)$ independent of $V$, and $z_1,\cdots,z_N$ are column vectors of $Z$. Define
$$
\wt{p}_{i,n}=\frac{\exp(\sigma(z_n)^\top v_i)}{\sum_{c=1}^C\exp(\sigma(z_n)^\top v_c)},\quad n\in [N],i\in [C],
$$
$$
\wt{G}_{ii} =  \frac{1}{N}\sum_{n= 1}^N  \wt{p}_{n,i} (1-\wt{p}_{n,i}) \sigma(z_n) \sigma(z_n)^\top, 
$$
$$
\wt{G}_{ij} =  \frac{1}{N}\sum_{n= 1}^N  \wt{p}_{n,i} \wt{p}_{n,j} \sigma(z_n) \sigma(z_n)^\top. 
$$
Following the proof of Lemma \ref{decoupling}\ref{decoupling_NN_CE_Hww} with the Lindeberg principle, one can show that for $k,l\in [m],\ k\ne l$,
$$
\lim_{d,N\rw\infty}\l(\E\l[G_{ii}(k,k)^2\r]-\E\l[\wt{G}_{ii}(k,k)^2\r]\r)=0,
$$
$$
\lim_{d,N\rw\infty}\l(\E\l[G_{ii}(k,l)^2\r]-\E\l[\wt{G}_{ii}(k,l)^2\r]\r)=0,
$$
$$
\lim_{d,N\rw\infty}\l(\E\l[G_{ij}(k,k)^2\r]-\E\l[\wt{G}_{ij}(k,k)^2\r]\r)=0,
$$
$$
\lim_{d,N\rw\infty}\l(\E\l[G_{ij}(k,l)^2\r]-\E\l[\wt{G}_{ij}(k,l)^2\r]\r)=0.
$$
Then since $m$ is fixed and
\bea
\E\l[\|G_{ii}\|_{\operatorname{F}}^2\r]&=m\E\l[G_{ii}(k,k)^2\r]+m(m-1)\E\l[G_{ii}(k,l)^2\r],\\
\E\l[\|G_{ij}\|_{\operatorname{F}}^2\r]&=m\E\l[G_{ij}(k,k)^2\r]+m(m-1)\E\l[G_{ij}(k,l)^2\r],
\eea
\bea\label{eqRHSqii}
\E\l[\|\wt{G}_{ii}\|_{\operatorname{F}}^2\r]&=m\E\l[\wt{G}_{ii}(k,k)^2\r]+m(m-1)\E\l[\wt{G}_{ii}(k,l)^2\r],
\eea
\bea\label{eqRHSqij}
\E\l[\|\wt{G}_{ij}\|_{\operatorname{F}}^2\r]&=m\E\l[\wt{G}_{ij}(k,k)^2\r]+m(m-1)\E\l[\wt{G}_{ij}(k,l)^2\r],
\eea
we have
\bea\label{eq60}
\lim_{d,N\rw\infty}\l(\E\l[\|G_{ii}\|_{\operatorname{F}}^2\r]-\E\l[\|\wt{G}_{ii}\|_{\operatorname{F}}^2\r]\r)&=0,\\
\lim_{d,N\rw\infty}\l(\E\l[\|G_{ij}\|_{\operatorname{F}}^2\r]-\E\l[\|\wt{G}_{ij}\|_{\operatorname{F}}^2\r]\r)&=0.\\
\eea
From
\bea
\wt{G}_{ii}(k,k)&=\frac{1}{N}\sum_{n= 1}^N  \wt{p}_{n,i} (1-\wt{p}_{n,i}) \sigma(z_{n,k})^2,\\
\wt{G}_{ii}(k,l)&=\frac{1}{N}\sum_{n= 1}^N  \wt{p}_{n,i} (1-\wt{p}_{n,i}) \sigma(z_{n,k}) \sigma(z_{n,l}),\\
\wt{G}_{ij}(k,k)&=\frac{1}{N}\sum_{n= 1}^N  \wt{p}_{n,i}\wt{d}_{n,j} \sigma(z_{n,k})^2,\\
\wt{G}_{ij}(k,l)&=\frac{1}{N}\sum_{n= 1}^N  \wt{p}_{n,i}\wt{p}_{n,j} \sigma(z_{n,k}) \sigma(z_{n,l}),\\
\eea
we have
\bea
\lim_{p,N\rw\infty}\E\l[\wt{G}_{ii}(k,k)^2\r]&=\E\l[\wt{p}_{1,i}\wt{p}_{2,i}(1-\wt{p}_{1,i})(1-\wt{p}_{2,i})\sigma(z_{1,k})^2\sigma(z_{2,k})^2\r],\\
\lim_{p,N\rw\infty}\E\l[\wt{G}_{ii}(k,l)^2\r]&=\E\l[\wt{p}_{1,i}\wt{p}_{2,i}(1-\wt{p}_{1,i})(1-\wt{p}_{2,i})\sigma(z_{1,k})\sigma(z_{1,l})\sigma(z_{2,k})\sigma(z_{2,l})\r],\\
\lim_{p,N\rw\infty}\E\l[\wt{G}_{ij}(k,k)^2\r]&=\E\l[\wt{p}_{1,i}\wt{p}_{2,i}\wt{p}_{1,j}\wt{p}_{2,j}\sigma(z_{1,k})^2\sigma(z_{2,k})^2\r],\\
\lim_{p,N\rw\infty}\E\l[\wt{G}_{ij}(k,l)^2\r]&=\E\l[\wt{p}_{1,i}\wt{p}_{2,i}\wt{p}_{1,j}\wt{p}_{2,j}\sigma(z_{1,k})\sigma(z_{1,l})\sigma(z_{2,k})\sigma(z_{2,l})\r].
\eea
Therefore
\bea\label{eq63}
&\lim_{C\rw\infty}\lim_{p,N\rw\infty}C^2\E\l[\wt{G}_{ii}(k,k)^2\r]\\
=&\lim_{C\rw\infty}\E\l[\E\l[\frac{\exp\big((\sigma(z_1)+\sigma(z_2))^\top v_i\big)}{\l(\frac{1}{C}\sum_{c=1}^C\exp(\sigma(z_1)^\top v_c)\r)\l(\frac{1}{C}\sum_{c=1}^C\exp(\sigma(z_2)^\top v_c)\r)}(1-\wt{p}_{1,i})(1-\wt{p}_{2,i})\sigma(z_{1,k})^2\sigma(z_{2,k})^2\ \bigg|\ z_1,z_2 \r]\r]\\
=&\E\l[\frac{\E\l[ \exp\big((\sigma(z_1)+\sigma(z_2))^\top v_i\big)\sigma(z_{1,k})^2\sigma(z_{2,k})^2 \bigg|\ z_1,z_2 \r]}{\l(\E\l[ \exp\big(\sigma(z_1)^\top v_i\big)\bigg|\ z_1 \r]\r)\l(\E\l[\exp\big(\sigma(z_2)^\top v_i\big)\bigg|\ z_2 \r]\r) }\r]\\
=&\E\l[ \frac{\exp\big(\frac{1}{2m}|\sigma(z_1)+\sigma(z_2)|^2\big)\sigma(z_{1,k})^2\sigma(z_{2,k})^2}{\exp\big(\frac{1}{2m}(|\sigma(z_1)|^2+|\sigma(z_2)|^2)\big)} \r]\\
=&\E\l[\exp\big(\frac{1}{m}\sigma(z_{1,1})\sigma(z_{1,2})\big)\sigma(z_{1,1})^2\sigma(z_{1,2})^2\r]\l( \E\l[\exp\big(\frac{1}{m}\sigma(z_{1,1})\sigma(z_{1,2})\big)\r] \r)^{m-1}\\
=&a_{12}b_1^{m-1},
\eea

\bea
&\lim_{C\rw\infty}\lim_{p,N\rw\infty}C^2\E\l[\wt{G}_{ii}(k,l)^2\r]\\
=&\lim_{C\rw\infty}\E\l[\E\l[\frac{\exp\big((\sigma(z_1)+\sigma(z_2))^\top v_i\big)(1-\wt{p}_{1,i})(1-\wt{p}_{2,i})}{\l(\frac{1}{C}\sum_{c=1}^C\exp(\sigma(z_1)^\top v_c)\r)\l(\frac{1}{C}\sum_{c=1}^C\exp(\sigma(z_2)^\top v_c)\r)} \sigma(z_{1,k})\sigma(z_{2,k})\sigma(z_{1,l})\sigma(z_{2,l})\ \bigg|\ z_1,z_2 \r]\r]\\
=&\E\l[\frac{\E\l[ \exp\big((\sigma(z_1)+\sigma(z_2))^\top v_i\big)\sigma(z_{1,k})\sigma(z_{2,k})\sigma(z_{1,l})\sigma(z_{2,l}) \bigg|\ z_1,z_2 \r]}{\l(\E\l[ \exp\big(\sigma(z_1)^\top v_i\big)\bigg|\ z_1 \r]\r)\l(\E\l[\exp\big(\sigma(z_2)^\top v_i\big)\bigg|\ z_2 \r]\r)}\r]\\
=&\E\l[ \frac{\exp\big(\frac{1}{2m}|\sigma(z_1)+\sigma(z_2)|^2\big)\sigma(z_{1,k})\sigma(z_{2,k})\sigma(z_{1,l})\sigma(z_{2,l})}{\exp\big(\frac{1}{2m}(|\sigma(z_1)|^2+|\sigma(z_2)|^2)\big)} \r]\\
=&\E\l[\exp\big(\frac{1}{m}\sigma(z_{1,1})\sigma(z_{1,2})\big)\sigma(z_{1,1})\sigma(z_{1,2})\r]^2\l( \E\l[\exp\big(\frac{1}{m}\sigma(z_{1,1})\sigma(z_{1,2})\big)\r] \r)^{m-2}\\
=&a_{11}^2b_1^{m-2},
\eea

\bea
&\lim_{C\rw\infty}\lim_{p,N\rw\infty}C^4\E\l[\wt{G}_{ij}(k,k)^2\r]\\
=&\lim_{C\rw\infty}\E\l[\E\l[\frac{\exp\big((\sigma(z_1)+\sigma(z_2))^\top (v_i+v_j)\big)}{\l(\frac{1}{C}\sum_{c=1}^C\exp(\sigma(z_1)^\top v_c)\r)^2\l(\frac{1}{C}\sum_{c=1}^C\exp(\sigma(z_2)^\top v_c)\r)^2}\sigma(z_{1,k})^2\sigma(z_{2,k})^2\ \bigg|\ z_1,z_2 \r]\r]\\
=&\E\l[\frac{\E\l[ \exp\big((\sigma(z_1)+\sigma(z_2))^\top (v_i+v_j)\big)\sigma(z_{1,k})^2\sigma(z_{2,k})^2 \bigg|\ z_1,z_2 \r]}{\l(\E\l[ \exp\big(\sigma(z_1)^\top v_i\big)\bigg|\ z_1 \r]\r)^2\l(\E\l[\exp\big(\sigma(z_2)^\top v_i\big)\bigg|\ z_2 \r]\r)^2}\r]\\
=&\E\l[ \frac{\exp\big(\frac{1}{m}|\sigma(z_1)+\sigma(z_2)|^2\big)\sigma(z_{1,k})^2\sigma(z_{2,k})^2}{\exp\big(\frac{1}{m}(|\sigma(z_1)|^2+|\sigma(z_2)|^2)\big)} \r]\\
=&\E\l[\exp\big(\frac{2}{m}\sigma(z_{1,1})\sigma(z_{1,2})\big)\sigma(z_{1,1})^2\sigma(z_{1,2})^2\r]\l( \E\l[\exp\big(\frac{2}{m}\sigma(z_{1,1})\sigma(z_{1,2})\big)\r] \r)^{m-1}\\
=&a_{22}b_2^{m-1},
\eea

\bea\label{eq66}
&\lim_{C\rw\infty}\lim_{p,N\rw\infty}C^4\E\l[\wt{G}_{ij}(k,k)^2\r]\\
=&\lim_{C\rw\infty}\E\l[\E\l[\frac{\exp\big((\sigma(z_1)+\sigma(z_2))^\top (v_i+v_j)\big)}{\l(\frac{1}{C}\sum_{c=1}^C\exp(\sigma(z_1)^\top v_c)\r)^2\l(\frac{1}{C}\sum_{c=1}^C\exp(\sigma(z_2)^\top v_c)\r)^2}\sigma(z_{1,k})\sigma(z_{2,k})\sigma(z_{1,l})\sigma(z_{2,l})\ \bigg|\ z_1,z_2 \r]\r]\\
=&\E\l[\frac{\E\l[ \exp\big((\sigma(z_1)+\sigma(z_2))^\top (v_i+v_j)\big)\sigma(z_{1,k})\sigma(z_{2,k})\sigma(z_{1,l})\sigma(z_{2,l}) \bigg|\ z_1,z_2 \r]}{\l(\E\l[ \exp\big(\sigma(z_1)^\top v_i\big)\bigg|\ z_1 \r]\r)^2\l(\E\l[\exp\big(\sigma(z_2)^\top v_i\big)\bigg|\ z_2 \r]\r)^2}\r]\\
=&\E\l[ \frac{\exp\big(\frac{1}{m}|\sigma(z_1)+\sigma(z_2)|^2\big)\sigma(z_{1,k})\sigma(z_{2,k})\sigma(z_{1,l})\sigma(z_{2,l})}{\exp\big(\frac{1}{m}(|\sigma(z_1)|^2+|\sigma(z_2)|^2)\big)} \r]\\
=&\E\l[\exp\big(\frac{2}{m}\sigma(z_{1,1})\sigma(z_{1,2})\big)\sigma(z_{1,1})\sigma(z_{1,2})\r]^2\l( \E\l[\exp\big(\frac{2}{m}\sigma(z_{1,1})\sigma(z_{1,2})\big)\r] \r)^{m-2}\\
=&a_{21}^2b_2^{m-2}.
\eea

Then from \eqref{eqRHSqii}-\eqref{eq60} and \eqref{eq63}-\eqref{eq66} we obtain \eqref{NN_CV_Hvv}. The whole proof is then completed.

\clearpage
\section{More Numerical Results}
\label{sec_experiments}

We now provide some more numerical evidence to support our theory.  We use the  the same Gaussian synthetic dataset as in Section \ref{sec_closer_look} (which follows Assumption \ref{assum_1}) and LeCun initialization (which follows Assumption \ref{assum_2}),  and try different $C$. More details can be seen in Appendix \ref{appendix_experimental_details}.

\paragraph{Case 1: linear models with MSE loss.} In Figure \ref{fig:hessian_linear_mse_different_c}, we present the Hessian of linear models under MSE loss. By the calculation of \eqref{eq_linear_model_mse_Hessian} in Section \ref{sec_preliminaries}, the Hessian is strictly block-diagonal.  The numerical results match the calculation.

\begin{figure}[h]
\vspace{-0.1cm}
    \centering
    \subfigure[\small $C = 10$]{\includegraphics[width=0.29\textwidth]{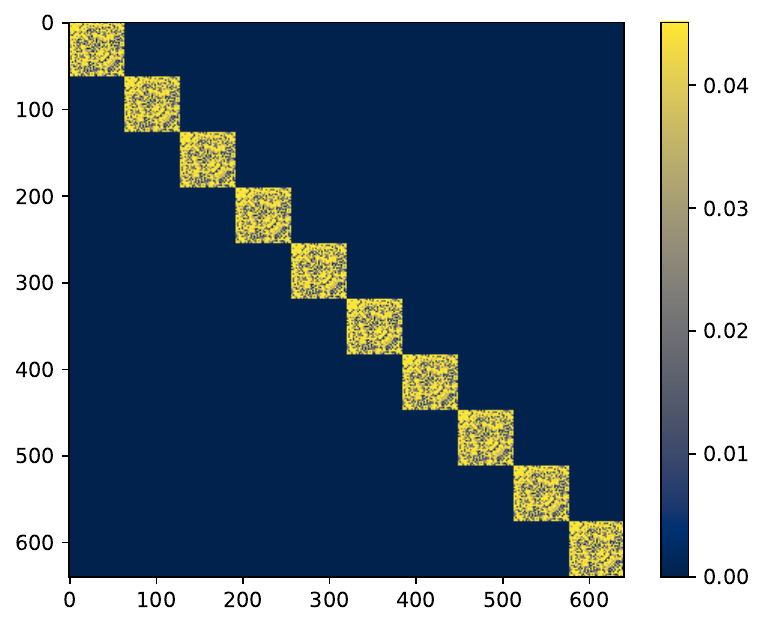}}
    \subfigure[\small $C = 100$]{\includegraphics[width=0.30\textwidth]{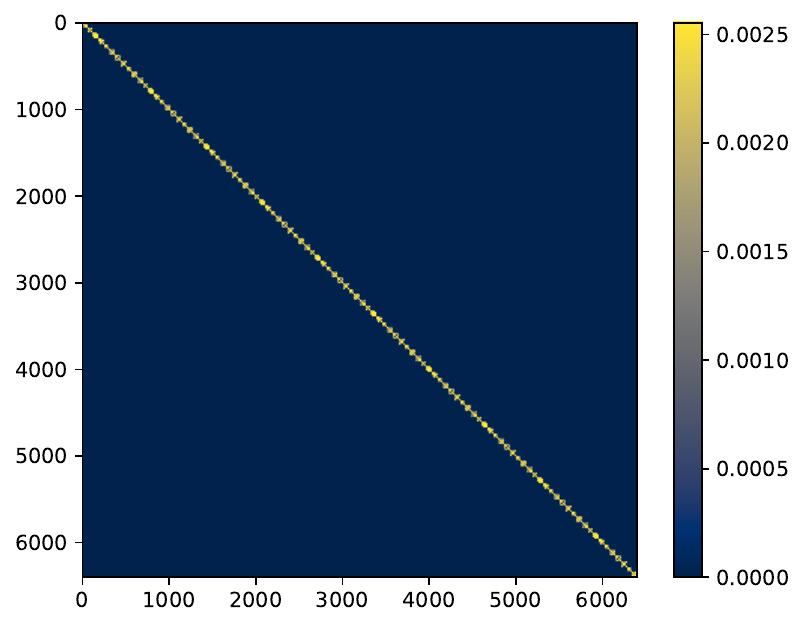}}
    \subfigure[\small $C = 1000$]{\includegraphics[width=0.30\textwidth]{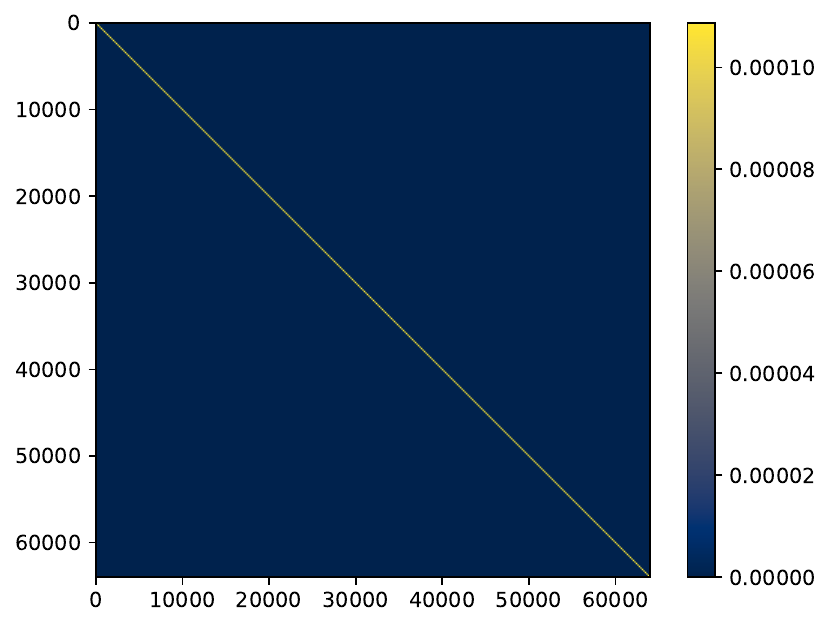}}
    \caption{\small  {\bf(a-c):}  The Hessian of {\bf Case 1: linear models with MSE loss}. We observe that the block-diagonal Hessian structure arises for all $C$. This is because the off-diagonal blocks are strictly zero in this case (see Eq. \eqref{eq_linear_model_mse_Hessian}).  }
  \label{fig:hessian_linear_mse_different_c}
\vspace{-0.3cm}
\end{figure}

\paragraph{Case 2: linear models with CE loss.} In Figure \ref{fig:hessian_linear_ce_different_c}, we present the Hessian of linear models under CE loss. The block-diagonal Hessian structure becomes \zhaoruirevise{clear} when $C$ increases, which matches our theoretical prediction in Theorem \ref{thm_linear_model}.

\begin{figure}[h]
\vspace{-0.3cm}
    \centering
    \subfigure[\small $C = 10$]{\includegraphics[width=0.29\textwidth]{images/0204-synthetic-gaussian-linear-dim-64-width-8-class-10-adam-MSEvisiondegree10_fullhessian_T_0.pdf}}
    \subfigure[\small $C = 100$]{\includegraphics[width=0.30\textwidth]{images/0204-synthetic-gaussian-linear-dim-64-width-8-class-100-adam-MSEvisiondegree10_fullhessian_T_0.pdf}}
    \subfigure[\small $C = 1000$]{\includegraphics[width=0.30\textwidth]{images/0204-synthetic-gaussian-linear-dim-64-width-8-class-1000-adam-MSEvisiondegree20_fullhessian_T_0.pdf}}
    \caption{\small  {\bf(a-c):}  The Hessian of {\bf Case 2: linear models with CE loss}.We observe that  the block-diagonal Hessian structure becomes \zhaoruirevise{clear} when $C$ increases. }
  \label{fig:hessian_linear_ce_different_c}
\vspace{-0.3cm}
\end{figure}

\paragraph{Case 3 and 4: 1-hidden-layer networks  with MSE and CE loss.}  In Figure \ref{fig:hessian_nn_mse_different_c} and \ref{fig:hessian_nn_ce_different_c}, we consider 1-hidden-layer network at random initialization. We present the hidden-layer Hessian $H_{ww}$ and output weights $H_{vv}$ to see if they match our theoretical prediction.  It can be seen that the block-diagonal structure becomes clearer as the number of classes $C$ increases, which matches the theoretical prediction.  These results hold for both MSE and CE  loss.

\begin{figure}[t]
 \vspace{-1.5cm} %
    \centering
    \subfigure[\small $C = 10$]{\includegraphics[width=0.30\textwidth]{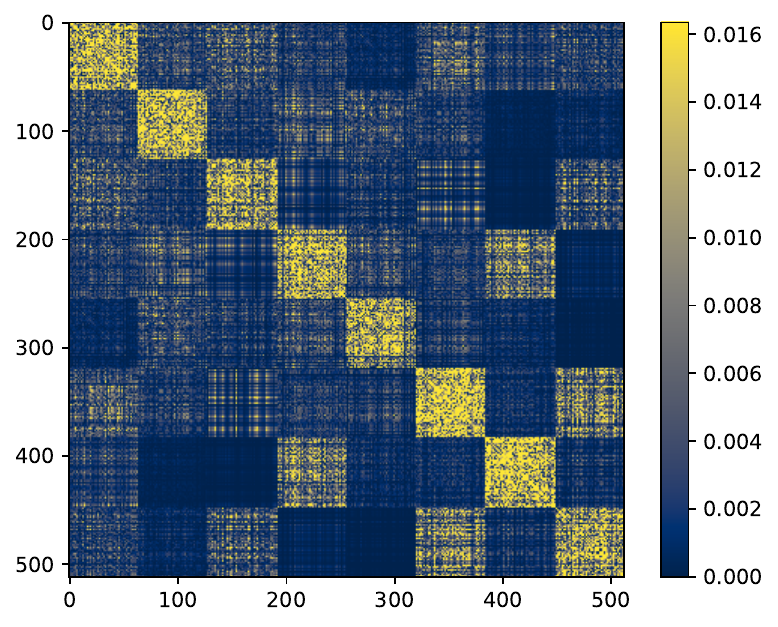}}
    \subfigure[\small $C = 100$]{\includegraphics[width=0.30\textwidth]{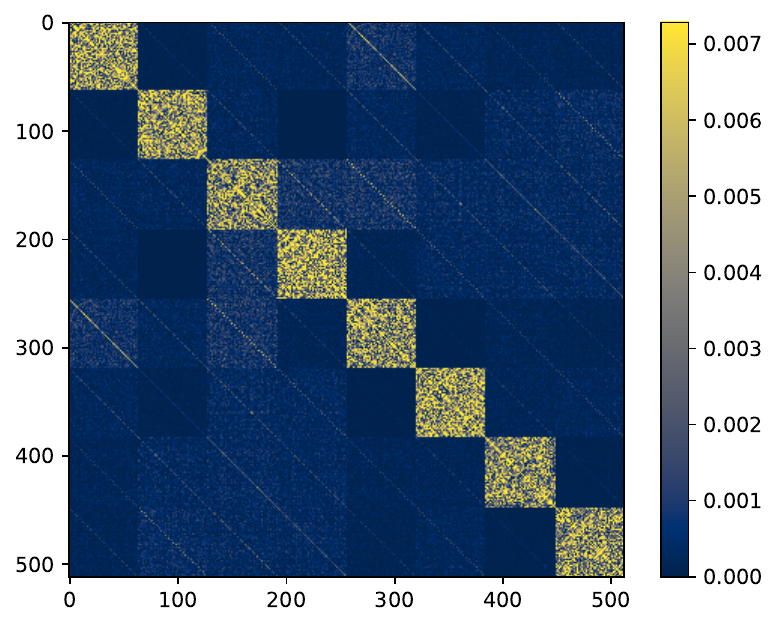}}
    \subfigure[\small $C = 1000$]{\includegraphics[width=0.30\textwidth]{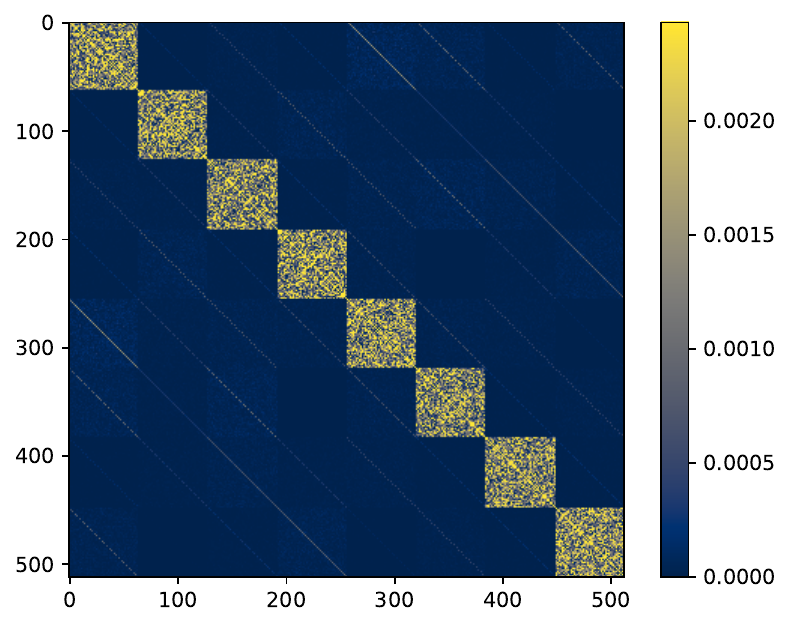}}
    \subfigure[\small $C = 10$]{\includegraphics[width=0.30\textwidth]{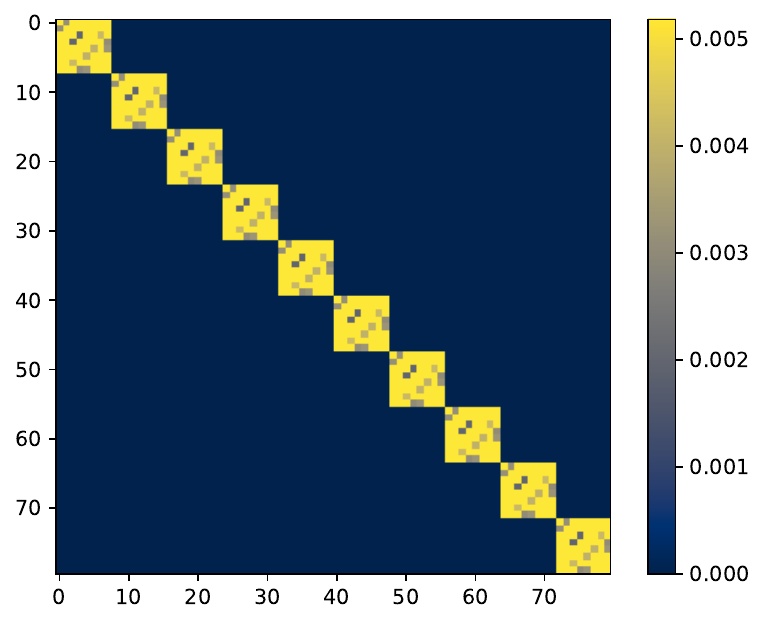}}
    \subfigure[\small $C = 100$]{\includegraphics[width=0.30\textwidth]{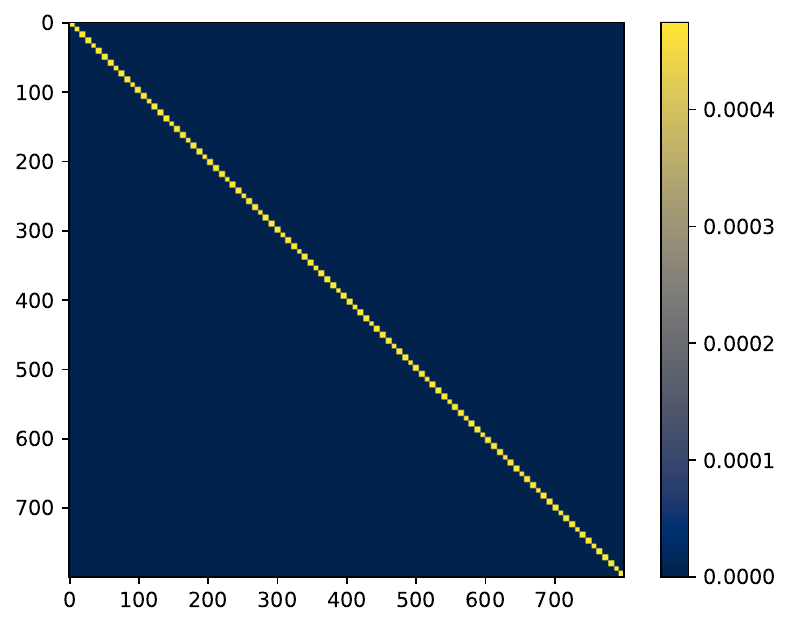}}
    \subfigure[\small $C = 1000$]{\includegraphics[width=0.30\textwidth]{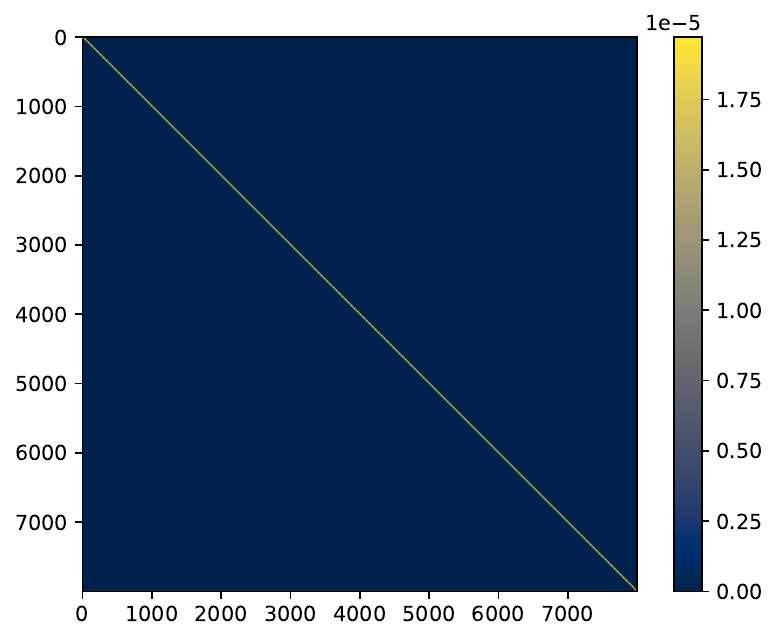}}
    \vspace{-0.3cm}
    \caption{\small   The Hessian   in {\bf Case 3: 1-hidden-layer network with MSE loss}. The network has 8 hidden neurons.  {\bf (a, b, c):} The hidden-layer Hessian  $H_{ww}$. {\bf (e, f, g):} The output-layer Hessian $H_{vv}$.  We observe that the block-diagonal Hessian structure in $H_{ww}$ becomes clearer as $C$ increases. $H_{vv}$ is always strictly block-diagonal, as expected by  Eq. \eqref{eq_nn_mse_Hessian_v}.  }
  \label{fig:hessian_nn_mse_different_c}
\end{figure}

\begin{figure}[h!]
    \centering
    \subfigure[\small $C = 10$]{\includegraphics[width=0.30\textwidth]{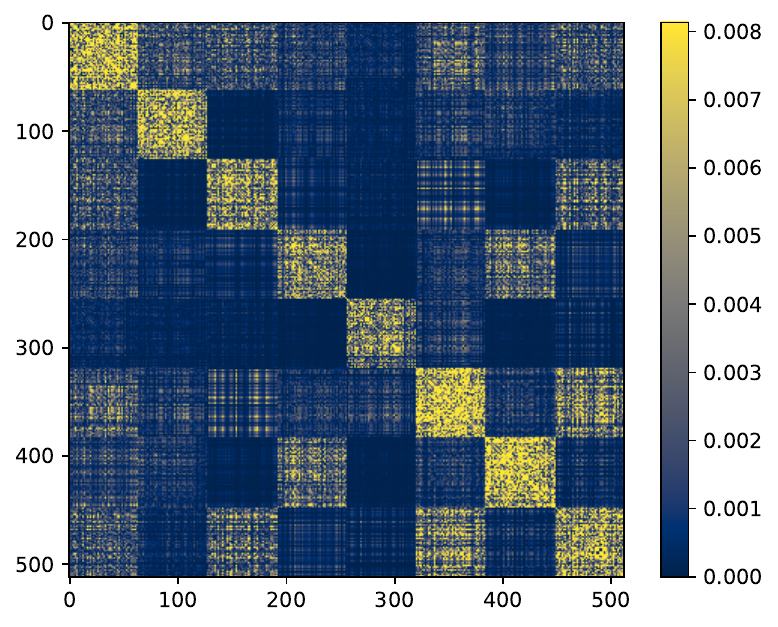}}
    \subfigure[\small $C = 100$]{\includegraphics[width=0.30\textwidth]{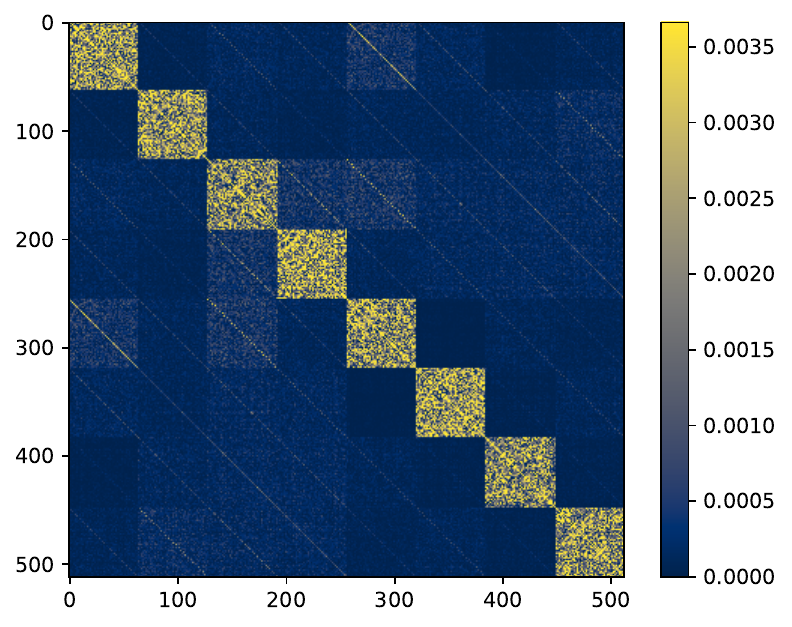}}
    \subfigure[\small $C = 1000$]{\includegraphics[width=0.30\textwidth]{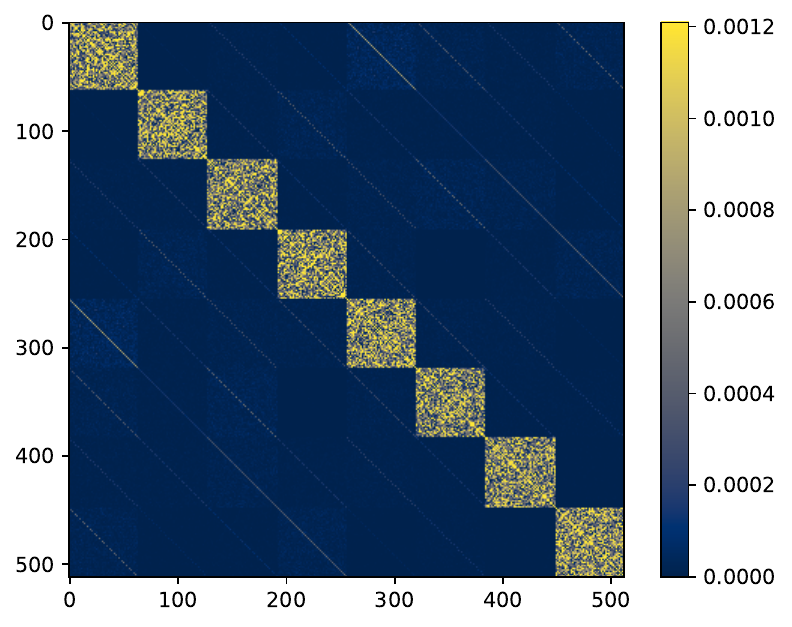}}
        \subfigure[\small $C = 10$]{\includegraphics[width=0.30\textwidth]{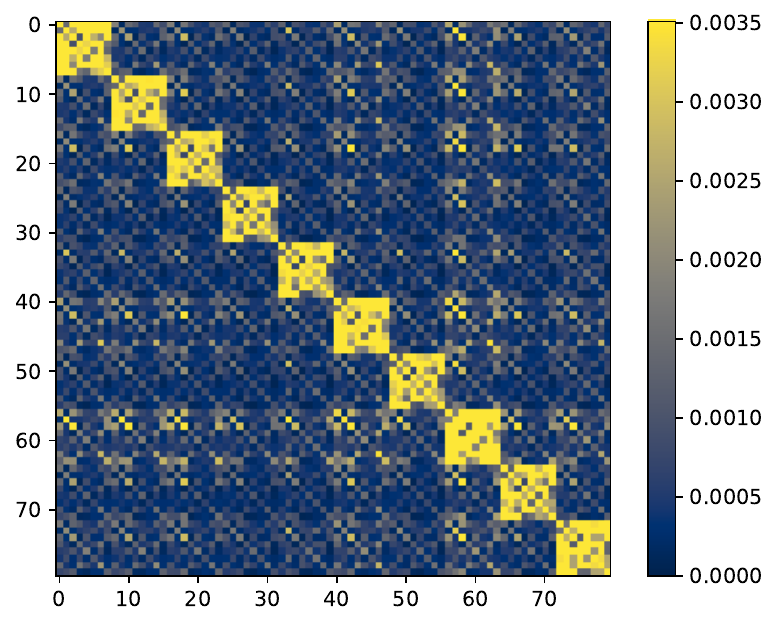}}
    \subfigure[\small $C = 100$]{\includegraphics[width=0.30\textwidth]{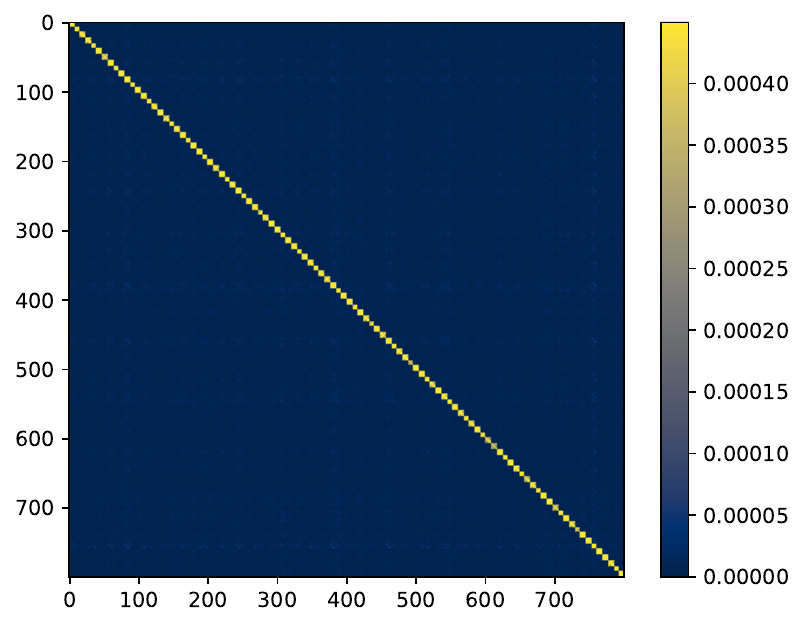}}
    \subfigure[\small $C = 1000$]{\includegraphics[width=0.30\textwidth]{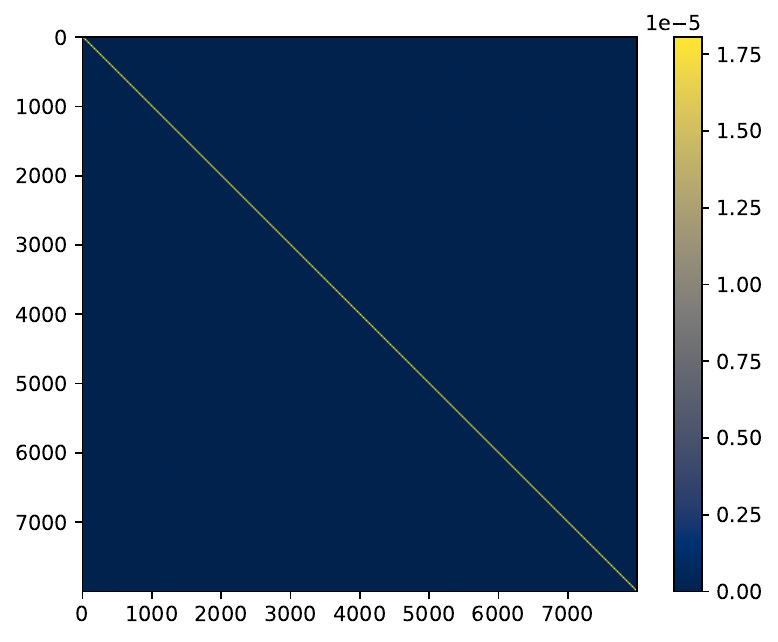}}
    \caption{\small  The Hessian   in {\bf Case 4: 1-hidden-layer network with CE loss}. The network has 8 hidden neurons. {\bf (a, b, c):} The hidden-layer Hessian $H_{ww}$. {\bf (e, f, g):} The output-layer Hessian $H_{vv}$. For both $H_{ww}$ and $H_{vv}$, we observe that the block-diagonal Hessian structure becomes clearer as $C$ increases. }
\label{fig:hessian_nn_ce_different_c}
\vspace{-0.3cm}
\end{figure}

\paragraph{On the Frobenius Norm of Hessian Blocks for Case 2.} We now investigate the following quantities, which appeared in Theorem \ref{thm_linear_model}:
 $${\small H_{11}^{\operatorname{CE}}:= \frac{C^2}{d}\bigg\|\frac{\partial^2\ell_{\text{CE}}(V)}{\partial v_1  \partial v_1^\top}\bigg\|_{\operatorname{F}}^2, \quad H_{12}^{\operatorname{CE}} := \frac{C^4}{d}\bigg\|\frac{\partial^2\ell_{\text{CE}}(V)}{\partial v_1  \partial v_2^\top}\bigg\|_{\operatorname{F}}^2, \quad
r := \bigg\|\frac{\partial^2\ell_{\text{CE}}(V)}{\partial v_1  \partial v_2^\top}\bigg\|_{\operatorname{F}}^2 \bigg/ \bigg\|\frac{\partial^2\ell_{\text{CE}}(V)}{\partial v_1  \partial v_1^\top}\bigg\|_{\operatorname{F}}^2.
}
$$
For each $C$, we simulate 1000  \zhaoruirevise{$H^{\text{CE}}_{11}$} and  $H^{\text{CE}}_{12}$ with LeCun initialization (Assumption \ref{assum_2}), and track their changes with $C$.  The results are shown in Figure \ref{fig:experiment_case2}. We make the following  observations.  These observations match our theoretical prediction.

\begin{figure}[t]
    \centering
    \subfigure[\small $C$ v.s. $H_{11}^{\operatorname{CE}}$]{\includegraphics[width=0.30\textwidth]{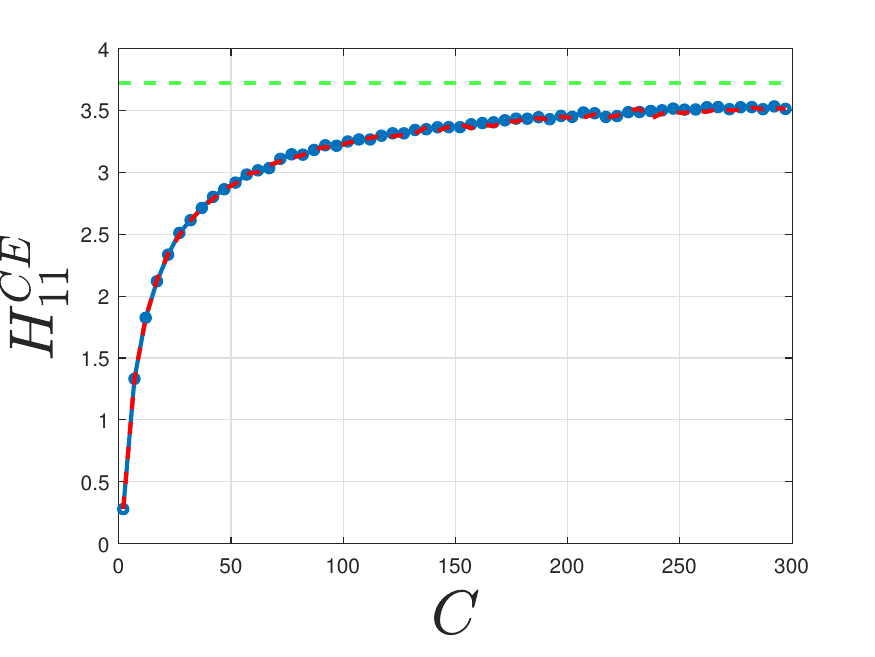}}
    \subfigure[\small $C$ v.s. $H_{12}^{\operatorname{CE}}$]{\includegraphics[width=0.30\textwidth]{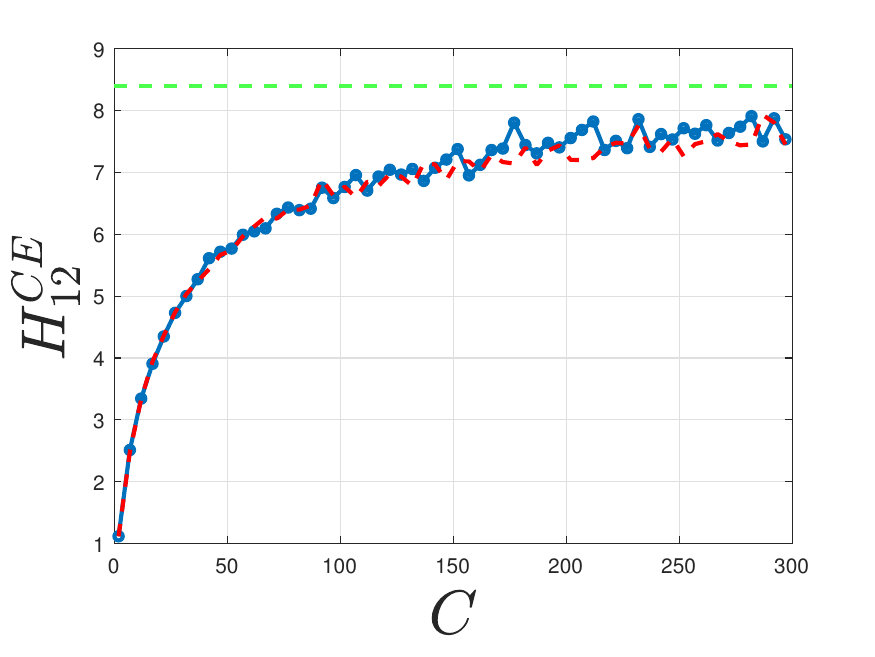}}
    \subfigure[\small $C$ v.s. $r$]{\includegraphics[width=0.30\textwidth]{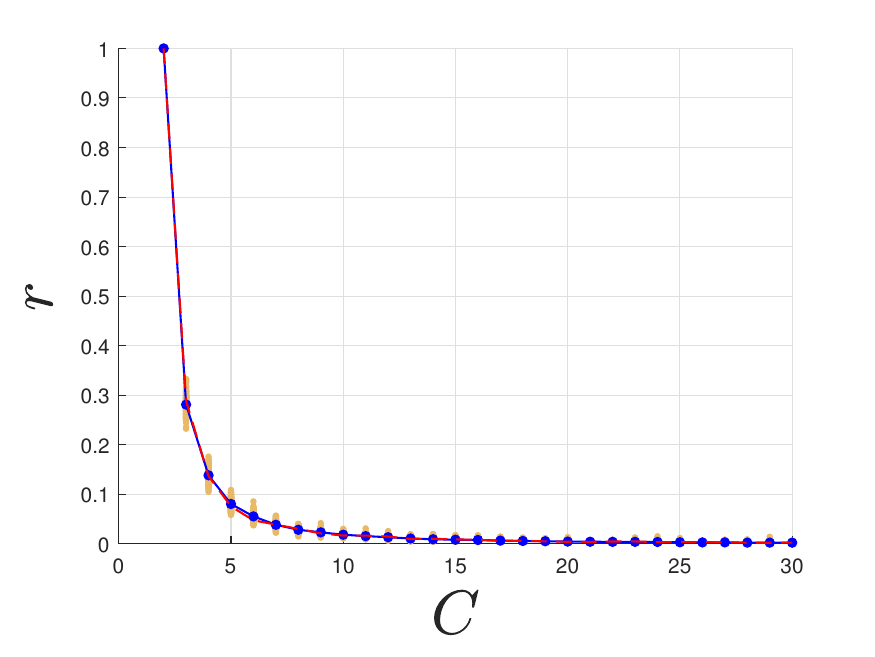}}
    \vspace{-0.2cm}
    \caption{\small  {\bf(a, b, c):} The evolution of $H_{11}^{\operatorname{CE}}$,  $H_{12}^{\operatorname{CE}}$, and $r$ as $C$ increases. For each $C$, the realizations of \zhaoruirevise{$H_{11}^{CE}$} and \zhaoruirevise{$H_{12}^{CE}$} concentrate around the red curves, which are their theoretical \zhaoruirevise{means} in \eqref{eq1thm3} and \eqref{eq2thm3} in Proposition \ref{thm3appendix} (shown later in Section \ref{appendix_thm1}). As $C\rightarrow \infty$,  $H_{11}^{\operatorname{CE}}$ and $H_{12}^{\operatorname{CE}}$  approach the green lines, which are their theoretical limits in Theorem \ref{thm_linear_model}. Further, $r$ vanishes to 0 as $C$ increases, and the decay rate matches Theorem \ref{thm_linear_model}. This means that off-diagonal blocks become relatively negligible as $C$ increases.  
    }
  \label{fig:experiment_case2}
\vspace{-0.3cm}
\end{figure}

\begin{itemize}
[topsep=1pt,parsep=1pt,partopsep=1pt, leftmargin=*]
    \item {\bf First}, for each $C$, the realizations of $H^{\text{CE}}_{11}$ and $H^{\text{CE}}_{12}$ concentrate around the red curves, which are their theoretical \zhaoruirevise{means} in \eqref{eq1thm3} and \eqref{eq2thm3} in Proposition \ref{thm3appendix} (shown later in Section \ref{appendix_thm1}). 
    \item {\bf Second, } as $C\rightarrow \infty$, we find that the  $H_{11}$ and $ H_{12}$  approach the green lines, which are their theoretical limits in Theorem \ref{thm_linear_model}. These results justify the results in Theorem \ref{thm_linear_model}.
    \item  {\bf Third}, as $C\rightarrow \infty$, we have $r \rightarrow 0
 $, and the decay rate matches our theoretical prediction. This means that off-diagonal blocks become relatively negligible as $C$ increases. 
\end{itemize}

\begin{figure}[h]
    \centering
    \subfigure[\small  $C$ v.s. $\tilde{H}_{11}^{\operatorname{CE}}$]{\includegraphics[width=0.30\textwidth]{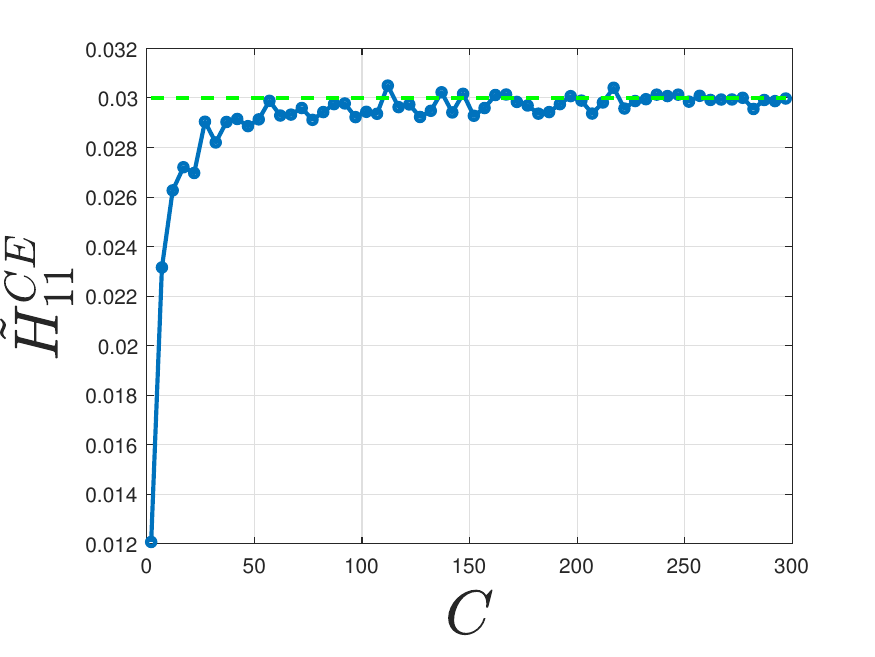}}
    \subfigure[\small  $C$ v.s. $\tilde{H}_{12}^{\operatorname{CE}}$]{\includegraphics[width=0.30\textwidth]{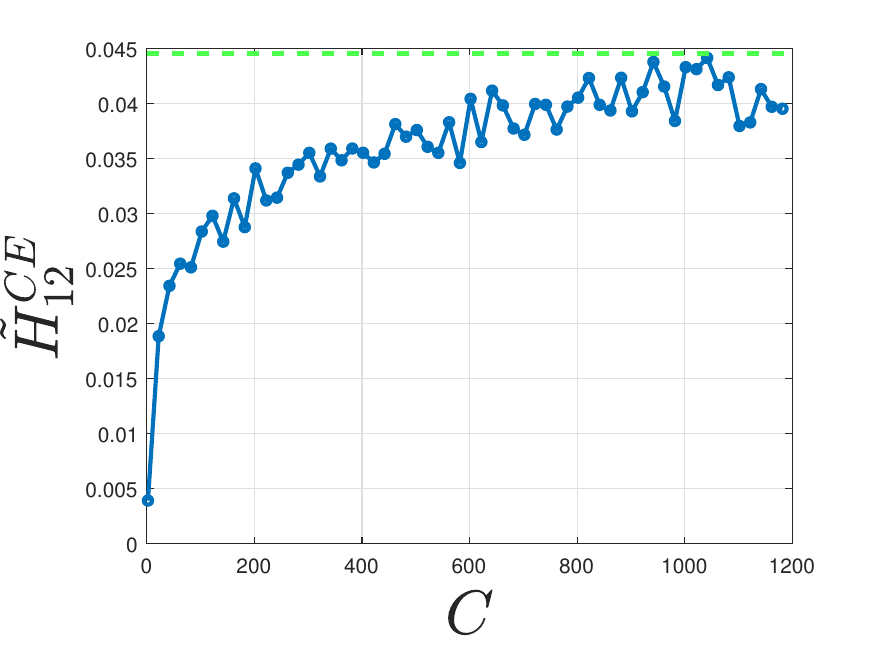}}
    \subfigure[\small  $C$ v.s. $\tilde{r}$]{\includegraphics[width=0.30\textwidth]{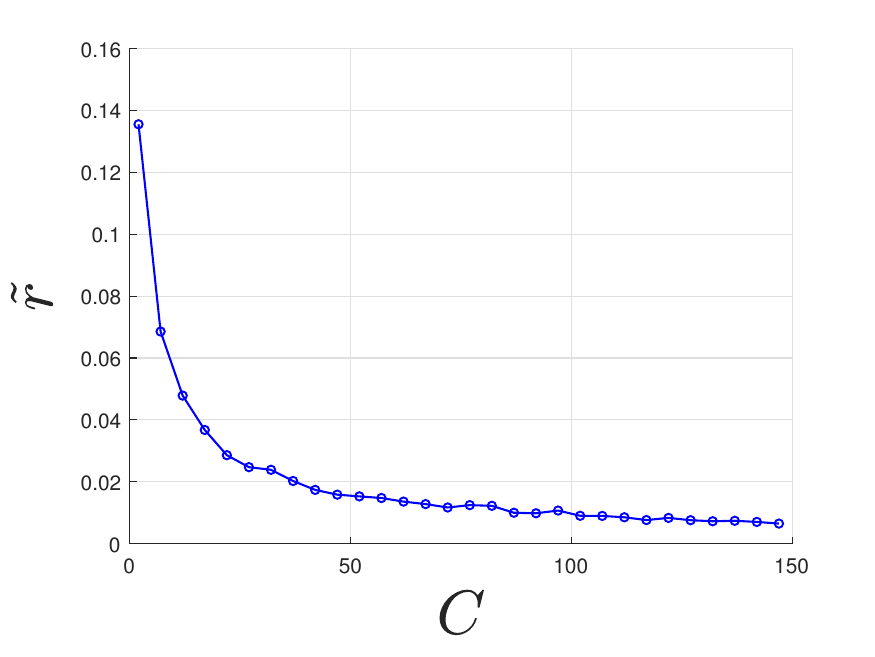}}
    \caption{\small  {\bf(a,b):} The evolution of $\tilde{H}_{11}^{\operatorname{CE}}$ and $\tilde{H}_{12}^{\operatorname{CE}}$ as $C$ increases.  We find that the  $\tilde{H}_{11}^{\operatorname{CE}}$ and $\tilde{H}_{12}^{\operatorname{CE}}$ approach the green lines, which are their theoretical limits in Theorem \ref{thm_nn}. {\bf (c):} $\tilde{r}$ decays to 0 with the same rate as we predicted in Theorem \ref{thm_nn}. }
  \label{fig:f_norm_nn}
\vspace{-0.3cm}
\end{figure}

\paragraph{On the Frobenius Norm of Hessian Blocks for Case 4.}  We now consider 1-hidden-layer networks with CE loss. We introduce the quantities as follows.

$$\tilde{H}_{11}^{\operatorname{CE}}:=\frac{1}{d}\bigg\|\frac{\partial^2\ell_{\text{CE}}(W,V)}{\partial w_1  \partial w_1^\top}\bigg\|_{\operatorname{F}}^2, \quad \tilde{H}_{12}^{\operatorname{CE}} = \frac{C}{d}\bigg\|\frac{\partial^2\ell_{\text{CE}}(W,V)}{\partial w_1  \partial w_2^\top}\bigg\|_{\operatorname{F}}^2, \quad \tilde{r} = \frac{\bigg\|\frac{\partial^2\ell_{\text{CE}}(W,V)}{\partial w_1  \partial w_2^\top}\bigg\|_{\operatorname{F}}^2}{\bigg\|\frac{\partial^2\ell_{\text{CE}}(W,V)}{\partial w_2  \partial w_2^\top}\bigg\|_{\operatorname{F}}^2} $$

 The results are shown in Figure \ref{fig:f_norm_nn}. Similarly as in Figure \ref{fig:experiment_case2},  we find that the  $\tilde{H}_{11}^{\operatorname{CE}}$ and $\tilde{H}_{12}^{\operatorname{CE}}$ approach the green lines, which are their theoretical limits in Theorem \ref{thm_nn}.  Further, $\tilde{r}$ decays to 0 with the same rate as we predicted in Theorem \ref{thm_nn}. These results support the results in Theorem \ref{thm_nn}.

\section{Conclusions}

In this work, we reveal two forces that
shape the near-block-diagonal Hessian structure of NNs: a “static force” rooted in the architecture design, and a “dynamic force” arisen
from training. We then provide a rigorous theoretical analysis of “static force” of linear and 1-hidden-layer NNs at random initialization. It is intriguing to extend our study beyond initialization and simple models. We provide more discussions of future directions in Appendix \ref{appendix_discussion}.

\clearpage
\section*{Acknowledgement}
Z. Dong would like to thank Prof. Zhenyu Liao, Prof. Shurong Zheng and Prof. Jianfeng Yao for organizing a reading group on RMT and Machine Learning in 2023-2024, where Pastur's technique is introduced and discussed. Y. Zhang would like to thank  Weronika Ormaniec, Sidak Pal Singh, Jeremy Cohen, Prof. Yinyu Ye, and Prof. Madeleine Udell for the valuable discussion.  J. Yao’s research is partially supported by the NSFC grant RFIS 12350710179. R. Sun’s research is partially supported by NSFC (No. 12326608); Guangdong Provincial Key Laboratory of Mathematical Foundations for Artificial Intelligence (2023B1212010001).

\appendix

\section*{Table of Contents for the Appendix}

\startcontents[sections]
\printcontents[sections]{l}{1}{\setcounter{tocdepth}{2}}

\section{More Discussions On Our Theory}
\label{appendix_discussion}

For completeness, we provide discuss more discussions on our theory, including some clarifications and future directions.

\begin{itemize}
[topsep=1pt,parsep=1pt,partopsep=1pt, leftmargin=*]
\item {\bf First,} our theory requires $N$ and $d$ proportionally grows to infinity, which is also known as  ``the proportional asymptotic regime''.  We believe this regime is meaningful. First, the proportional asymptotic regime in standard in random matrix theory (e.g., see \citep{wigner1958distribution,tao2012topics}). Second, in the proportional asymptotic regime, we obtained new insights into the Hessian structure (e.g., the effect of large $C$) and our insights matched a wide range of finite-dimensional experiments.  We will try to extend our analysis to other regimes such as non-asymptotic or over-parameterized regime in the future. 
\item {\bf Second,} our theory focuses on random initialization, and it does not cover the whole training process.   Interestingly, we numerically find that the block-diagonal Hessian structure remains throughout the training process (see  Figure \ref{fig:closer_look_mse} and \ref{fig:closer_look_ce}). This suggests that block-diagonal structure continuously influences the behavior of optimizers, {\it not just at initialization}.  It is possible to extend our theory to the whole training process, but it requires substantially new mathematical tools and we leave as a future direction.

\item {\bf Third,} our theory focuses on linear and 1-hidden-layer networks, and it currently does not cover deeper models. We believe our theory on linear and 1-hidden-layer networks is meaningful since it already provides new insights, e.g., the effect of large $C$.   It is possible to extend the results to deeper models by recursively applying our decoupling methods, but substantial effort is needed. For deeper models, we conjecture the block-diagonal structure will be primarily driven by the number of output neurons in each layer (a.k.a., the ``fan-out dimension''). It is also intriguing to explore other potential factors that will reshape the Hessian structure of deep models. We leave it as a future direction. 

\item {\bf Forth,} our theory focuses on block-wise Frobenius norm instead of block-wise spectrum. We focus on on Frobenius norm is it is more relevant to our current goal:  justifying the Hessian structure. One future direction is to theoretically characterize the block-wise spectrum and provide guidance for optimizer design. \citet{zhang2024transformers,zhang2024adam,wang2025sharpness} did some initial attempts in this direction, but these works focused on numerical exploration and did not establish rigorous theory on characterizing block-wise spectrum. Based on our theory so far, it is possible to theoretically analyze the block-wise spectrum by more fine-grained analysis of the Steiltjes transform of the limit eigenvalue distribution, which we leave as a future direction.

\end{itemize}

\clearpage
\section{More Numerical Results and Experimental Details}
\label{appendix_experiments}

Now we present the numerical results. We first re-state the experiments in \citep{collobert2004large} and then present some more of our numerical results.   All experimental details of our results are explained in Appendix \ref{appendix_experimental_details}.

\subsection{Results from \citep{collobert2004large}}
\label{appendix_collobert}

In Figure \ref{fig:collobert}, we restate Figure 7.3 and 7.5 from \citep{collobert2004large}. The authors reported block-diagonal Hessian structure for a 1-hidden-layer network with CE loss on a binary-classification dataset. Such structure disappeared when changing to MSE loss. 

We make two comments here. First, for MSE loss in Figure \ref{fig:collobert} (b), it is not surprising to see non-block-diagonal structure since our theory states that such structure arises when $C \rightarrow \infty$, while $C$ only equals to  2 here. Second, for CE loss in Figure \ref{fig:collobert} (a), the near-block-diagonal structure arises despite the small $C$. This is not covered in our theory  since we focus on large $C$. Nevertheless, it does not contradict our theory, either. 
We leave the exploration of binary classification with CE loss as a future direction.

\begin{figure}[h]
    \centering
    \subfigure[CE loss]{\includegraphics[width=0.40\textwidth]{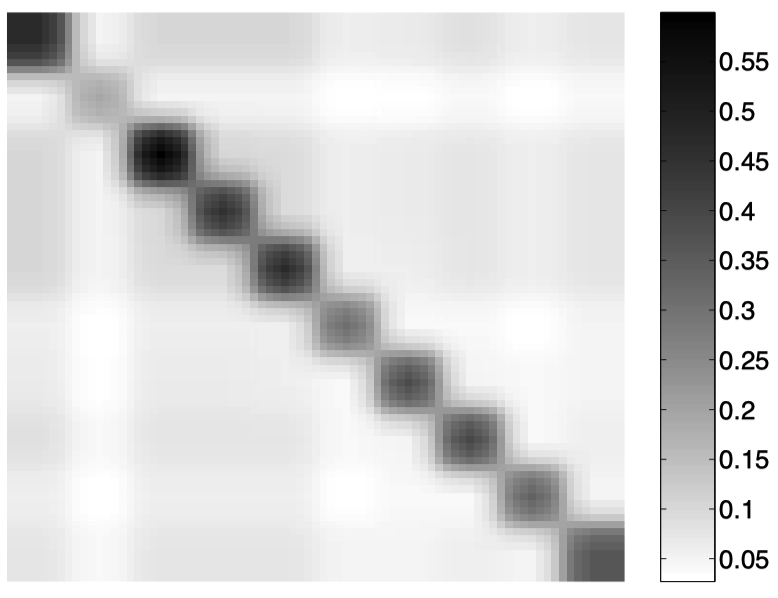}}
    \subfigure[MSE loss]{\includegraphics[width=0.40\textwidth]{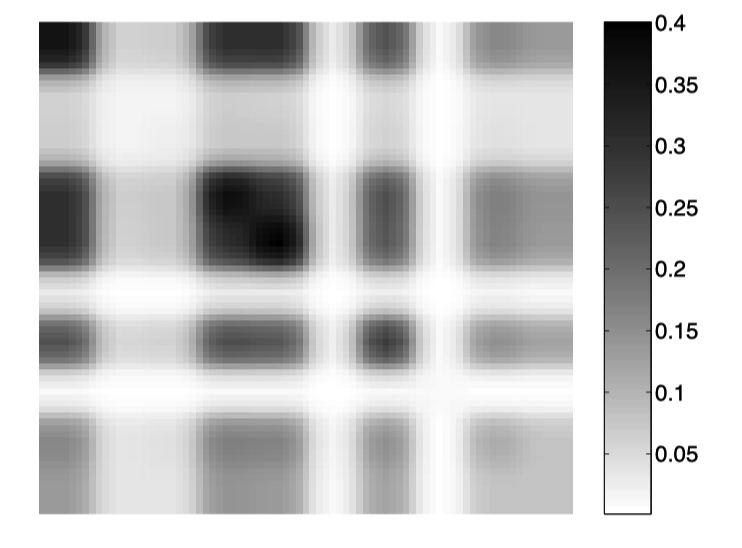}}
    \caption{\small  {\bf(a,b):}  Figure 7.3 and 7.5 from \citep{collobert2004large}. . The Hessian matrices of a 1-hidden-layer network with 10 hidden neurons on the Forest binary-classification dataset. For CE loss, the Hessian is computed after 1 iteration. For MSE loss, the Hessian is computed after 10 iterations. }
  \label{fig:collobert}
\vspace{-0.3cm}
\end{figure}

\subsection{More Ablation Studies}
\label{appendx_ablation}

We now conduct some more ablation studies on other Factors contributing to the Hessian structure. 
\citet{li2020gradient} argue that $K$-feature-clustered dataset will bring $K$-ranked Jacobian. We now investigate how this relates to the block-diagonal Hessian structure.

We construct the $K$-clustered dataset following the descriptions in \citep{li2020gradient}: {\it ``assume that the input $x_n \in \mathbb{R}^d$ come from $K$ clusters which are located on the unit Euclidean ball; assume our dataset consists of $C\leq K$ classes where each class can be composed of multiple clusters.''}. We attach the code for data generation below.

We present the results in Figure \ref{fig:ablation_mse} and
\ref{fig:ablation_ce}.
We report two findings here: 
(1) When \# classes $C = 2$ is small, the Hessian has no block-diagonal structure, regardless of \# clusters $K$. (2) When  $C = 500$ is large, the block-diagonal pattern appears regardless of  \# clusters $K$. This suggests that large $C$ plays a more critical role than $K$ in the Hessian structure.

\begin{figure}[t]
    \centering
    \subfigure[{\small $C = 2, K = 2$}]{\includegraphics[width=0.25\textwidth]{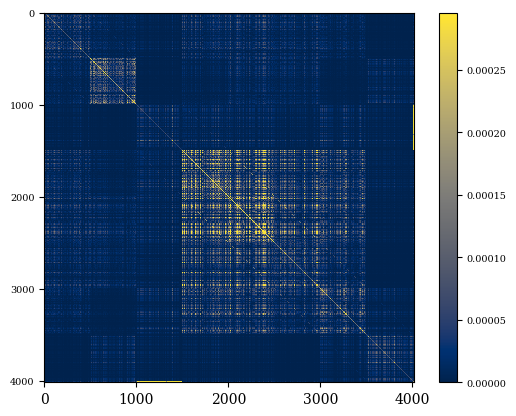}}
    \subfigure[{\small $C = 2, K = 100$}]{\includegraphics[width=0.24\textwidth]{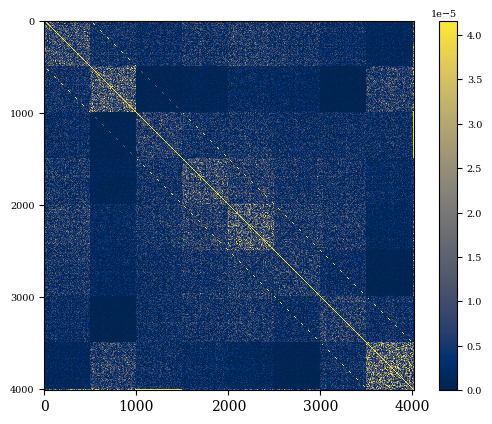}}
    \subfigure[{\small $C = 2, K = 250$}]{\includegraphics[width=0.24\textwidth]{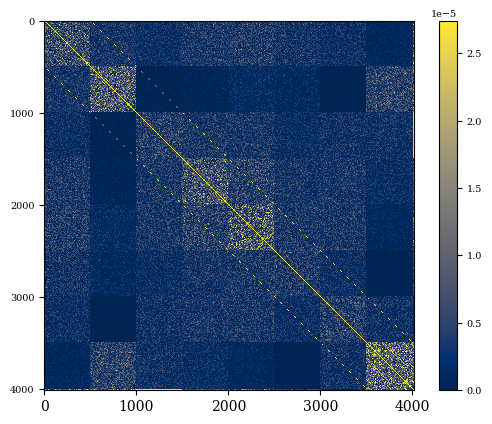}}
    \subfigure[{\small $C = 2, K = 500$}]{\includegraphics[width=0.24\textwidth]{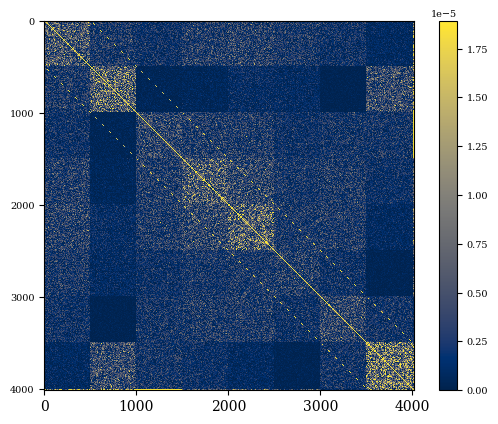}}
    \subfigure[{\small $C = 500, K = 500$}]{\includegraphics[width=0.24\textwidth]{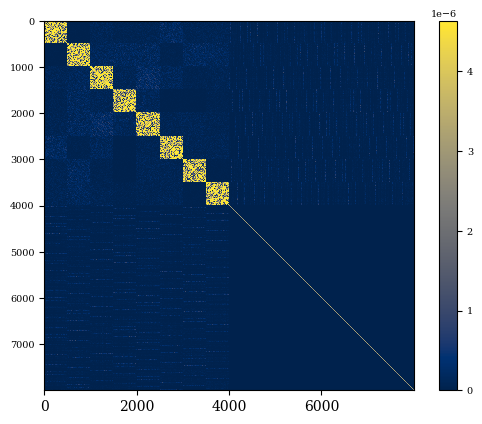}}
    \subfigure[{\small $C = 500, K = 750$}]{\includegraphics[width=0.24\textwidth]{images/0404-cluster-data-n_total-5000-nn-dim-500-width-8-cluster-500-class-500-adam-MSEvisiondegree50T0optimizer-adamlr-0.0001_fullhessian_T_0.png}}
    \subfigure[{\small $C = 500, K = 1000$}]{\includegraphics[width=0.24\textwidth]{images/0404-cluster-data-n_total-5000-nn-dim-500-width-8-cluster-500-class-500-adam-MSEvisiondegree50T0optimizer-adamlr-0.0001_fullhessian_T_0.png}}
    \subfigure[{\small $C = 500, K = 1250$}]{\includegraphics[width=0.24\textwidth]{images/0404-cluster-data-n_total-5000-nn-dim-500-width-8-cluster-500-class-500-adam-MSEvisiondegree50T0optimizer-adamlr-0.0001_fullhessian_T_0.png}}
    \caption{\small  Ablation studies for the effect of \# cluster $K$ on the Hessian structure. We construct $K$-clustered dataset following the setup in [Li et al. 19]: ``assume that the input $x_n \in \mathbb{R}^d$ come from $K$ clusters which are located on the unit Euclidean ball; assume our dataset consists of $C\leq K$ classes where each class can be composed of multiple clusters.'' We use MSE loss and random intialization. We find that: {\bf (a-d):} When $C = 2$ is small, the Hessian has no clear structure, regardless of \# clusters $K$. {\bf (e-h):} When  $C = 500$ is large, the block-diagonal pattern appears regardless of  \# clusters $K$. This suggests that large $C$ plays a more critical role than $K$ in the Hessian structure.  }
  \label{fig:ablation_mse}
\vspace{-0.3cm}
\end{figure}

\begin{figure}[h]
    \centering
    \subfigure[{\small $C = 2, K = 2$}]{\includegraphics[width=0.24\textwidth]{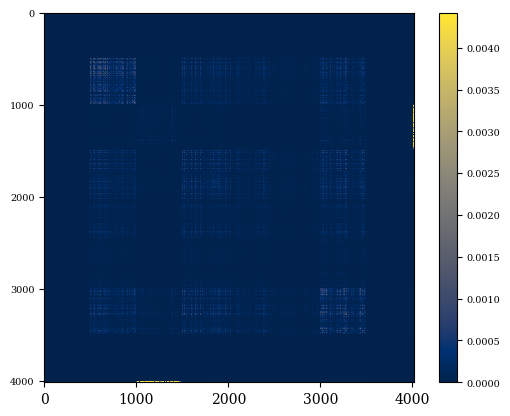}}
    \subfigure[{\small $C = 2, K = 100$}]{\includegraphics[width=0.24\textwidth]{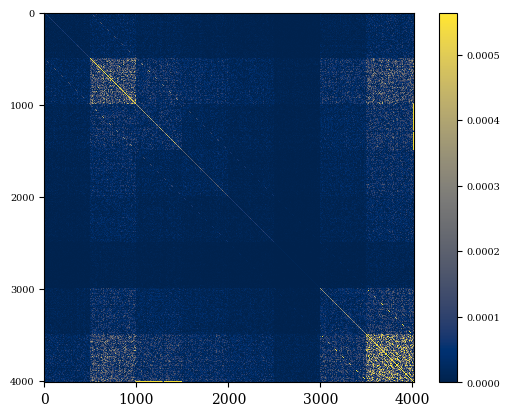}}
    \subfigure[{\small $C = 2, K = 250$}]{\includegraphics[width=0.24\textwidth]{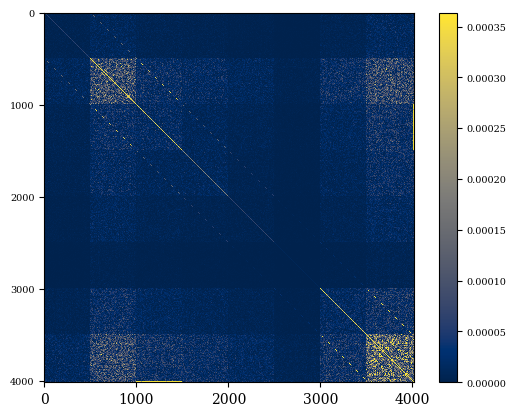}}
    \subfigure[{\small $C = 2, K = 500$}]{\includegraphics[width=0.24\textwidth]{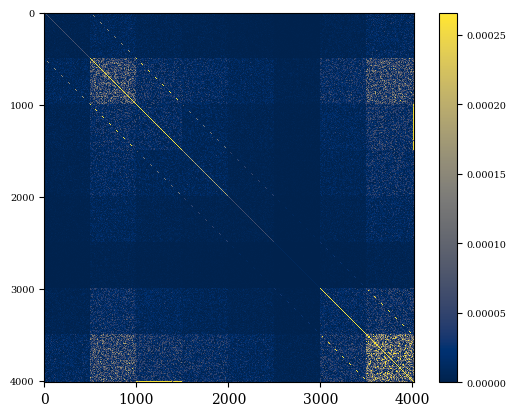}}
    \subfigure[{\small $C = 500, K = 500$}]{\includegraphics[width=0.24\textwidth]{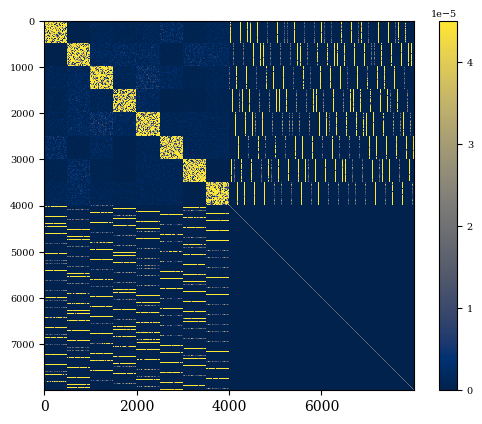}}
    \subfigure[{\small $C = 500, K = 750$}]{\includegraphics[width=0.24\textwidth]{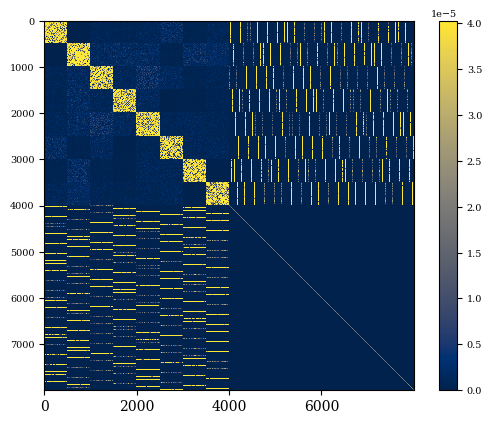}}
    \subfigure[{\small $C = 500, K = 1000$}]{\includegraphics[width=0.24\textwidth]{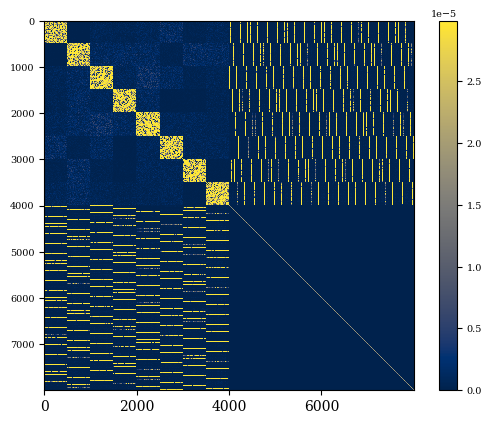}}
    \subfigure[{\small $C = 500, K = 1250$}]{\includegraphics[width=0.24\textwidth]{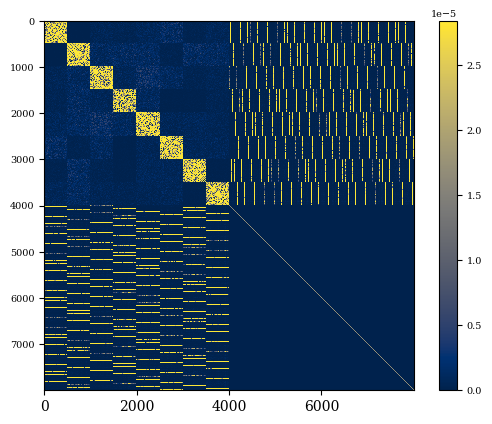}}
    \caption{\small Same ablation studies as in Figure \ref{fig:ablation_mse} except that we change to CE loss.  We find that: {\bf (a-d):} When $C = 2$ is small, the Hessian has no clear structure, regardless of \# clusters $K$. {\bf (e-h):} When  $C = 500$ is large, the block-diagonal pattern appears in $H_{ww}$ and $H_{vv}$ regardless of  \# clusters $K$. This suggests that large $C$ plays a more critical role than $K$ in the Hessian structure.  }
  \label{fig:ablation_ce}
\vspace{-0.2cm}
\end{figure}

\begin{python}[language=Python, caption=]

def generate_cluster_data(n_total, n_classes, n_clusters, input_dim):
    # Generate clustered synthetic data for specified dimensions
    # used for ablation study
    # n_total is the total number of samples
    # n_classes is the number of classes (smaller than n_clusters)
    # input_dim is the dimension of the data
    # raise error if n_cluster is larger than n_classes
    assert n_classes<= n_clusters, f"n_cluster = {n_classes} is not smaller than n_classes = {n_classes}"
    
    # n_samples_per_class is the number of samples per class
    X = []
    y = []
    n_cluster_per_class = n_clusters // n_classes
    n_samples_per_cluster = n_total // n_clusters
    cluster_idx = 0
    for class_idx in range(n_classes):

        for _ in range(n_cluster_per_class):
            cluster_idx += 1
            if input_dim == 2:
                center = np.array([np.cos(2 * np.pi * cluster_idx / n_clusters), np.sin(2 * np.pi * (cluster_idx) / n_clusters)]) * 5  # Class centers on a circle
            else:
                #extend the 2D case to higher dimension
                # Generate random points in higher dimensions and project onto hypersphere
                center = np.random.randn(input_dim)
                # Normalize to create a unit vector (point on unit hypersphere)
                center = center / np.linalg.norm(center)

            cluster_samples = np.random.randn(n_samples_per_cluster, input_dim) * 0.05 + center  # Add some noise
            X.append(cluster_samples)
            # assign label
            y.extend([class_idx]*n_samples_per_cluster)

    X = np.vstack(X)  # Combine all class samples
    y = np.array(y)    # Convert labels to a NumPy array
    return X, y

\end{python}

\subsection{Experimental Details}
\label{appendix_experimental_details}

Now we present the experimental details. 
All experiments are conducted on one NVIDIA V100 GPU.

\paragraph{Implementation details  for Figure \ref{fig:hessian}.} We calculate Hessian on a randomly selected 128 images from CIFAR-100. We  calculate Hessian  via two backpropagation passes \citep{pearlmutter1994fast}, our code is modified based on open-source Hessian-vector-product implementation \footnote{\url{https://github.com/zyushun/hessian-spectrum}}.  We consider a 1-hidden-layer network with ReLU activation, 8 hidden neurons, and 100 output neurons at random initialization. For all Hessian matrices reported in the paper, we report the absolute value of each Hessian entry.

\begin{figure}[h]
    \centering
                \subfigure[Loss curve under MSE loss]{\includegraphics[width=0.45\textwidth]{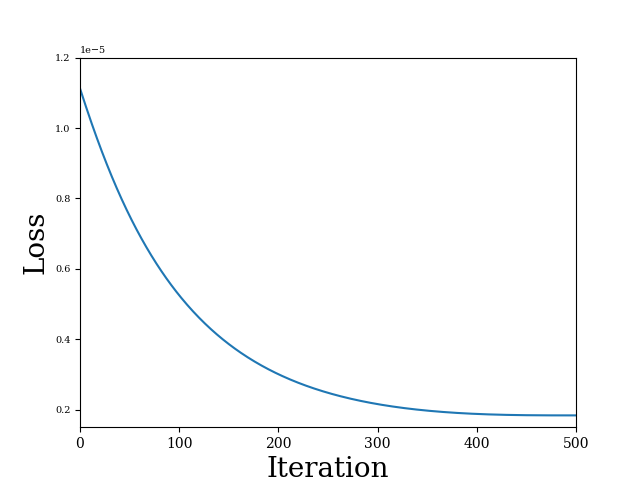}}
            \subfigure[Loss curve under CE loss]{\includegraphics[width=0.45\textwidth]{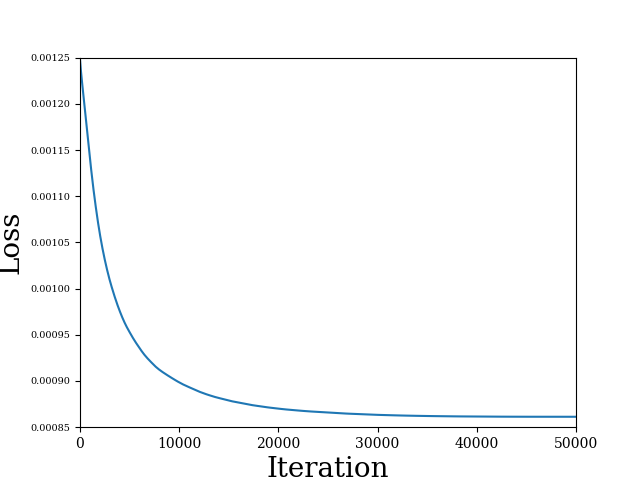}}
    \caption{\small The loss curve of the models trained in Section \ref{sec_closer_look}.  We train the models until convergence. }
  \label{fig:loss_curve_gaussian}
\vspace{-0.3cm}
\end{figure}

\paragraph{Implementation details for Section \ref{sec_closer_look} .} We first introduce the implementation details for the synthetic dataset used in both Section \ref{sec_closer_look} and   Appendix \ref{appendix_experiments}. We build the dataset following Assumption \ref{assum_1} and we assign the label randomly.  We attach the code for data generation here. In Section  \ref{sec_closer_look}, we use  \texttt{input\_dim} = 500,  \texttt{n\_classes} = 500, \texttt{n\_samples\_per\_class} = 10.

\begin{python}[language=Python, caption=]
def generate_gaussian_data(n_samples_per_class, n_classes, input_dim):
    X = []
    y = []
    for i in range(n_classes):
        class_samples = np.random.randn(n_samples_per_class, input_dim) 
        X.append(class_samples)
        y.extend([i] * n_samples_per_class) # not used

    X = np.vstack(X) 
    y = np.array(y)   
    return X, y

\end{python}
Now we  describe the model configurations in Section \ref{sec_closer_look}. We use 1-hidden-layer networks with  8 hidden neurons and ReLU activation. All the models are trained using Adam with \texttt{lr} = 1e-4 with cosine annealing schedule with  \texttt{lr\_min} =  0. We train the models until convergence. The loss curves are reported in Figure \ref{fig:loss_curve_gaussian}.

\paragraph{Implementation details for Section \ref{sec_experiments} and Appendix \ref{appendix_experiments}.} For these experiments, we use the same setups as in Section \ref{sec_closer_look}. 
We use \texttt{input\_dim} = 64, \texttt{n\_samples\_per\_class} = 1, and  the total number of samples $N = C$.

\paragraph{Implementation details for Figure \ref{fig:experiment_case2} and Figure \ref{fig:f_norm_nn}.} For Figure \ref{fig:experiment_case2}, we consider $N = 1000, d= 1000$. For each $C$,   we randomly sample 1000 realization of $H_{11}$, $H_{12}$ and report their Frobenius norms.  We use $10^6$ repetitions in Monte-Carlo integrals.
For Figure \ref{fig:f_norm_nn}, we use $N=300,\ d=300,$ and 200 repetitions for each $C$.

\clearpage

\bibliographystyle{abbrvnat}
\bibliography{reference.bib}

\end{document}